\setlist[enumerate,1]{label=(\arabic*).}   % 1st-level enumerate
\crefname{hypothesis}{Hypothesis}{Hypotheses}
\crefname{fact}{Fact}{Facts}
\newcommand{\calP}{\mathcal{P}}
\newcommand{\E}{\mathcal{E}}
\newcommand{\W}{\mathcal{W}}
\newcommand{\bbD}{\mathbb{D}}
\newcommand{\bbX}{\mathbb{X}}
\newcommand{\bbA}{\mathbb{A}}
\newcommand{\rmd}{\mathrm{d}}
\newcommand{\bbR}{\mathbb{R}}
\newcommand{\JKO}{\operatorname{JKO}}
\newcommand{\bfT}{\bm{T}}
\newcommand{\bfV}{\bm{V}}
\newcommand{\bfI}{\bm{I}}
\newcommand{\bfNN}{\bm{NN}}
\newcommand{\scrT}{\mathcal{T}}
\title{Learn to Evolve: Self-supervised Neural JKO Operator for Wasserstein Gradient Flow}
\author{
  Xue Feng\thanks{Department of Mathematics, University of California, Los Angeles, CA 
  (\email{xffeng@ucla.edu}, \email{deanna@math.ucla.edu}).}
  \and
  Li Wang\thanks{School of Mathematics, University of Minnesota, Minneapolis, MN 
  (\email{liwang@umn.edu}).}
  \and
  Deanna Needell\footnotemark[1]
  \and
  Rongjie Lai\thanks{Department of Mathematics, Purdue University, West Lafayette, IN 
  (\email{lairj@purdue.edu}).}
}
\tikzset{
  nodeStyle/.style={
    ellipse, draw=pink!50!black, fill=pink,
    minimum size=1.2cm, font=\small,
    align=center 
  },
  op2/.style={
    rectangle, draw=yellow!50!black, fill=yellow!10,
    minimum width=1.6cm, minimum height=1cm,
    font=\small, rounded corners=3pt
  },
  op/.style={
    rectangle, draw=cyan!50!black, fill=cyan!10,
    minimum width=1.2cm, minimum height=1cm,
    font=\small, rounded corners=3pt
  },
  arrow/.style={-{Latex[length=2mm]}, thick, color=gray!60!black},
  label/.style={font=\scriptsize}
}
\begin{document}

\maketitle

% REQUIRED
\begin{abstract}
The Jordan–Kinderlehrer–Otto (JKO) scheme provides a stable variational framework for computing Wasserstein gradient flows, but its practical use is often limited by the high computational cost of repeatedly solving the JKO subproblems. We propose a self-supervised approach for learning a JKO solution operator without requiring numerical solutions of any JKO trajectories. The learned operator maps an input density directly to the minimizer of the corresponding JKO subproblem, and can be iteratively applied to efficiently generate the gradient-flow evolution.
A key challenge is that only a number of initial densities are typically available for training. To address this, we introduce a Learn-to-Evolve algorithm that jointly learns the JKO operator and its induced trajectories by alternating between trajectory generation and operator updates. As training progresses, the generated data increasingly approximates true JKO trajectories. Meanwhile, this Learn-to-Evolve strategy serves as a natural form of data augmentation, significantly enhancing the generalization ability of the learned operator. Numerical experiments demonstrate the accuracy, stability, and robustness of the proposed method across various choices of energies and initial conditions.

\end{abstract}

% REQUIRED
\begin{keywords}
example, \LaTeX
\end{keywords}

% REQUIRED
\begin{MSCcodes}
68Q25, 68R10, 68U05
\end{MSCcodes}
\section{Introduction}

In recent years, gradient flows in the space of probability measures have found rich applications in areas such as machine learning~\cite{richemond2017wasserstein,zhang2018policy} and sampling~\cite{hertrich2024importance, mokrov2021large}. The JKO scheme, a proximal iteration under Wasserstein distance,  plays a central role in numerically approximating Wasserstein gradient flows (WGF)~\cite{JordanKinderlehrerOtto1998,ambrosio2008gradient,Otto2001PorousMedium,fan2021variational} in two aspects:
 it provides a theoretical framework for proving the existence and properties of solutions to these nonlinear PDEs, and  
it offers a discretized WGF solution from an optimization perspective.  
As an implicit Euler method, JKO offers stability, energy decay, and robustness to approximation noise.
While exact JKO steps yield linear convergence to the global minimizer of $\mathcal{E}$, the inexact JKO remains well-behaved and converges to a neighborhood of the minimizer rather than diverging~\cite{zhuconvergence, cheng2024convergence, salim2020wasserstein}. These properties make JKO highly attractive.

However, solving each JKO sub-problem is generally computationally expensive.  
Numerical optimization approaches such as the primal–dual method~\cite{carrillo2022primal} and its regularized version \cite{li2020fisher}, the back-and-forth method~\cite{jacobs2021back}, and the augmented Lagrangian methods~\cite{santambrogio2017euclidean} involve solving large-scale optimization problems on the grid, which will become expensive in high dimensions.   
These challenges motivate the development of efficient and reliable solvers for the JKO problem. Recent work on training NNs to learn the JKO trajectory includes~\cite{lee2024deep, xu2024normalizing, vidal2023taming, mokrov2021large, fan2021variational, jin2025parameterized}.  
A key limitation of these methods is that they train a new network for each individual JKO iterations,  and they typically follow a \emph{progressive} training
strategy: the $n$-th iteration is trained only after the previous $n-1$ iterations are fixed.  
This approach is computationally expensive, fails to fully exploit the strong similarity between subproblems, and limits the use of larger architectures due to the need to train multiple models.

A promising alternative is to learn the \emph{solution operator} directly.  
For example, \cite{huang2024unsupervised} employs a single transformer network to solve multiple mean-field game (MFG) problems by learning the corresponding MFG solutions for different pairs of initial and target distributions.  
After training, the network generalizes across tasks and effectively serves as a solution operator.
This operator-learning viewpoint motivates our approach.
Instead of training a new network for each problem instance, we aim to learn the JKO operator itself. Our goal is to train a single neural network that approximates the JKO operator for a given energy functional (or a parameterized family). Once trained, it can be applied iteratively from an initial density to efficiently approximate the entire JKO trajectory.

Unlike the MFG setting, where diverse pairs of initial and target distributions are available,
the main challenge in training a JKO operator is the lack of training data.
We usually only have access to one or a small family of initial distributions.  
This is insufficient to guarantee the performance of the learned JKO operator  when applied iteratively,  
since the subsequent densities along the JKO trajectories are unseen during training and may differ significantly from the initial distributions.  
Moreover, they are generally difficult to approximate using random densities.  
To address this, we introduce the Learn-to-evolve algorithm, which learns both the JKO operator and the JKO trajectories from the given initials simultaneously.

In this framework, the evolving NN serves as a dynamic data generator: we repeatedly apply the current network to produce trajectory data starting from the prescribed initial densities. Training proceeds in a self-reinforcing loop, alternating between generating new trajectory data with the current network and updating the network parameters on the refreshed dataset. Under suitable conditions, the evolving data are guaranteed to converge to the true JKO trajectories, which drives the network toward the exact JKO operator.
Moreover, this process acts as a form of dynamic data augmentation. Although the evolving training dataset eventually converges to the true JKO trajectories, the intermediate iterates explore a much broader region of the state space.
Training on this enlarged union of intermediate datasets enables the learned operator to generalize well beyond both the initial density family and the true JKO trajectories themselves.
Our experiments demonstrate both the accuracy and the strong generalization ability of the approach across diverse examples, especially for the interaction kernel energy functional.

Another key insight from this work is that proximal formulations are particularly well-suited for learning iterative solution operators. Neural networks inherently approximate operators inexactly because of optimization and generalization errors. As a result, each iterative application of the learned operator produces a perturbed update. Proximal schemes are inherently robust to such perturbations, whereas explicit forward methods tend to propagate and amplify errors and may diverge rapidly.
Importantly, the proposed Learn-to-Evolve framework is not specific to the JKO scheme. Its self-generated data mechanism and iterative training paradigm extend naturally to a wide class of proximal operators and implicit iterative mappings. This suggests that the methodology can be applied far beyond Wasserstein gradient flows.

Overall, this work bridges Wasserstein gradient flows, JKO variational discretization, and operator learning, offering a scalable and data-efficient paradigm for high-dimensional problems. We summarized our contributions as follows: 
%We propose JKO operator , a neural operator designed to approximate the JKO mapping, and introduce the Learn-to-Evolve algorithm, an alternating framework that couples operator learning with trajectory generation:
\begin{enumerate}
    \item Operator Learning for JKO. We train a single neural network to approximate the JKO operator for a parameterized family of energy functional. Once trained, this JKO operator can be applied iteratively to generate full Wasserstein gradient flow trajectories.
   \item Learn-to-Evolve Framework. To mitigate data scarcity, we introduce a dynamic training process where, in each iteration, the current network generates new densities along evolving trajectories, which are then used to update the network. This self-evolving data augmentation enables JKO operator to generalize beyond the prescribed training family.
   \item Theoretical and Empirical Validation. We provide convergence guarantees under suitable conditions, showing that the evolving dataset converges to the true JKO trajectory. Our numerical experiments demonstrate that the JKO operator achieves both high accuracy and strong generalization across diverse examples, with potential extensions to other iterative variational problems.
\end{enumerate}

The rest of the paper is organized as follows. Section 2 reviews related work. Section 3 introduces the Learn-to-Evolve algorithm and establishes its convergence analysis.  Implementation details are discussed in Section \ref{sec:Implementation}. Numerical experiments are presented in Section \ref{sec:numerical} to demonstrate the effectiveness of robustness of the proposed method, followed by conclusions in Section \ref{sec:conclusion}..

\section{Preliminaries}
\subsection{Notation}

We denote by $\mathcal{P}(\Omega)$ the set of probability measures on $\Omega \subset \mathbb{R}^d$ 
with finite second moment that are absolutely continuous with respect to the Lebesgue measure. 
In the following, we do not distinguish between a probability measure and its corresponding density when it is clear from context.
Given $\rho, \nu \in \mathcal{P}(\Omega)$, a Borel map $\bfT: \Omega \to \Omega$ is said to transport $\rho$ onto $\nu$ if
\(
\nu(B) = \rho\!\left(\bfT^{-1}(B)\right) \)
for all Borel sets \( B \subseteq \mathbb{R}^d.
\)
We call $\nu$ the \emph{push-forward} measure  of $\rho$ under $\bfT$ and write $\nu = \bfT_{\sharp} \rho$. 
We denote by $\bfI$ the identity map, i.e., $\bfI(x) = x, \, \forall \,x$.
The space of all Borel maps from $\Omega$ to $\Omega$ is denoted by $\mathrm{Map}(\Omega,\Omega)$.

The \emph{2-Wasserstein distance} between $\rho$ and $\nu$ is defined by
\begin{equation}
    \label{equ:wasserstein2}
    \W_2(\rho,\nu) := \left( \inf_{\bfT: \Omega \to \Omega} 
\left\{ \int |x - \bfT(x)|^2 \, d\rho(x) : \bfT_{\sharp}\rho = \nu \right\} \right)^{1/2}.
\end{equation}
When $\rho$ is absolutely continuous, the \emph{optimal transport map} that pushes $\rho$ toward $\nu$ is unique, and 
we denote it by $\bfT_{\rho}^{\nu}$, so that $(\bfT_{\rho}^{\nu})_{\sharp} \rho = \nu$.

Given a functional $\E : \mathcal{P}(\Omega) \to \mathbb{R}$, 
when $\E$ and $\rho \in \mathcal{P}(\Omega)$ are sufficiently smooth, the \emph{Wasserstein gradient} of $\E$ at $\rho$ is
\[
\nabla_\W \E(\rho) = - \nabla \cdot \left( \rho \nabla \frac{\delta \E}{\delta \rho}(\rho) \right),
\]
where $\delta \E / \delta \rho$ is the functional derivative of $\E$ 
(see Chapters~8 and~10 of \cite{ambrosio2008gradient}). 
The associated \emph{Wasserstein gradient flow} of $\E$ is given by the Cauchy problem
\begin{equation}
\frac{d}{dt}\rho(t) = - \nabla_\W \E(\rho(t)), 
\qquad \rho(0) = \rho^0,
\label{equ:continuousGradientFlow}
\end{equation}
which is well-defined as long as $\nabla_\W \E(\rho(t))$ exists along the flow.
If $\E$ is $\lambda$-convex along generalized geodesics with $\lambda > 0$, 
then it admits a unique minimizer $\rho^*$ and the flow satisfies the exponential convergence estimate
\(
\W_2\bigl(\rho(t), \rho^*\bigr) \;\leq\; e^{-\lambda t} \, \W_2(\rho^0, \rho^*).
\)
(See Section~11.2 of \cite{ambrosio2008gradient}.) 
Several important examples of functionals that are convex along generalized geodesics 
are discussed in Section~9.3 of \cite{ambrosio2008gradient}.

\subsection{JKO operator}

Given an energy functional $\E_\beta: \calP(\Omega) \to \bbR$, which is parameterized by $\beta \in \Lambda$ (or $\E$ if there is no parameter),
and a step size $\Delta t > 0$,
we define the JKO operator (equivalently, the Wasserstein proximal operator) as 
\begin{equation}
\JKO(\,\cdot\,\,;\,\beta): \calP(\Omega) \mapsto \calP(\Omega),
\quad
    \JKO(\rho;\beta) :=
    \arg \min_{\nu \in \calP(\Omega)} 
    \left\{
        \W_2^2(\nu, \rho)
        \;+\;
        2\,\Delta t \,\E_\beta(\nu)
    \right\}.
    \label{equ:jkoOperator}
\end{equation}
Throughout this paper,
we assume that
for all $\beta \in \Lambda$,
$\E_\beta$ is proper, coercive, lower semicontinuous, and $\lambda$-convex along generalized geodesics, with $\Delta t \lambda > -1$. Under these conditions, the minimization problem admits a unique solution and the JKO operator is well-defined \cite{ambrosio2008gradient,carlen2013contraction}.
%The corresponding transport map pushing $\rho$ to $\rho^*$ is then $\bfT_{\rho}^{\rho^*}$.
%The parameter $\Delta t$ corresponds to the time step size $\Delta t$ in the following sections; however, for notational convenience, we will simply write $\Delta t$ in this review section.

% The JKO scheme, or the more general proximal point algorithm in Wasserstein spaces, is defined by iteratively solving the minimization problem
% \begin{equation}
%     \rho^{t+1} \;=\; \JKO(\rho^t)
%     \;:=\;
%     \arg\min_{\rho \in \calP(\Omega)}
%     \Bigl[
%         \W_2^2\bigl(\rho, \rho^t\bigr)
%         \;+\;
%         2\,\Delta t\,\mathcal{E}(\rho)
%     \Bigr],
%     \label{equ:jkoOperator}}
% \end{equation}
% where $\mathcal{P}(\Omega)$ represents the space of probability measures on $\Omega$,
%  $\W_{2}(\rho, \rho^t)$ denotes the Wasserstein-2 distance between $\rho$ and $\rho^t$, and the minimizer of the right-hand side is unique under certain conditions.
% The mapping $\JKO(\cdot): \calP(\Omega) \mapsto \calP(\Omega)$ is referred to as the JKO operator for the given energy functional $\E$ with respect to step size $\Delta t$, which typically does not admit a closed-form expression.  

The aforementioned Wasserstein proximal operator can be used to generate a discrete-in-time gradient flow, known as 
the \emph{JKO scheme} or the proximal point algorithm in Wasserstein spaces.
It iteratively solves the minimization problems
in \Cref{equ:jkoOperator} from a  given $\rho^0$,
i.e., 
\(
    \rho^{n} =\; \JKO(\rho^{n-1}).
\)
The sequence $\{\rho^n\}$ is a backward Euler approximation of the continuous Wasserstein gradient flow \eqref{equ:continuousGradientFlow},
satisfying
\(
    \W_2\big(\rho^n, \rho(n\,\Delta t)\big) \;\leq\; \mathcal{O}(\Delta t);
\)
see Theorem 4.0.4 of \cite{ambrosio2008gradient}. 
Compared to forward methods, the backward Euler scheme is more stable, and the energy $\E(\rho^n)$ is guaranteed to decrease for any $\Delta t>0$. 
When the JKO operator is computed exactly, the sequence $\rho^n$ can be shown to converge to the global minimizer of $\mathcal{E}$ under appropriate assumptions; for inexact JKO schemes, one instead obtains convergence to a neighborhood of the global minimizer~\cite{zhuconvergence,cheng2024convergence,salim2020wasserstein}.

Let $\rho^{+}=\JKO(\rho)$.
If $\E$ and $\rho^{+}$ are sufficiently smooth such  that the Wasserstein-2 gradient $\nabla_\W \E(\rho^{+})$ is well-defined, then
$\rho^{+}$ is the unique minimizer of the objective function in \Cref{equ:jkoOperator} if any only if the optimal transport map pushing 
$\rho$ toward $\rho^+$ satisfies 
%\lw{Should it be pushing $\rho^+$ to $\rho$? If yes, the formula below on the LHS should be $\bfT_{\rho^+}^{\rho} $}
\begin{equation}
    \bfT_{\rho^{+}}^{\rho} \;=\; \bfI + \Delta t \nabla \frac{\delta \E}{\delta \rho}(\rho^{+}),
    \label{equ:OT}
\end{equation}
where $\delta \E / \delta \rho$ is the functional derivative of $\E$;
%$\rho^{+}$-almost everywhere \lw{not a sentence?}; 
see Lemma 2.2 of \cite{carlen2013contraction} and Theorem 2.23 of \cite{craig2016exponential}.
Note that 
\eqref{equ:OT} 
is an implicit formula of the optimal map, and is equivalent to
$[\bfI+\Delta t \nabla \frac{\delta \E}{\delta \rho}\left(\rho^{+}\right)]_{\#}\rho^{+} = \rho$.
%, or \(   \rho^{+}=\left(\bfI-\Delta t  \nabla \frac{\delta \E}{\delta \rho}\left(\rho^{+}\right) \circ \bfT_{\rho}^{\rho^{+}}\right)_{\#} \rho \) as $\bfT^{\rho}_{\rho^{+}} \circ \bfT_{\rho}^{\rho^{+}} =\bfI$.

\section{Learn to Evolve}
The original JKO subproblem in \eqref{equ:jkoOperator} is notoriously difficult to solve. Eulerian-based approaches, such as augmented Lagrangian methods \cite{santambrogio2017euclidean}, primal dual schemes \cite{carrillo2022primal}, and back-and-forth iterations \cite{jacobs2021back}, work on discretized densities and hence suffer from the curse of dimensionality: grid-based representations and PDE constraints scale exponentially with the ambient dimension, making these methods impractical beyond low dimensions. They also tend to become brittle in optimization for large time steps due to non-convexity and stability restrictions. An explicit kernel reformulation was recently proposed in \cite{li2023kernel}, which avoids inner optimization but still requires high-dimensional sampling and is largely limited to linear energies. Lagrangian particle methods with neural parameterizations \cite{lee2024deep} show promise in higher dimensions; however, they still require (re)training a transport map at every time step and for each new problem instance, which compounds computational cost and error.

In this work, we propose to train one single neural operator,
denoted as  $\scrT^*$,
to learn the mapping
\(
(\rho,\beta) \mapsto \JKO(\rho;\beta)\,
\)
such that  
the output of the $\scrT^*$ is the optimal transport map pushing $\rho$ to $\JKO(\rho;\beta)$, i.e.,
\begin{equation}
    \label{equ:neuroJKOtarget}
\scrT^*(\rho,\beta)_\sharp \rho = \JKO(\rho;\beta). 
\end{equation}
Here $\beta \in \Lambda$ encodes the parametric family of energy functionals.
%when $\bbD = \calP (\Omega) \times  \Lambda$,
%we have $\scrT^*(\rho,\beta)_\sharp \rho = \JKO(\rho;\beta) $, $\rho-$a.e..
Since forward passes of neural networks are computationally efficient and well-suited for modern CPU/GPU acceleration, the proposed neural operator can, once trained, be used to efficiently generate the JKO sequence. Starting from any initial density $\rho^0$ and parameter $\beta$, repeated application of
\[
\rho^{n+1} =  \scrT^*(\rho^{n}, \beta)_\sharp \rho^{n}, 
\quad n = 0,1,2,\cdots,
\]
produces the trajectory
\[
\rho^0 \xrightarrow{\scrT^*} \rho(\Delta t) \xrightarrow{\scrT^*} \rho(2\Delta t)\xrightarrow{\scrT^*} 
 \rho(3\Delta t)\xrightarrow{\scrT^*}\cdots.
\]

More specifically, we consider the following population risk minimization problem:
\begin{equation}
    \begin{aligned}
        \label{equ:optimalmap}
  \min_{\scrT: \calP (\Omega) \times  \Lambda \,\,\mapsto \mathrm{Map}(\Omega,\Omega)} 
    \mathcal{L}(\scrT)&:= \mathbb{E}_{\rho\sim \mu_\rho, \beta\sim \mu_\beta} \,\,\ell(\scrT;\rho, \beta), \\
\quad \text{with} \quad
\ell(\scrT; \rho, \beta)&:=
        \W_2^2\big(\scrT(\rho,\beta)_\sharp \rho, \rho\big)
        \;+\;
        2\,\Delta t \,\E_\beta\big(\scrT(\rho,\beta)_\sharp \rho\big),
    \end{aligned}
\end{equation}
where $\scrT$ represents a neural operator, and $\mu_\rho, \mu_\beta$ denotes the meta distribution of $\rho$ and $\beta$, respectively. 
It is easy to verify that the minimizer of $\mathcal{L}(\scrT)$ satisfies \eqref{equ:neuroJKOtarget} for $\rho$-almost every $\rho$ and $\beta$-almost every $\beta$.
However, parameterizing a single operator \(\scrT\) that works uniformly for \emph{all} \((\rho,\beta)\in \mathcal{P}(\Omega)\times\Lambda\) would exceed the capacity of any practical neural network as $\mathcal{P}$ is an infinite-dimensional space. 
Besides, achieving high accuracy for each $\rho$ requires dense sampling in $\mathcal{P}(\Omega)$, which is impossible.

Instead, since Wasserstein gradient flows are essentially initial value problems,
we consider approximating the solution operator on a \emph{low-dimensional task manifold}: the initial data \(\rho^0\) of interest lie on a manifold \(\mathbb{M}_0 \subset \mathcal{P}(\Omega)\). For discrete times \(t_n=n\Delta t\), define the \emph{$n$th JKO evolution manifold} as
\[
\mathbb{M}_{n}
\;:=\;
\Big\{
\rho^n \;\Big|\;
\rho^{i+1}=\JKO(\rho^i;\beta)\ \text{for } i=0,1,\cdots,n-1,\ 
\rho^0\in\mathbb{M}_0,\ \beta\in\Lambda
\Big\}.
\]
%We refer to \(\mathcal{M}_{t}\) as the set of states reachable from \(\mathcal{M}\) by \(k\) JKO steps under energies parameterized by \(\beta\).
Ideally, we would like to have our training distribution over states supported on the union
\[
\operatorname{supp}(\mu_\rho) \subset \bigcup_{n\ge 0} \mathbb{M}_n,
\]
%Ideally, we would like our training distribution \(\mu_\rho\) over states to be supported on the union
%The ideal joint training distribution $\bbD^*$ should satisfy
%\[ \bbD^* =  \bigcup_{0 \leq n \leq T}\bbX_{n} \,\,\times\,\, \Lambda, \]
reflecting the belief that the JKO trajectories of interest remain confined to (or near) a low-dimensional subset of \(\mathcal{P}(\Omega)\) over the horizons we care about.
One immediate challenge is the absence of ground-truth solution paths.
A natural approach would be to generate each $\mathbb{M}_{n}$ using existing numerical methods, but this can be both time and memory-intensive. 
We thus need a training framework that simultaneously (i) generates training rollouts approximating \(\bigcup_{n}\mathbb{M}_{n}\) and (ii) trains the solution operator by enforcing one-step JKO optimality and stability criteria, rather than relying on precomputed supervision.

% In the idealized, infinite-capacity limit and with \(\bbD=\mathcal{P}(\Omega)\times\Lambda\), any minimizer \(\scrT^\star\) satisfies
% \[
% \scrT^\star(\rho,\beta)_\sharp \rho \;=\; \JKO(\rho;\beta)
% \quad \text{for }\rho\text{-a.e. } \rho.
% \]
% Because forward passes are fast on modern accelerators, a trained \(\scrT^\star\) enables efficient rollouts of the JKO sequence. Given \(\rho^0\) and \(\beta\),
% \[
% \rho^{t+1} \;=\; \scrT^\star(\rho^{t},\beta),
% \qquad t=0,1,2,\dots,
% \]
% produces the trajectory
% \[
% \rho^0 \xrightarrow{\scrT^\star} \rho(\Delta t) \xrightarrow{\scrT^\star} \rho(2\Delta t) \xrightarrow{\scrT^\star} \rho(3\Delta t)\xrightarrow{\scrT^\star}\cdots.
% \]

In the following, we introduce a novel {\it Learn-to-Evolve} framework. This procedure ties these pieces together by iterating between operator updates via \eqref{equ:optimalmap} and self-generated rollouts over \((\rho,\beta)\sim (\mu_\rho,\mu_\beta)\)  where $\mu_\rho$ is supported on \(\bigcup_n \mathbb{M}_{n}\), optionally augmented with consistency (e.g., energy dissipation, mass preservation) and stability regularizers. The framework naturally extends to treat \(\Delta t\) as an additional input; however, for clarity of presentation, we focus here on the case of a fixed $\Delta t$.

%We focus on training a neuroJKO for a given energy functional, where the neural network takes a single density as input. The approach can be easily generalized to train a neuroJKO for a family of functionals, in which case the network takes both the density and the corresponding energy functional parameters as inputs.

\subsection{Iterative-and-performative training framework}

Given an initial data manifold $\mathbb{M}_0$ and a parameter space $\Lambda$,
the proposed Learn-to-Evolve framework aims to jointly learn the JKO evolution manifold and the optimal neural operator $\scrT^*$.
For ease of presentation, we assume that $\bbX$ and $\bbA$ consist of finite sample points drawn from $\mathbb{M}_0$ and $\Lambda$, respectively, with $M$ and $N$ denoting the sizes of these sampled sets.
Then the ideal training dataset $\bbD^*$ is
\begin{equation}
    \bbD^* :=
    \bigcup_{\beta \in \bbA}
    \bigcup_{\rho^0 \in \bbX}\, \{ 
     (\rho^{*,t},\beta)_{t=0}^{T} \;:\;
\rho^{*,0}=\rho^{0},\;
\rho^{*,t+1} = \JKO( \rho^{*,t}\,; \beta)\, \}.
\label{equ:truedata}
\end{equation}
containing $MN(T+1)$ sample points.
%\lw{do we really need finitness here?}\xf{One necessary condition to obtain (3.6) is that $\bbD$(equivalently, $\bbA, \bbX$) has no zero measure subset. Maybe we can change finiteness to this?}
The optimal neural operator $\scrT^*$ satisfying 
\Cref{equ:neuroJKOtarget} on $\bbD^*$
is defined as $\scrT^* \in \arg\min_{\scrT} \mathcal{L}_{{\bbD^*}}(\scrT),$ with
\begin{equation}
\mathcal{L}_{{\bbD}}(\scrT) := 
\frac{1}{ |\bbD| }
\sum_{(\rho,\beta) \in {\bbD}}
~\ell(\scrT;\rho, \beta),
\end{equation}
for any given dataset $\bbD$. In practice, $\bbD^*$ is either unavailable or computationally prohibitive to obtain. To address this, we introduce the following iterative procedure that jointly learns $\bbD^*$ and the corresponding solution operator along $\bbD^*$.

The training scheme is designed to be {\it iterative} and {\it performative}. 
By iterative, 
it means that since
the learned operator will be applied repeatedly at inference time,
we generate the training dataset by iteratively applying the current neural operator $\scrT$, i.e.,
\begin{equation}
\label{equ:data}
    \bbD(\scrT) :=
    \bigcup_{\beta \in \bbA}
    \bigcup_{\rho^0 \in \bbX}\, \{ 
     (\rho^t,\beta)_{t=0}^{T} \;:\;
\rho^{0} \in \bbX,\;
\rho^{t+1} = \scrT(\rho^t,\beta)_\sharp \rho^t\, \}\,.
\end{equation}
We also assume that the chosen approximation of $\mathcal{T}$ has sufficient capacity so that, for any finite sample set $\bbD$, 
\begin{align} \label{assume}
    \scrT \in \arg \min_{\mathcal{T}} \mathcal{L}_{\bbD}(\mathcal{T}) \qquad \Longrightarrow  \qquad 
    \scrT(\rho,\beta)_\sharp \rho = \JKO(\rho\,;\beta) ~~ \text{for all}~~(\rho,\beta) \in \bbD\,.
\end{align}

The following lemma says that
when an operator achieves the minimal loss on the training data generated by itself,
it is the optimal operator we aim to learn.
The proof is by iteratively applying \Cref{assume}.

\medskip 

\begin{lemma}
Let $\scrT^*$ satisfy
\begin{equation}
    \scrT^* \in \arg \min_{\scrT} \mathcal{L}_{{\bbD(\scrT^*)}}(\scrT).
   \label{equ:optimalcondition}
\end{equation}
then $ \bbD(\scrT^*) = \bbD^*$. Consequently, $\scrT^* \in \arg \min_{\scrT} \mathcal{L}_{{\bbD^*}}(\scrT)$.
\end{lemma}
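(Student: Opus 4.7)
The plan is to establish $\bbD(\scrT^*)=\bbD^*$ by a straightforward induction along the trajectory index $t$, using the capacity hypothesis \eqref{assume} as the key bridge between the minimization condition and pointwise optimality of $\scrT^*$. Once the dataset identity is in hand, the second assertion is immediate because the two loss functionals coincide.

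First I would apply the capacity assumption \eqref{assume} to the specific dataset $\bbD=\bbD(\scrT^*)$. Since $\scrT^*\in\arg\min_\scrT \mathcal{L}_{\bbD(\scrT^*)}(\scrT)$ by \eqref{equ:optimalcondition}, the hypothesis yields
\[
\scrT^*(\rho,\beta)_\sharp \rho = \JKO(\rho\,;\beta) \qquad \text{for all } (\rho,\beta)\in \bbD(\scrT^*).
\]
This is the single fact that drives everything else, and the only subtle point is that the dataset to which \eqref{assume} is applied must be the one generated by the very operator being evaluated—this is exactly the self-referential form of \eqref{equ:optimalcondition}.

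Next I would induct on $t\in\{0,1,\dots,T\}$ to show that for each $\rho^0\in\bbX$ and each $\beta\in\bbA$, the trajectory $(\rho^t)_{t=0}^T$ in $\bbD(\scrT^*)$ coincides with the true JKO trajectory $(\rho^{*,t})_{t=0}^T$ in $\bbD^*$. The base case $t=0$ is trivial since both definitions set $\rho^0=\rho^{*,0}\in\bbX$. For the inductive step, assuming $\rho^t=\rho^{*,t}$, the pair $(\rho^t,\beta)$ lies in $\bbD(\scrT^*)$ by construction, so the displayed identity above gives
\[
\rho^{t+1} = \scrT^*(\rho^t,\beta)_\sharp \rho^t = \JKO(\rho^t;\beta) = \JKO(\rho^{*,t};\beta) = \rho^{*,t+1}.
\]
Collecting the trajectories over all $(\rho^0,\beta)\in\bbX\times\bbA$ yields $\bbD(\scrT^*)=\bbD^*$.

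Finally, since the loss functional $\mathcal{L}_\bbD$ depends on $\bbD$ only through its sample points, the identity $\bbD(\scrT^*)=\bbD^*$ implies $\mathcal{L}_{\bbD(\scrT^*)}(\scrT)=\mathcal{L}_{\bbD^*}(\scrT)$ for every $\scrT$, and the minimization property transfers verbatim. I do not expect any real obstacle here: the proof is essentially an unrolling argument, and the only place to be careful is to apply \eqref{assume} exactly once, to the self-generated dataset, rather than trying to use it on $\bbD^*$ (which would be circular, since $\bbD^*$ is the object we are trying to recover).
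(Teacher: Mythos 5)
Your proposal is correct and follows essentially the same approach as the paper: both arguments invoke the capacity assumption \eqref{assume} on the self-generated dataset $\bbD(\scrT^*)$ to deduce that $\scrT^*$ acts as the exact JKO operator on every pair in $\bbD(\scrT^*)$, and then unroll forward in $t$ to show the generated trajectory coincides with the true JKO trajectory. Your version is marginally cleaner in that it applies \eqref{assume} exactly once upfront and then phrases the unrolling as a formal induction, whereas the paper re-invokes the assumption at each step; the mathematical content is identical.
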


\begin{proof}
Let $\beta \in \bbA$.  
For any $\rho^0 \in \bbX$, since $(\rho^0, \beta) \in \bbD(\scrT^*)$, by combining \Cref{equ:optimalcondition} and \Cref{assume}, we obtain
\[ \displaystyle
   \scrT^*(\rho^0,\beta)_\sharp \rho^0
   = \JKO(\rho^0;\beta)
   = \rho^{*,1}.
\]
It then follows from the definitions of $\mathbb{D}(\mathcal{T}^*)$ and $\mathbb{D}^*$ that $(\rho^{,1}, \beta)$ belongs to both $\mathbb{D}(\mathcal{T}^*)$ and $\mathbb{D}^*$.  
Applying the same argument gives
\[ 
   \scrT^*(\rho^{*,1},\beta)_\sharp \rho^{*,1}
   = \JKO(\rho^{*,1};\beta)
   = \rho^{*,2}, \\
\]
and thus $(\rho^{*,2},\beta) \in \bbD(\scrT^*)$ and $(\rho^{*,2},\beta) \in \bbD^*$.  
By repeatedly applying this argument, we obtain that
\(
   (\rho^{*,t},\beta) \in \bbD(\scrT^*),
 \) 
  for all \( t=0, 1,2,\dots,T.
\)
Consequently,
\(
   \bbD(\scrT^*) = \bbD^*\),
   and
   \(
   \scrT^*(\rho,\beta)_\sharp \rho
   = \JKO(\rho;\beta), \)
    for all \( (\rho,\beta) \in \bbD^* \),
 which completes the proof.
\end{proof}

\medskip
This lemma motivates us to adopt
a {\it performative} training approach, where we alternate between generating training datasets and training the neural operator on a fixed training dataset.
This naturally leads to a fixed-point iteration of 
the right-hand side of
\eqref{equ:optimalcondition}.
Specifically, let $\scrT_k$ denote the $k$-th iterate. We fix the training data to $\bbD(\scrT_k)$ and use it to find the next iterate $\scrT_{k+1}$, i.e.,
 \begin{equation}
 \begin{aligned}     
\scrT_{k+1} \in \arg\min_{\scrT} \mathcal{L}_{{\bbD(\scrT_k)}}(\scrT). 
%&:=  \sum_{(\rho,\beta) \in {\bbD(\scrT_k)}} \ell(\scrT;\rho, \beta).
 \end{aligned}
 \label{equ:loss}
\end{equation}
The training scheme is summarized in Algorithm~\ref{alg:neurojko}.

%\xf{It's a fixed point version of 3.7, not 3.4?} 

\begin{algorithm}[h]
\caption{Learn-to-Evolve (Generic Framework)}\label{alg:neurojko}
\begin{algorithmic}[1]
  \Require 
          stepsize $\Delta t$, the number of JKO steps $T$, 
          initial operator $\scrT_0$
  \For{ outer iteration $k = 0,1,2,\dots$}
      \State \textbf{Step 1 (data generation):} Generate training data $\bbD(\scrT_k) $ as in \Cref{equ:data}% and detach the data from the operator.
      \Statex   % blank line
      \State \textbf{Step 2 (training):} Update the operator by minimizing $\mathcal{L}_{\bbD(\scrT_k) }\!\bigl(\scrT \bigr)$ given in \Cref{equ:loss}: 
        \begin{equation}  \label{equ:updaterule}
        \scrT_{k+1}
          \approx
       \arg \min_{\scrT}
            \mathcal{L}_{{\bbD(\scrT_k)} }\!\bigl(\scrT\bigr),
        \end{equation}
        %where $ \mathcal{O}(\epsilon_k)=0$ means the problem is solved exactly. \lai{This notation is a bit strange to me.} 
  \EndFor
\State \textbf{Output:} the learned operator $\scrT^* := \scrT_{k+1}$
\end{algorithmic}
\end{algorithm}

For efficiency,
the minimization problem is initialized from $\scrT_k$,
and is not solved exactly but only up to an accuracy $\epsilon_k$. In the next section, we show that, under suitable Lipschitz assumptions on both the true JKO operator and the neural operator $\mathcal{T}_k$, one has when $k$ goes to infinity,
\[
\epsilon_k \rightarrow 0
\qquad \Longrightarrow  \qquad 
\bbD(\scrT_k) \rightarrow \bbD^*; 
\]
A similar performative training paradigm was proposed in \cite{perdomo2020performative}; see \Cref{sec:conclusion} for further discussion.

\medskip

\textbf{Discussion on Generalization.}
In classical learning theory, generalization typically reflects the ability to interpolate smoothly between training sample points, assuming the testing data lie within a neighborhood of the training distribution. That is, good generalization is usually expected only when the test inputs are not far from those seen during training.
However, in our experiments (see \Cref{sec:aggregeneralize}), we observe a more intriguing phenomenon: the learned solution operator exhibits meaningful extrapolation behavior, providing satisfactory trajectory from initial data that lie outside the prescribed training family~$\bbD^*$. This suggests that the neural operator may have, to some extent, learned an approximation of the underlying true operator rather than merely memorizing the training trajectories. Below, we discuss two possible reasons behind this phenomenon:
%We believe that this favorable behavior stems from two key features of our approach:

(i) The evolving nature of the training data. The progressive evolution of the training set implicitly acts as a form of data augmentation. At iteration~$k$, the training dataset $\bbD_k$ is generated from the current model state, leading to the cumulative dataset:
\[
    \bbD^{o} \;:=\; \bigcup_k \bbD_k ,
\]
where each $\bbD_k$ denotes the dataset generated at iteration $k$.
Although the model is updated using only the most recent~$\bbD_k$, its parameters retain smooth influence from earlier data through continuous training. As training progresses, $\bbD_k$ gradually approaches the target family $\bbD^*$, and $\bbD^{o}$ effectively forms a neighborhood around it. Consequently, the learned operator achieves high accuracy on~$\bbD^{*}$ while maintaining stability and smoothness in nearby regions, yielding stronger generalization than models trained solely on a fixed dataset. This evolving data exposure naturally mitigates overfitting and acts as an implicit form of data augmentation.

%\paragraph{Inexact JKO Behavior and Convergence.}
(ii) Inexact JKO Dynamics and Empirical Convergence.
When the trained operator is applied iteratively from an arbitrary initial distribution $\rho^0 \in \mathcal{P}(\Omega)$, it effectively realizes an inexact JKO iteration.
Although formal convergence to the equilibrium $\rho_\infty$ is not guaranteed for $\rho^0 \notin \bbD^*$, progressive training and the inherent stability  of the JKO scheme jointly promote empirical convergence.
Inexact JKO iterations are known to converge provided that the stepwise approximation error decreases along the trajectory. Empirically, our trained operator satisfies this condition when $\bbD^*$ includes the equilibrium state~$\rho_\infty$: as $\rho^t$ evolves toward $\rho_\infty$, it simultaneously enters regions closer to the training family, yielding progressively more accurate updates.
Thus, even for initial conditions not close to $\bbD^*$, repeated application of the learned operator can gradually improve accuracy along the path, effectively driving $\rho^t$ toward the equilibrium.

The observed generalization beyond the training family appears to arise from the interplay between the evolving training distribution and the inherent stability of the JKO dynamics.
The former broadens the effective data manifold, while the latter propagates learned structure toward unseen inputs in a contractive manner. Together, these mechanisms enable the operator to approximate the true underlying evolution law rather than merely interpolating within $\bbD^*$.

Although a complete theoretical characterization remains an open question, these observations highlight an interesting question: whether solution operators trained within contractive dynamical frameworks can generalize beyond the training manifold by approximating the underlying evolution law itself.  Developing a rigorous understanding of this phenomenon presents both a compelling and challenging research avenue at the intersection of operator learning, stability theory, and variational convergence.

\begin{figure}
    \centering
    \setlength{\tabcolsep}{0pt} 
    \begin{tabular}{ccccc}   
        \includegraphics[width=0.24\textwidth]{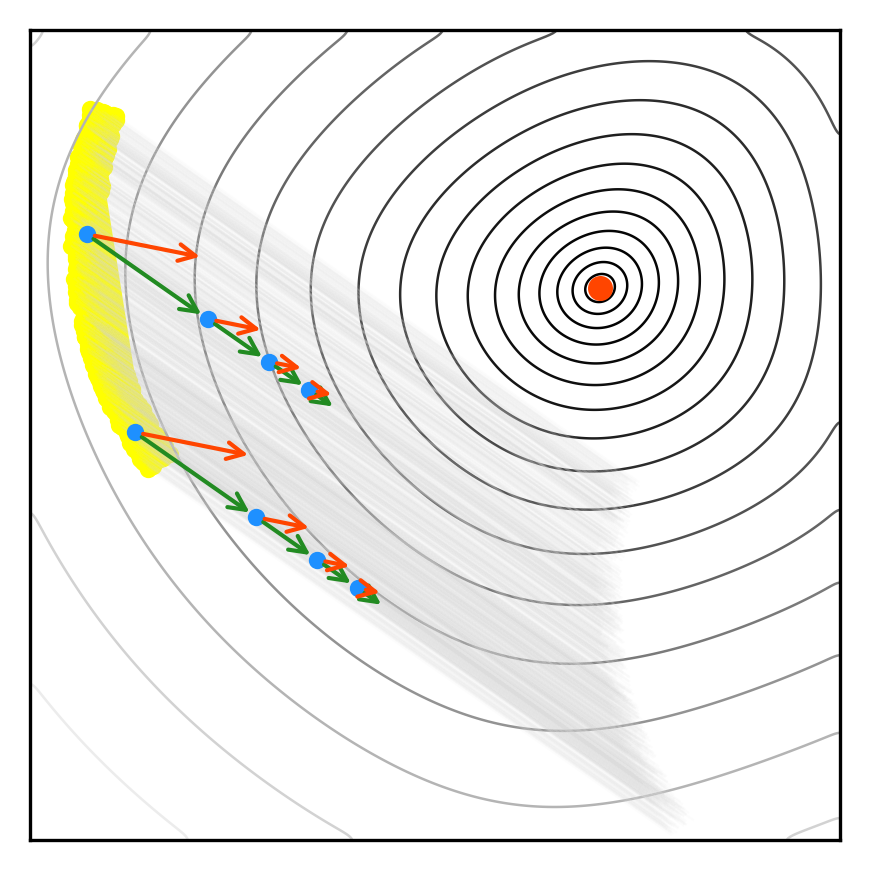} &
        \includegraphics[width=0.24\textwidth]{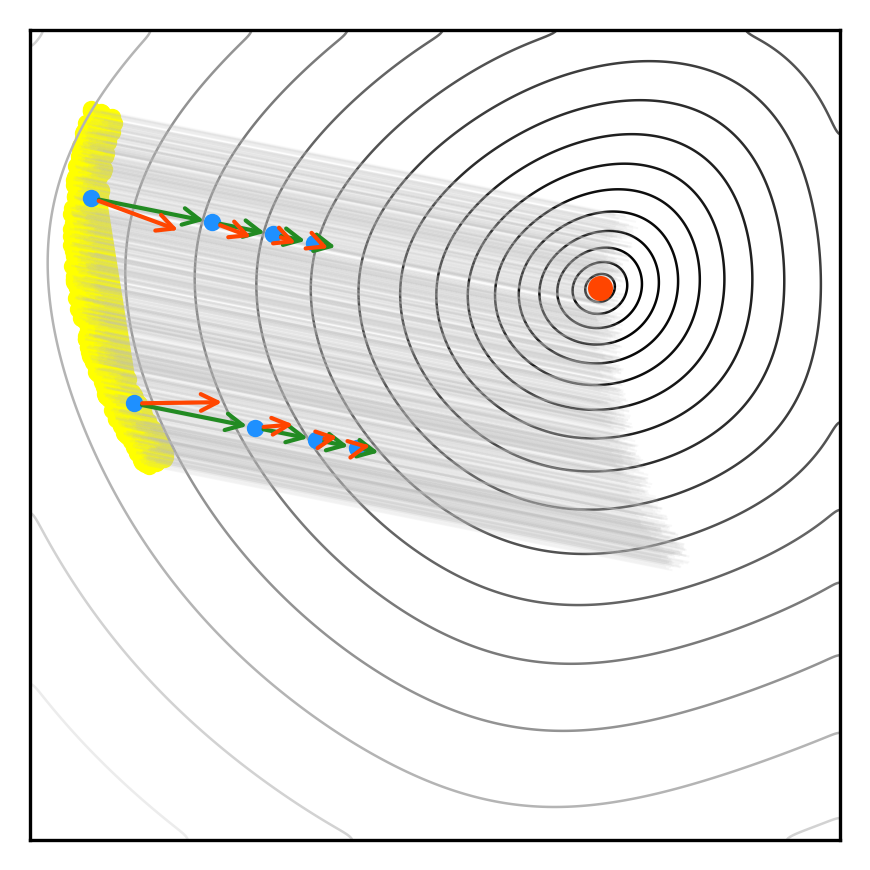} &
        \includegraphics[width=0.24\textwidth]{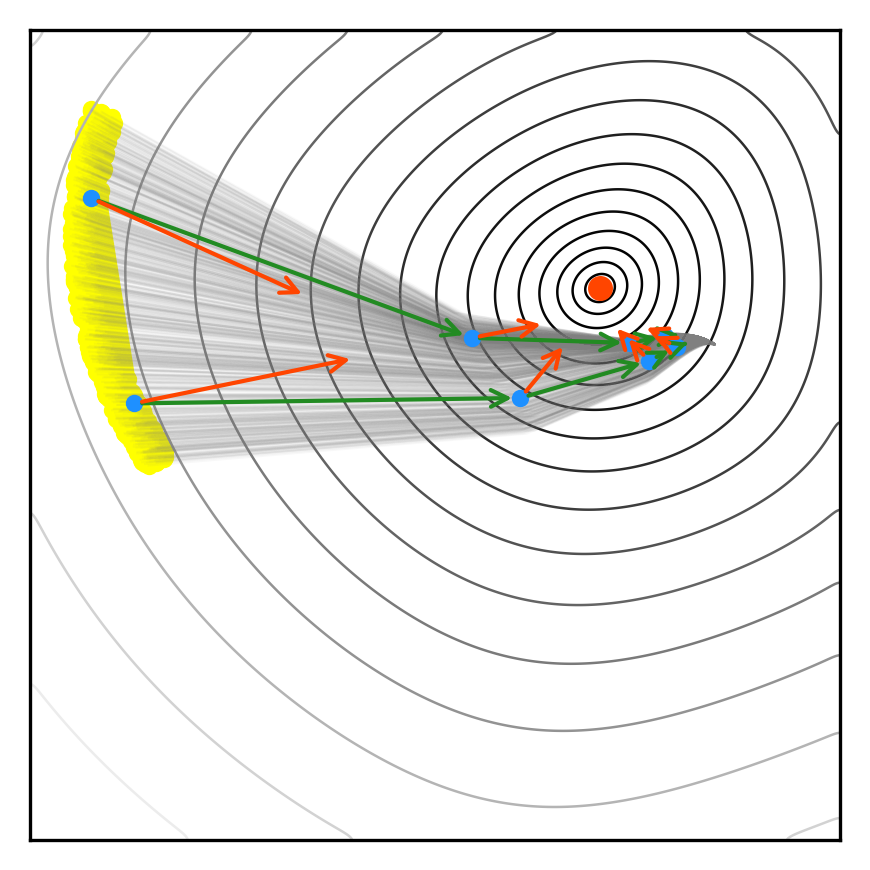} &
        \includegraphics[width=0.24\textwidth]{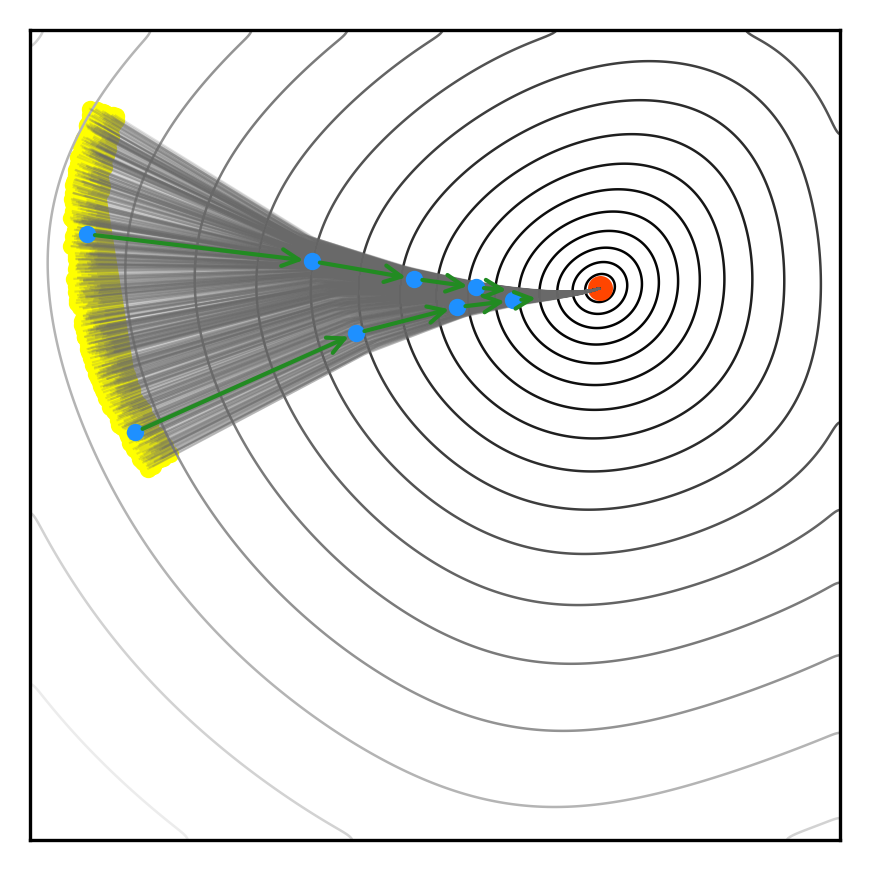} \\
      $\bbD_0$    &  $\bbD_k$ &  $\bbD_{k+1}$ &  $\bbD^*$
    \end{tabular}
    \caption{Evolution of the training dataset shown on the energy functional contours. Yellow area denotes the initial density family $\bbX$; red markers indicate equilibria; the gray region is the generated training set $\bbD_k$. For illustration, we select two landmark initial from $\bbX$ and plot the generated trajectory starting from them, shown as blue dots. Green arrows show the neural operator’s predicted directions when generating $\bbD_k$, while red arrows show the updated predictions after training on $\bbD_k$.
    During training, $\bbD_k$ gradually converges to $\bbD^*$.}
    %Another view is that the neural operator can be viewed as trained on the union of all generated datasets, with sample points closer to $\bbD^*$ given larger weight (darker gray); see Section~3.1.3 for discussion.}
\label{fig:dataevolution}
\end{figure}

\subsection{Convergence analysis}
In this subsection, we provide the convergence analysis of \Cref{alg:neurojko}, showing that $\bbD (\scrT_k)$ progressively converges to $\bbD^*$.  We begin by stating the following assumptions on the Lipschitz continuity of both the JKO operator and the neural operator.

\begin{assumption}[Lipschitz continuity of the JKO operator]
\label{assumption:JKO}
There exists $\lambda > 0$ such that the JKO operator from $\mathcal{P}(\Omega)$ to itself  satisfies
\begin{equation} \label{927}
    \W_2(\JKO(\rho\,;\,\beta), \JKO(\nu\,;\,\beta)) 
    \;\leq\; \lambda \, \W_2(\rho, \nu),
    \quad \forall\,\, \rho,\nu \in \calP(\Omega), \, \beta \in \Lambda.
\end{equation}
\end{assumption}
Note that this condition is weaker than non-expansiveness. In classical Hilbert spaces, the proximal operator is non-expansive. In Wasserstein space, however, this property does not generally hold. There are special cases: for example, if the functional $\mathcal{E}$ is totally convex (a stronger condition than convexity along generalized geodesics), then the JKO operator is non-expansive, i.e., \eqref{927} holds with $\lambda = 1$; see Theorem 3.4 and Proposition 5.2 of \cite{cavagnari2023lagrangian}. Examples of such functionals $\mathcal{E}$ include certain linear and interaction energies. Other works \cite{jacobs20201} show that the Wasserstein proximal operator is non-expansive under the $L^1$ metric for spatially inhomogeneous free energies. Moreover, \cite{carlen2013contraction,craig2016exponential} establish bounds with an $\mathcal{O}(\Delta t)$ error term:
\(
    \W\big(\JKO(\rho), \JKO(\nu)\big) 
    \;\leq\; \W_2(\rho, \nu) + \mathcal{O}(\Delta t).
\)

\vspace{0.5cm}
Analogously, for the neural operator \(\scrT : \mathcal{P}(\Omega) \times \Lambda  \mapsto \mathrm{Map}(\Omega,\Omega) \),  we assume:

\begin{assumption} [Lipschitz continuity of the neural operator]
\label{assumption:NN}
There exists constant \(\gamma > 0\)  and a set $\mathbb{T}$ such that for all neural operator \( \scrT \in \mathbb{T} \), it satisfies
\[
      \W_2\big(  \scrT(\rho, \beta)_\sharp \rho, \scrT(\nu, \beta)_\sharp \nu \big) \leq \gamma \W_2( \rho, \nu  ),
      \qquad
\forall\,\,      \rho,\nu \in \mathcal{P}(\Omega), \beta \in \Lambda.
        \]
\end{assumption}
This assumption ensures that the neural operator does not produce drastic changes when regenerating new training data. It could be satisfied, for example, when the operator is parameterized with bounded weights and employs Lipschitz continuous activation functions such as ReLU or tanh \cite{khromov2023some,fazlyab2019efficient,gouk2021regularisation}. A similar assumption is used in Sec. 5.2 of \cite{cheng2024convergence} for the convergence analysis of flow-based generative models.

%Without loss of generality, 
%For better presentation, we assume the functional parameter $\beta$ is fixed and omit it in the following.
%Now, we assume only one initial density is provided, i.e., $\bbX = \{\rho^0\}$ in this section. 
At the $k-$the iteration,
we denote 
\begin{equation}
    \bbD_{k} 
   : = \bbD (\scrT_k)
    = \bigcup_{\beta \in \bbA}
    \bigcup_{\rho^0 \in \bbX}
    \{ (    \rho^{k,t}, \beta)_{t=0}^{T} 
:   \rho^{k,0}= \rho^{0},
    \rho^{k,t+1} \!= \! \scrT_k(\rho^{k,t}, \beta)_\sharp \rho^{k,t}  \} .
    \label{equ:dataskt}
\end{equation}
Here, the superscript $k$ represents training iterates, $t$ represents the time on the trajectory. Since our training in \eqref{equ:updaterule} is not aggressively to the minimum, we denote the training error as 
\begin{equation}
        \label{equ:eacherror}
     \epsilon^{k,t}:=   \W_2\big(\scrT_{k+1}(\rho^{k,t }, \beta)_\sharp \rho^{k,t }, \JKO(\rho^{k,t }, \beta)\big)\,.
\end{equation}
When $\epsilon_k : = \max_{0\leq t \leq T} \epsilon^{k,t} $ goes to $ 0$, the transport map $\scrT_{k+1}$ achieves the JKO optimum under the  \Cref{assume}:
\[
\W_2\left( \scrT_{k+1}(\rho^{k,t}, \beta)_\sharp \rho^{k,t}, \JKO(\rho^{k,t}, \beta) \right) = 0, \quad \forall\,\, 0\leq t\leq T.
\]

\medskip
Our main result is summarized as follows. 

\begin{theorem}
Under Assumptions~\ref{assumption:JKO} and~\ref{assumption:NN}, together with \Cref{assume},
suppose  that $ \{\scrT_{k}\} \subseteq \mathbb{T} $ where $\mathbb{T}$ is defined in \Cref{assumption:NN}. 
 Then, the training dataset $ \{\bbD_{k}\} $ satisfies
 \begin{itemize}
    \item[i)] if $\epsilon_k=0$ for all $k$, then $ \{\bbD_{k} \}_k$  converges 
    to $ \bbD^{*}$
    in at most $T$ steps; 
    \item[ii)] if $\epsilon_k > 0$,  then for any given $\beta \in \bbA$ and $\rho^0 \in \bbX$,
   we have
\begin{equation}
\begin{aligned}
\,\,&\W_2(\rho^{k,t}, \rho^{*,t}  )  \\
 \leq &(\lambda + \gamma) \W_2\big( \rho^{k-1,t-1},  \rho^{*,t-1} \big) + \epsilon^{k-1,t-1}+\gamma \W_2(\rho^{k,t-1}, \rho^{*,t-1}  )\,,
\end{aligned}
\label{equ:convbound}
\end{equation}
for all $1\leq t \leq \min\{k, T\}$,
where $\rho^{*,t} $ is the exact JKO solution given in \eqref{equ:truedata}.
Therefore, 
as $k \rightarrow \infty$,
if $\epsilon^{k,s} \rightarrow 0$ for all $s< t$, then
$\rho^{k,t}$ converges to $ \rho^{*,t} $ in $\W_2$ sense.
    
\end{itemize}
\end{theorem}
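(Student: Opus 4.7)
The plan is to handle parts (i) and (ii) separately, both via induction on the trajectory time $t$, using the two Lipschitz assumptions and a triangle inequality in $\W_2$.

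For part (i), with $\epsilon_k=0$ for every $k$, \Cref{assume} forces $\scrT_{k+1}(\rho^{k,s},\beta)_\sharp \rho^{k,s}=\JKO(\rho^{k,s};\beta)$ at every node of $\bbD_k$. I would prove the claim $\rho^{k,t}=\rho^{*,t}$ for all $0\le t\le k$ by induction on $k$. The base case is immediate from $\rho^{k,0}=\rho^0=\rho^{*,0}$. For the inductive step, assume $\rho^{k,s}=\rho^{*,s}$ for $s\le k$. Then I would do an inner induction on $t$ for the iterate $\scrT_{k+1}$: given $\rho^{k+1,t-1}=\rho^{*,t-1}$, the optimality of $\scrT_{k+1}$ at the point $\rho^{k,t-1}=\rho^{*,t-1}\in\bbD_k$ yields $\rho^{k+1,t}=\scrT_{k+1}(\rho^{*,t-1},\beta)_\sharp\rho^{*,t-1}=\JKO(\rho^{*,t-1};\beta)=\rho^{*,t}$. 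Iterating, $\rho^{k+1,t}=\rho^{*,t}$ for $0\le t\le k+1$. Taking $k=T$ gives $\bbD_T=\bbD^*$.

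For part (ii), the key identity is
\begin{equation*}
    \rho^{k,t} = \scrT_k(\rho^{k,t-1},\beta)_\sharp \rho^{k,t-1},
    \qquad
    \rho^{*,t} = \JKO(\rho^{*,t-1};\beta).
\end{equation*}
I would insert the two intermediate objects $\scrT_k(\rho^{k-1,t-1},\beta)_\sharp\rho^{k-1,t-1}$ and $\JKO(\rho^{k-1,t-1};\beta)$ and apply the triangle inequality in $\W_2$ to split $\W_2(\rho^{k,t},\rho^{*,t})$ into three pieces. The first piece, $\W_2\bigl(\scrT_k(\rho^{k,t-1},\beta)_\sharp\rho^{k,t-1},\scrT_k(\rho^{k-1,t-1},\beta)_\sharp\rho^{k-1,t-1}\bigr)$, is controlled by \Cref{assumption:NN} with constant $\gamma$; the second piece is exactly the training error $\epsilon^{k-1,t-1}$ from \eqref{equ:eacherror}; the third piece is controlled by \Cref{assumption:JKO} with constant $\lambda$. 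After bounding the first piece via a further triangle inequality $\W_2(\rho^{k,t-1},\rho^{k-1,t-1})\le \W_2(\rho^{k,t-1},\rho^{*,t-1})+\W_2(\rho^{*,t-1},\rho^{k-1,t-1})$ and regrouping, one obtains exactly \eqref{equ:convbound}.

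Finally, for the asymptotic statement, I would induct on $t$. The base case $t=0$ is trivial since $\rho^{k,0}=\rho^{*,0}$ for every $k$. For the inductive step, suppose $\W_2(\rho^{k,s},\rho^{*,s})\to 0$ as $k\to\infty$ for every $s<t$. Applying \eqref{equ:convbound} at time $t$, the first and second terms on the right-hand side vanish as $k\to\infty$ by the induction hypothesis together with $\epsilon^{k-1,t-1}\to 0$, and the remaining term $\gamma\W_2(\rho^{k,t-1},\rho^{*,t-1})$ also vanishes, yielding $\rho^{k,t}\to\rho^{*,t}$ in $\W_2$.

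I do not anticipate a major technical obstacle: both assertions reduce to bookkeeping on indices, one induction in $(k,t)$ for (i) and a two-level induction (the inner one being the triangle inequality) for (ii). The subtlety worth watching is that in (i), $\scrT_{k+1}$ is only guaranteed to coincide with $\JKO$ on the grid $\bbD_k$, not on $\bbD^*$; this is why one must propagate the identity $\rho^{k,t}=\rho^{*,t}$ through both indices simultaneously rather than claim exactness after a single outer iteration.
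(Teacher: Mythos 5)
Your proposal is correct and follows essentially the same route as the paper: for part (ii) it uses the identical triangle-inequality decomposition through the intermediate objects $\scrT_k(\rho^{k-1,t-1},\beta)_\sharp\rho^{k-1,t-1}$ and $\JKO(\rho^{k-1,t-1};\beta)$, yielding the three pieces controlled by $\gamma$, $\epsilon^{k-1,t-1}$, and $\lambda$ respectively, and for part (i) and the asymptotic claim it uses the same induction-on-indices bookkeeping that the paper carries out. The only cosmetic difference is that you phrase part (i) as $\rho^{k,t}=\rho^{*,t}$ whereas the paper phrases it as $\rho^{*,t}\in\bbD_k$; these are equivalent given \Cref{assume}.
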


\medskip

While fixed-point iteration typically requires a contraction property to establish convergence, the update rule in \eqref{equ:updaterule} does not lend itself easily to such an analysis. Nevertheless, this theorem shows that the iteration defined by \eqref{equ:updaterule} generates datasets that progressively converge to the true dataset. This result hinges on two key components. First, the initial density $\rho^0 $ is included in the training dataset $\bbD_k$ for each iteration $k$. Second, when training is error-free (i.e., $\epsilon^{k,t}$ in \eqref{equ:eacherror} vanishes), the learned operator produces the true JKO solution for each data in $\bbD_k$, as ensured by assumption \eqref{assume}. Together, these points imply the following: since $\rho^{k,0}=\rho^{0}=\rho^{*,0}$ and training introduces no error,
after the first iteration,
 the learned operator $\mathcal{T}_1$ produces
\[
\scrT_1(\rho^0, \beta)_\sharp \rho^0 = \rho^{*,1}\,.
\]
This desired data $\rho^{*,1}$ is then added to the training set and used to learn a new operator. When trained optimally at the second iteration, the new operator $\mathcal{T}_2$ satisfies
\[
\scrT_2(\rho^0, \beta)_\sharp \rho^0 = \rho^{*,1}, ~~\scrT_2(\rho^{*,1}, \beta)_\sharp \rho^{*,1} = \rho^{*,2}\,,
\]
thereby extending the number of desired data in the training dataset to the next iteration. Repeating this process yields convergence of $\bbD_k$ to $\bbD^*$.
The above argument covers the error-free case. When training errors are present, they propagate into the generated sample points and hence into subsequent operators. However, as long as these errors diminish over training iterations, the convergence still holds, as stated in part ii) of the theorem. 

The formal proof is given below.

\begin{proof}
To simplify the presentation, in the following proof, we omit the parameter 
$\beta$ in the training data. That is, instead of $(\rho,\beta)$ appearing 
in both $\bbD_k$ and $\bbD^*$, we simply write $\rho$. Moreover, we slightly 
abuse notation and denote
\(
   \scrT(\rho,\beta)_\sharp \rho \;=:\; \scrT(\rho).
\)
We recall that
by definition,
$ \rho^{k,t} =\scrT_k(\rho^{k,t-1})$ and
$ \rho^{*,t} =\JKO(\rho^{*,t-1})$.

\vspace{0.2cm}

For i), when $k=0$,
since $\epsilon_0=0$ and $\rho^0 \in \bbD_0 = \{\rho^0,\cdots\}$ ,
after training,
we have $\W_2 \big(\scrT_{1}(\rho^{0}), \JKO(\rho^{0 })\big)  =0$, 
followed by $\rho^{*,1 }=\JKO(\rho^{0 }) \in \bbD_1  = \{\rho^0, \scrT_1(\rho^0),\cdots\}$.
when $k=1$,
since  $\epsilon_1=0$ and $\rho^0, \rho^{*,1 } \in \bbD_1$ 
we have  both $\W_2 \big(\scrT_{2}(\rho^{0}), \JKO(\rho^{0 })\big)  =0$ and $\W_2 \big(\scrT_{2}(\rho^{*,1}), \JKO(\rho^{*,1 })\big)  =0$.
Since $\bbD_2 = \{\rho^0,
\scrT_{2}(\rho^{0}),
\scrT_{2}(\scrT_{2}(\rho^{0})),\cdots
\}$,
we have $\rho^{*,1 }= \scrT_{2}(\rho^{0})\in \bbD_2$,
followed by $\rho^{*,2 }=\JKO(\rho^{*,1 }) =\scrT_{2}(\rho^{*,1})\in \bbD_2$.
Recursively, we have 
$\rho^{*,t }\in \bbD_k$ for all $t\leq k$,
followed by 
$ \bbD_{k} $  converges 
    to $ \bbD^{*}$
    in at most $T$ steps.

\vspace{0.2cm}

For ii), we have the following estimate for each $\rho^{k,t} \in \bbD_k$: 
\begin{align} \label{928}
&\quad \,\, \W_2(\rho^{k,t}, \rho^{*,t}  ) \nonumber \\
&= \W_2 \big( \scrT_k(\rho^{k,t-1}),  \rho^{*,t} \big) \nonumber \\
&\leq 
\W_2 \big( \scrT_k(\rho^{k,t-1}),  \scrT_k(\rho^{k-1,t-1}) \big) +\W_2 \big( \scrT_k(\rho^{k-1,t-1}),  \rho^{*,t} \big)  \nonumber \\
&\leq \gamma \W_2 \big( \rho^{k,t-1}, \rho^{k-1,t-1} \big) +\W_2 \big( \scrT_k(\rho^{k-1,t-1}),  \rho^{*,t} \big)\,,
\end{align}
where the last inequality uses the Assumption~\ref{assumption:NN}. Then the first term will be further bounded by the triangle inequality
\begin{align} \label{930}
 \W_2 \big( \rho^{k,t-1}, \rho^{k-1,t-1} \big) 
 \leq \W_2 \big( \rho^{k-1,t-1},  \rho^{*,t-1} \big) +\W_2 ( \rho^{*,t-1},  \rho^{k,t-1} )
\end{align}
which traces the error at $(k,t)$ back to both the previous iteration and time: $(k-1,t-1)$ and $(k, t-1)$. 

The second term in \eqref{928} admits the following estimate:
\begin{align} \label{929}
&\quad  \W_2 \big( \scrT_k(\rho^{k-1,t-1}),  \rho^{*,t} \big)  \nonumber 
\\ & = \W_2 (\scrT_k(\rho^{k-1,t-1}),  \JKO(\rho^{*,t-1}) ) \nonumber 
\\ & \leq \W_2 (\scrT_k(\rho^{k-1,t-1}),  \JKO(\rho^{k-1,t-1}) ) + 
\W_2 (\JKO(\rho^{k-1,t-1}),  \JKO(\rho^{*,t}) ) \nonumber 
\\ & \leq \epsilon^{k-1,t-1} + \lambda \W_2(\rho^{k-1,t-1}, \rho^{*,t})\,.
\end{align}
Here, the first equality uses  $ \rho^{*,t} =\JKO(\rho^{*,t-1})$, and the last inequality uses definition of the error $\epsilon^{k-1,t-1} $ given in \Cref{equ:eacherror} and \Cref{assumption:JKO}. Substituting \eqref{930} and \eqref{929} into \eqref{928} leads to the result \eqref{equ:convbound}. 
\medskip

If  $\epsilon^{k,0} \rightarrow 0$ as $k \rightarrow \infty$, we have $\W_2(\rho^{k,1}, \rho^{*,1}) \rightarrow 0$ by the definiton  of $\epsilon^{k,0}$. From \eqref{equ:convbound}, it then follows that if $\epsilon^{k,1} \to 0$,
    \[
    \W_2(\rho^{k,1}, \rho^{*,1}  )  
 \leq (\lambda + \gamma) \W_2\big( \rho^{k-1,0},  \rho^{*,0} \big) + \epsilon^{k-1,0}+\gamma \W_2(\rho^{k,0}, \rho^{*,0}  ) \rightarrow 0.
    \]
Likewise, if $\epsilon^{k,2} \rightarrow 0$,
    \[
    \W_2(\rho^{k,2}, \rho^{*,2}  )  
 \leq (\lambda + \gamma) \W_2\big( \rho^{k-1,1},  \rho^{*,1} \big) + \epsilon^{k-1,1}+\gamma \W_2(\rho^{k,1}, \rho^{*,1}  ) \rightarrow 0
    \]
    The same argument can then be repeated for later times,
    which finishes the proof.
\end{proof}

\begin{remark}
    In practice, since the error of each JKO step is carried forward, achieving a low training error for early time $t$ benefits subsequent steps. Hence, it is natural to use a decay factor $0<\beta\leq 1$ to adjust the importance of generated densities in the loss function as $\sum_t \beta^t \ell(\scrT;\rho^{k,t})$. The use of the decay factor is also seen in a similar work \cite{wang2019prnet,li2022learning}. 
\end{remark}

%%%%%%%%%%%%%%%%%%%%%%%%%%%%%%%%%%%
\subsection{Training strategies}
\label{sec:trainingstrategy}

In this section, we discuss various training strategies. In particular, we propose a method termed the better-than-birth strategy, which effectively balances data generation and training in \Cref{alg:neurojko}.

The update in \Cref{equ:updaterule} is generic, where standard optimizers (e.g., Adam or SGD) can be used to solve the minimization problem.
For efficiency,
to obtain $\scrT_{k+1}$,
the minimization is initialized from $\scrT_k$ to avoid training from scratch each time.
Here we emphasize that since the training data $\bbD(\scrT_k)$
is also generated by $\scrT_k$,
it needs to be {\it detached } from $\scrT_k$ to prevent gradients from flowing back through the data generation process during differentiation. 
The key is then to decide when to generate the new training dataset, equivalently,
when to stop the inner operator update in \Cref{equ:updaterule}.

We refer to the \textit{greedy-training strategy} as the case where the minimization problem is solved to its minimum at each outer iteration. The drawback of this approach is that, due to the non-convex nature of neural network training, the loss may decrease rapidly in the initial stage but then stagnate, making it difficult, even impossible, to reach the global minimizer. In our setting, this difficulty is amplified, as solving for the global minimizer is required $T$ times.

 In contrast, the {\it aggressive training} strategy is designed to alternate between a single inner update and new data generation, thereby updating the training data at high frequency.
For some problems, such as minimizing KL divergence with simple target distributions, 
this strategy quickly produces a reasonably accurate JKO trajectory. However, due to the non-convex nature of neural network training, even well-performing models may regress after parameter updates, degrading the generated data. 
Especially, 
errors in early JKO steps can propagate to later ones such that the 
generated training data diverge significantly, making overall convergence difficult—an effect particularly evident in porous medium problems. Nonetheless, this strategy is useful in the early stages of training, as it rapidly warms up the model to a coarse but reasonable state.

To further improve the training, we introduce a new strategy, called {\it better-then-birth} strategy, that generates new training data when the neural operator becomes ``better'' than the operator previously used for data generation, thereby ensuring progressive improvement in the training data.
Intuitively, for a fixed training data, the operator is considered ``better'' when its loss is smaller.
However, inspired by \Cref{equ:convbound}, we introduce a new indicator: the \textit{accumulated loss}.

Specifically,
at iteration $k$, we first sample
a batch of initial and parameter pairs as $\mathbb{B} = \{(\rho^{[b],0},\beta^{[b]} )\}_{b=1}^B \subseteq \bbX \times \bbA$ where $B$ is the batch size.
Following \Cref{equ:dataskt},
the training dataset  $\bbD_k$ is  as follows
\begin{equation}
 \bbD_k=   \bbD(\scrT_k;\mathbb{B}) = 
    \bigl\{ \,
   (\rho^{[b],t}, \beta^{[b]})_{t=0}^T
   \;\bigr\}_{b=1}^B
    \label{equ:batchdata}
\end{equation}
with $\rho^{[b],t+1} = \scrT_k(\rho^{[b],t},\beta)_\sharp \rho^{[b],t}$.
For the given training data $\bbD_k$, the accumulated loss of an arbitrary operator $\scrT$ is defined as
\begin{equation}
\label{equ:indicator}
   L\big(\scrT;\bbD_k \big) \in \mathbb{R}^{B\times (T+1)},
\qquad
L\big(\scrT;\bbD_k \big)_{b,t} = \sum_{s=0}^t \ell(\scrT; \rho^{[b],s}, \beta^{[b]}). 
\end{equation}
During the minimization,
if the accumulated loss  $L\big(\scrT;\bbD_k\big)$ of the current iterate $\scrT$ is smaller than $L\big(\scrT_k;\bbD_k \big)$  element-wise, $\scrT$ is regarded as superior to $\scrT_k$.
Thus, the better-then-birth strategy stops the inner update in \Cref{equ:updaterule} when $L\big(\scrT;\bbD_k\big)\leq L\big(\scrT_k;\bbD_k \big)$, 
and 
use the new operator for data generation in the next iteration.

This indicator emphasizes achieving low error in previous time steps. Meanwhile,
to prevent training from stalling, we also set a maximum number of inner update $S_{\mathrm{in}}$ for each outer iteration. 
Furthermore,  
to control the overall training cost in practice,, we set a global budget $S_{\max}$ that specifies the total number of inner updates executed over all outer iterations. 
The effectiveness of this strategy to generate accurate JKO trajectories has been confirmed by the numerical examples in \Cref{sec:numerical}.
Furthermore,
under this strategy,
if $\mathbb{B}=\bbX \times \bbA$, 
it is easy to verify that 
each element in
the accumulated losses $L\big(\scrT_k;\bbD_k \big)$ is decreasing monotonically in terms of $k$. 
The full procedure is summarized in Algorithm~\ref{alg:neurojko2stage},
where an optional aggressive training strategy can be applied for the first $K_0$ iteration, and it corresponds to setting $S_{\mathrm{in}}=1$.

\begin{algorithm}[ht]
\caption{Learn-to-Evolve Algorithm (better-than-birth strategy)}
\label{alg:neurojko2stage}
\begin{algorithmic}[1]
\Require 
stepsize $\Delta t$, number of JKO steps $T$, batch size $B$,
initial operator $\mathcal{T}_0$,
maximum inner steps per outer iteration $S_{\mathrm{in}}$,
maximum total inner steps $S_{\max}$

\State Initialize operator $\mathcal{T} \gets \mathcal{T}_0$
\State Initialize global inner-step counter $S \gets 0$

\For{outer iteration $k = 0,1,2,\dots$}

    \State \textbf{Step 1: Data generation}
    \State \hspace{1em} Sample a batch of initial density--parameter pairs from $\mathbb{X} \times \mathbb{A}$
    \State \hspace{1em} Generate the dataset $\mathbb{D}_k = \mathbb{D}(\mathcal{T}; B)$ as in \Cref{equ:batchdata}
    \State \hspace{1em} Detach $\mathbb{D}_k$ from the current NN parameters
    \State \hspace{1em} Compute reference loss $L_k = L(\mathcal{T}; \mathbb{D}_k)$ as in \Cref{equ:indicator}

    \vspace{0.5em}
    \State \textbf{Step 2: Operator update}

    \State Initialize inner-step counter $s \gets 0$

    \While{
        $L(\mathcal{T}; \mathbb{D}_k) \not< L_k$ (element-wise)
        \textbf{ and } $s < S_{\mathrm{in}}$
        \textbf{ and } $S < S_{\max}$
    }

        \State Update operator parameters:
        \(
        \mathcal{T} \gets \mathcal{T}
        - \eta \nabla_{\mathcal{T}} \mathcal{L}_{\mathbb{D}_k}(\mathcal{T})
        \)
        \Comment{or Adam}

        \State Increment counters:
        $s \gets s + 1$ \emph{(local)},
        $S \gets S + 1$ \emph{(global)}

    \EndWhile

    \State Set $\mathcal{T}_{k+1} \gets \mathcal{T}$

    \If{$S \ge S_{\max}$}
        \State \textbf{stop outer iterations}
    \EndIf

\EndFor

\vspace{2pt}
\State \textbf{Output:} the learned operator $\mathcal{T}^* := \mathcal{T}$

\State (\textbf{Optional warm-up phase}: Apply the aggressive training strategy during the first $K_0$ outer iterations.)

\end{algorithmic}
\end{algorithm}

 \subsection{Related Works}

The proposed learn-to-evolve algorithm is closely related to recent work on
\emph{performative prediction}~\cite{perdomo2020performative}, where predictions
influence the outcomes they aim to model. In this setting, a model with
parameters $\theta$ is evaluated on the induced distribution $\mathcal{D}(\theta)$,
and the loss takes the form
\(
\mathbb{E}_{Z \sim \mathcal{D}(\theta)} \, \ell(Z ; \theta).
\)
A standard algorithmic framework in this field is
\emph{repeated risk minimization}, which iteratively trains models on
distributions induced by the previous iterate, which is similar to the greedy strategy we have proposed. Variants such as lazy updates and
their theoretical guarantees can be found in
\cite{perdomo2020performative, mofakhami2023performative, hardt2023performative}.
The key difference from our setting is that, in performative prediction,
training data are sampled directly from the induced distribution
$\mathcal{D}(\theta)$, whereas in our case the training data are generated by
iteratively applying the current network as described in
\Cref{alg:neurojko}. This difference is substantial: convergence analyses in
performative prediction typically rely on the $\epsilon$-sensitivity assumption
\[
\W_1\!\left(\mathcal{D}(\theta), \mathcal{D}(\theta')\right)
   \leq \varepsilon \, \|\theta - \theta'\|_2,
\]
which does not hold in our case. Because our data are propagated through
the network up to $T$ times, even small perturbations of network parameters
may be amplified, producing significantly different distributions.

Moreover, the idea of boosting training data by repeatedly applying the current network
also appears in \cite{wang2019prnet}, and has been used in the
context of learning proximal operators in Euclidean spaces
\cite{li2022learning}. The perspective of \cite{li2022learning}, however, treats the procedure as an
importance-sampling heuristic that generates data near
near-optimal solutions, and train a universal proximal operator. In contrast, our focus is to design algorithms 
balancing data generation and network training, as well as guaranteeing
the convergence of the training dataset.

\section{Particle-Based Discretization and Implementation}
\label{sec:Implementation}

This section discusses the implementation details of the training of the Neural JKO operator using \Cref{alg:neurojko2stage},
including 
the computation of the loss function,
the generation of the training data
and neural network architecture.
For clarity of presentation, the energy functional parameter $\beta$ is omitted in the following discussion. The method for incorporating $\beta$ as an input to the neural network is described in the numerical results section.

Given an energy functional $\E$, 
the minimization in \eqref{equ:jkoOperator} can be reformulated in terms of transport maps as follows: for $\rho \in \calP(\Omega)$,
\begin{equation}
    \bfT^* = \arg\min_{\bfT:\Omega \to \Omega} \int_{\Omega} |\bfT(x) - x|^2 \,\rmd \rho(x)  
    + 2\Delta t\, \mathcal{E}( \bfT_{\#} \rho ),
   % \label{equ:T}
\end{equation}
where $\bfT^*$ is the optimal map between $\rho$ and $\JKO(\rho)$, satisfying
\((\bfT^*)_{\#}\rho = \JKO(\rho)\).  
The \emph{JKO velocity field} (more precisely, the JKO displacement field) is then defined by
\[
\bfV_{\JKO}(\rho) := \bfT^*- \bfI,
\qquad
\bfV_{\JKO}(\rho) :\Omega \to \bbR^d.
\]
In this work, we use a neural network $\bfNN_{\theta}:\calP(\Omega)  \times \Omega \to \bbR^d$ to parameterize the JKO velocity field,
such that $(\bfI + \bfNN_{\theta})(\rho)= \scrT(\rho)$ in the previous section.

In general, there is no closed-form expression for $\bfV_{\JKO}$.  
However, when the step size $\Delta t$ is sufficiently small, $\bfV_{\JKO}$ approximates
the forward Euler velocity field
$\bfV_{\mathrm{FE}} = -\Delta t\, \nabla \frac{\delta \mathcal{E}}{\delta \rho}(\rho)$ with an $\mathcal{O}(\Delta t)$ error (Theorem 4.0.4 of \cite{ambrosio2008gradient},  as well as section SM1 in the supplement), which can  be used to assess the accuracy of the predicted 
$\bfV_{\JKO}$.

\subsection{Particle-based loss computation}

In practice, the loss function in \Cref{alg:neurojko2stage} 
is the empirical sum over the training dataset $\bbD_k$:
\begin{equation}
    \mathcal{L}_{\bbD_k}(\theta) 
= \frac{1}{|\bbD_k|} \sum_{\rho \in \bbD_k} \ell(\theta;\rho) =  \frac{1}{|\bbD_k|}
 \sum_{\rho \in \bbD_k} \W^2_2\left(  \bfT_{\sharp}(\rho), \rho\right) + 2\Delta t\, \mathcal{E}\left(\bfT_{\sharp}(\rho)\right),
\end{equation}
where $\bfT = (\bfI + \bfNN_{\theta})(\rho): \Omega \mapsto \Omega$ is the parameterized at $\rho$.
We consider the energy functional as the sum of internal energy, external  energy, and interaction energy, given by:
\[
\mathcal{E}(\rho) 
= \int_{\Omega}  \mathcal{U}(\rho(x))\,\rmd x 
+ \int_{\Omega}  \mathcal{V}(x)\rho(x) \,\rmd x 
+ \frac{1}{2} \int_{\Omega \times \Omega} \mathcal{K}(x - y) \rho(x) \rho(y) \,\rmd x\,\rmd y\,.
\]
% and the corresponding Wasserstein gradient flow reads:
% \(
% \partial_t \rho 
% = \nabla \cdot \Big(
% \rho\big[
% \nabla \frac{\delta \mathcal{U}}{\delta \rho} + \mathcal{V} +(\nabla \mathcal{K})*\rho
% \big] \Big).  
% \)
where the external and interaction energy can be straightforwardly approximated using Monte Carlo method. The main computational challenge lies in evaluating the internal energy term, which we address by adopting the approach proposed in \cite{lee2024deep}.

More precisely, for a given $\rho$, we have
\(
    \W^2_2\left(  \bfT_{\sharp}(\rho), \rho\right)  
    = \int_\Omega \| x - \bfT(x) \|^2\, \rho(x) \,\rmd x,
\)
and
\begin{equation}
\begin{aligned}
\mathcal{E}\!\left(\bfT_{\sharp}\rho\right)
&= \int_{\Omega} \mathcal{U}\!\big( \bfT_{\sharp}\rho(x) \big)\,\rmd x
  + \int_{\Omega} \mathcal{V}(x)\,\bfT_{\sharp}\rho(x)\,\rmd x
  + \tfrac12 \!\int_{\Omega \times \Omega} \mathcal{K}(x-y)\,\bfT_{\sharp}\rho(x)\,\bfT_{\sharp}\rho(y)\,\rmd x\,\rmd y \\
&= \int_{\Omega}
    \mathcal{U}\!\left( \frac{\rho(x)}{|\det \nabla_x \bfT(x)|} \right)
    \frac{|\det \nabla_x \bfT(x)|}{\rho(x)} \rho(x)\rmd x
  + \int_{\Omega} \mathcal{V}\!\big(\bfT(x)\big)\rho(x)\,\rmd x \\
  &\qquad +\,\, \tfrac12 \!\int_{\Omega \times \Omega} \mathcal{K}\!\big(\bfT(x)-\bfT(y)\big)\rho(x)\rho(y)\rmd x\rmd y \\
&= \mathbb{E}_{x\sim\rho}
    \left[\mathcal{U}\!\left( \frac{\rho(x)}{|\det \nabla_x \bfT(x)|} \right)
    \frac{|\det \nabla_x \bfT(x)|}{\rho(x)} \right]
  + \mathbb{E}_{x\sim\rho}\left[\mathcal{V}\!\big(\bfT(x)\big)\right] \\
  &\qquad +\,\, \tfrac12~ \!\mathbb{E}_{x\sim\rho,y\sim\rho} \left[\mathcal{K}\!\big(\bfT(x)-\bfT(y)\big)\right]
\end{aligned}
\end{equation}
where we have used the relation  $\bfT_{\sharp}p(\bfT(x)) = p(x)/|\mathrm{det} \nabla_x \bfT(x)| $.
The main computational bottleneck is evaluating $|\det \nabla_x \bfT(x)|$, which incurs cubic time complexity with respect to the state dimension. To mitigate this cost, we approximate the log-determinant using the divergence of the predicted velocity field:
\begin{equation}
   |\det \nabla_x \bfT(x)| \approx 
 e^{\mathrm{div}\,\bfNN_{\theta}(\rho)(x)}. 
 \label{equ:detapprox}
\end{equation}
This approximation corresponds to the first-order Taylor expansion of the exact log-determinant.
The following lemma establishes that the resulting approximation error is of order $\mathcal{O}(\Delta t^2)$, and a formal proof is given in Appendix B. The divergence is computed using finite difference methods.

\begin{lemma}
Let $\bfT: \Omega \to \Omega$ be invertible with $\|\bfT(x) -x\| = \mathcal{O}(\Delta t)$ and $\nabla_x \bfT(x)$ positive definite.  
Setting $\bfV = \bfT - \bfI$, we have
\begin{equation}
    \label{equ:diverrorbounf}
      \big|\,|\det \nabla_x \bfT(x)| - e^{\mathrm{div}\,\bfV(x)}\,\big| 
   \leq \mathcal{O}(\Delta t^2), \quad \forall\,\, x\in \Omega.
\end{equation}
\end{lemma}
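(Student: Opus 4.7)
The plan is to exploit the identity $\det A = \exp(\mathrm{tr}\,\log A)$, valid for matrices $A$ whose spectrum avoids the nonpositive real axis, together with a Taylor expansion of $\log(I+B)$ for small $B$. Writing $\nabla_x \bfT(x) = I + \nabla_x \bfV(x)$, the smoothness of $\bfT$ together with $\|\bfT(x)-x\|=\mathcal{O}(\Delta t)$ implies $\nabla_x \bfV(x) = \mathcal{O}(\Delta t)$ in operator norm (this scaling of the Jacobian is the one point in the hypothesis that must be used implicitly and is the main delicacy, see below). The positive definiteness of $\nabla_x \bfT$ removes the absolute value, so $|\det \nabla_x \bfT| = \det(I+\nabla_x\bfV)$.

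The core computation is the trace expansion of the matrix logarithm. Expanding
\[
\log(I + \nabla_x \bfV(x)) = \nabla_x \bfV(x) - \tfrac{1}{2}(\nabla_x \bfV(x))^{2} + \tfrac{1}{3}(\nabla_x \bfV(x))^{3} - \cdots,
\]
and taking the trace yields
\[
\mathrm{tr}\,\log\bigl(I + \nabla_x \bfV(x)\bigr) = \mathrm{tr}(\nabla_x \bfV(x)) + \mathcal{O}(\Delta t^2) = \mathrm{div}\,\bfV(x) + \mathcal{O}(\Delta t^2),
\]
because every matrix power $(\nabla_x \bfV(x))^{k}$ for $k\ge 2$ has trace bounded by $d\,\|\nabla_x\bfV(x)\|_{\mathrm{op}}^{k} = \mathcal{O}(\Delta t^{k})$, and the series converges since $\|\nabla_x\bfV(x)\|_{\mathrm{op}}<1$ for $\Delta t$ small. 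Exponentiating,
\[
\det \nabla_x \bfT(x) = \exp\bigl(\mathrm{div}\,\bfV(x) + r(x)\bigr),\qquad r(x) = \mathcal{O}(\Delta t^2).
\]

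The final step converts this multiplicative remainder into the additive bound in \eqref{equ:diverrorbounf}. With $a(x):=\mathrm{div}\,\bfV(x) = \mathcal{O}(\Delta t)$,
\[
\bigl|\det \nabla_x \bfT(x) - e^{\mathrm{div}\,\bfV(x)}\bigr| = e^{a(x)} \bigl|e^{r(x)} - 1\bigr| \le e^{a(x)+|r(x)|}\,|r(x)|,
\]
and the prefactor $e^{a(x)+|r(x)|}$ is uniformly bounded on $\Omega$ for $\Delta t$ small, yielding the desired $\mathcal{O}(\Delta t^2)$ bound.

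The main obstacle is not any of the above algebra but rather the passage from the stated hypothesis $\|\bfT(x)-x\|=\mathcal{O}(\Delta t)$ to the unstated, but needed, bound $\|\nabla_x \bfV(x)\|_{\mathrm{op}}=\mathcal{O}(\Delta t)$. In the JKO regime this holds because $\bfT \approx \bfI + \Delta t\,\nabla \delta\E/\delta\rho$ from \eqref{equ:OT}, so one should either tacitly assume the additional regularity $\|\nabla_x\bfV\|_{\mathrm{op}}=\mathcal{O}(\Delta t)$ or state the lemma under a $C^1$-small perturbation hypothesis; once this is made explicit, the rest of the argument reduces to the bounded Taylor expansion above.
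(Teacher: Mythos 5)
Your proof is correct, and it takes a genuinely different route from the paper. The paper sandwiches $\log\det A$ between $\operatorname{tr}(I - A^{-1})$ and $\operatorname{tr}(A - I)$, a pair of inequalities valid for positive definite $A = \nabla_x\bfT$, and then Neumann-expands $A^{-1} = I - C + C^2 - \cdots$ with $C = \nabla_x\bfV$ to show the lower bound also equals $\mathrm{div}\,\bfV + \mathcal{O}(\Delta t^2)$. You instead go directly through $\det A = \exp(\mathrm{tr}\log A)$ and the Mercator expansion $\log(I+C) = C - \tfrac12 C^2 + \cdots$, taking the trace term by term. The two arguments are essentially the same length; yours is a little more self-contained (it does not invoke the log-determinant sandwich as a known fact) and makes the final exponentiation step explicit, whereas the paper compresses that into ``exponentiating yields the claim.'' Both routes hinge on the same unstated regularity upgrade from $\|\bfT(x)-x\|=\mathcal{O}(\Delta t)$ to $\|\nabla_x\bfV(x)\|_{\mathrm{op}}=\mathcal{O}(\Delta t)$, which the paper also only acknowledges in passing (``under mild regularity''); you flag this point correctly, and it is the one place where the lemma as stated is slightly weaker than what the proof actually requires.
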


\medskip
Finally, we approximate the integrals using a particle method, i.e., by Monte Carlo estimation over sample points $\{x_i\}_{i=1}^m \sim \rho$:
\begin{equation}
\begin{aligned}
       \ell(\theta; \rho) 
    \approx \frac{1}{m} \sum_{i=1}^m
\|x_i-\bfT(x_i)\|^2 
+ 2\Delta t \, \bigg[ &\mathcal{U}\!\left( \frac{\rho(x_i)}{|\det \nabla_x \bfT(x_i)|} \right) 
\frac{|\det \nabla_x \bfT(x_i)|}{\rho(x_i)} \\
&+ \mathcal{V}(\bfT(x_i)) 
+ \frac{1}{m} \sum_{i=1}^m \mathcal{K}(\bfT(x_i), \bfT(x_j)) 
\bigg].
\end{aligned}
\label{equ:discreteloss}
\end{equation}
For a more detailed discussion of the loss function, we refer the reader to \cite{lee2024deep}.

\subsection{Density generation}

Following \Cref{alg:neurojko2stage},
in each outer iteration $k$,
we sample a batch of initial density–parameter pairs, and
generate the dataset $\bbD_k$ as in \Cref{equ:batchdata}.

For each $\rho \in \bbD_k$,  
we need both sample points $x_i \sim \rho$ and the corresponding density values $\rho(x_i)$.  
When $t=0$, $\rho^0$ is given explicitly and these are computed directly.  
For later times, we use the fact that if $\{ x_i\}$ are i.i.d. sample points from $\rho$, then $\{ \bfT(x_i)\}$ are i.i.d. from $\bfT_{\sharp}\rho$, with updated densities given by
\begin{equation}
    \label{equ:densityupdate}
    \rho^{t+1}(\bfT(x_i)) = \frac{\rho^{t}(x_i)}{|\det \nabla_x \bfT(x_i)|},
    \qquad
    \text{with}\quad
    \bfT = (\bfI + \bfNN_{\theta})(\rho^t): \Omega \mapsto \Omega.
\end{equation}
The complete algorithm to generate the training dataset is given in \Cref{algo:datageneration}.

\begin{algorithm}[t]
\caption{Generation of densities in $\bbD_k$}
\label{algo:datageneration}
\begin{algorithmic}[1]
    \Require Number of particles $m$; a batch of initial densities $\{\rho^0\}$; current network $\bfNN_{\theta_k}$
    \For{each initial density $\rho^0$}
        \State Sample $\{x_i^0\}_{i=1}^m \sim \rho^0$ and evaluate $\rho^0(x_i^0)$
        \For{$t = 0, \dots, T-1$}
            \State Forward the NN to obtain the velocity field $\bfV = \bfNN_{\theta_k}(\{x_i^t\}_{i=1}^m)$
            \State Update particle positions: $x_i^{t+1} = x_i^t + \bfV(x_i^t)$
            \State Update densities:
            \[
                \rho^{t+1}(x_i^{t+1}) = \rho^{t}(x_i^t)\, e^{-\operatorname{div}\bfV(x_i^t)}
            \]
        \EndFor
        \State Detach all generated sample points and density values from the computation graph   
    \EndFor     
\State \textbf{Output:} the training dataset $\bbD_k$ 
\end{algorithmic}
\end{algorithm}

\subsection{Neural Network Architecture}

The network architecture is a key factor in achieving accurate solutions.  
Since each density is represented by random sample points, the NN must take a set of sample points as input and output the corresponding velocity field values.
The permutation invariant nature of the sample points is challenging for traditional  architectures used in operator learning, such as deepONet\cite{lu2021learning} and FNO\cite{li2023fourier}.
In this work, we adopt the \emph{Transformer}~\cite{vaswani2017attention}, which has shown outstanding performance in diverse applications~\cite{liu2024prose,yang2023context}. Our design is inspired by \cite{huang2024unsupervised}, which trains  a transformer-based mean field game (MFG) solution operator.

Given a density $\rho\in\mathcal{P}(\mathbb{R}^d)$, we sample $m$ points $\{x_i \in \mathbb{R}^d\}_{i=1}^m$ and concatenate them into a matrix $X \in \mathbb{R}^{m \times d}$, serving as the transformer input, i.e., the prompts.
The network outputs $\{v_i\}_{i=1}^m$, representing the velocity field (more precisely, the displacement field) at each sample location.  
In the encoder part, we adopt \emph{self-attention} layers to capture nonlocal interactions among the sampled points, allowing the model to encode the global geometric and density correlations of $\rho$. 
In the decoder part,
we incorporate \emph{cross-attention} layers, which enable the query location to align with the encoded density features and 
improve the computation of the divergence operator.

This NN design has several other benefits.
First, $\bfV_{\JKO}(\rho)$ 
depends on the global structure of $\rho$,
making simple architectures such as pointwise MLPs insufficient.
On the contrary, the transformer’s core component, the multi-head attention (MHA) mechanism, naturally captures nonlocal interactions.  
It can be interpreted as trainable kernel integrators~\cite{shen2025understanding}, aligning well with the structure of the JKO operator.  
More specifically, after an initial pointwise MLP lifts $X$ into a $h$-dimensional  embedding space, MHA computes interactions between queries $Q = XW^{Q} \in \mathbb{R}^{m \times h}$, keys $K = XW^{K} \in \mathbb{R}^{m \times h}$, and values $V = XW^{V} \in \mathbb{R}^{m \times h}$:
\[
\mathrm{Attn}(Q, K, V) = \operatorname{softmax}\!\left(\frac{QK^{\top}}{\sqrt{h}}\right)V,
\]
where $W^Q, W^K, W^V \in \mathbb{R}^{h \times h}$ are learnable parameters.  
This formulation allows the network to dynamically weight the relevance of each sample point, capturing both long-range dependencies and fine-grained local structures.  
Multi-head attention extends this capability by learning diverse interaction patterns across parallel heads.

Secondly, 
our loss involves computing $\mathrm{div} \bfV_{\JKO}$ using finite differences, which requires velocity evaluations at perturbed points $\{x_i \pm \epsilon\}$.  
Using self-attention here can suffer from distribution shift, since the query points differ from the original sample points.  
Cross-attention addresses this: to compute the velocity at an arbitrary $x \in \Omega$, the query embedding $q = xW^{Q} \in \mathbb{R}^{1 \times l}$ interacts with keys and values derived from $X$.  
This ensures accurate predictions even for out-of-sample query locations. 

Additional advantages of applying Transformers include:
(i) their natural ability to handle variable-length inputs, allowing flexible discretizations of $\rho$ without retraining;  
(ii) sampling invariance when $m$ is large enough, as shown in \cite{huang2024unsupervised}, so that the learned discrete operator converges to the continuous one.

An overview of the architecture is shown in \Cref{fig:transformer_structure}.
For all experiments in the numerical section, the embedding dimension $h$ is set to $1024$.
The number of sample points $m$ we draw from each density is fixed as $1024$, except for the KL example where $m$ is set to $512$ for both the initial and target distributions.

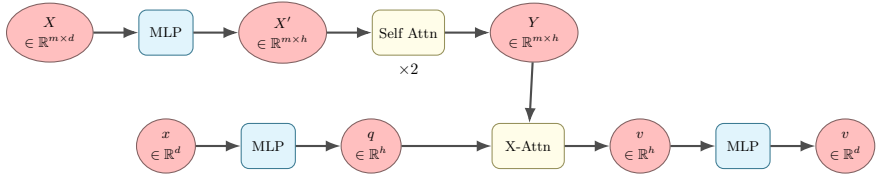
\begin{figure}
    \centering
\begin{tikzpicture}[scale=0.6, transform shape,
                    node distance=1.4cm and 1cm]

% Upper row
\node[nodeStyle] (X) {$X$\\[2pt]$\in \mathbb{R}^{m \times d}$};
\node[op, right=of X] (MLP1) {MLP};
\node[nodeStyle, right=of MLP1] (X0) {$X'$\\[2pt]$\in \mathbb{R}^{m \times h}$};
\node[op2, right=of X0] (MHT) {Self Attn};
\node[below=2pt of MHT] {\(\times 2\)};
\node[nodeStyle, right=of MHT] (Y) {$Y$\\[2pt]$\in \mathbb{R}^{m \times h}$};

% Lower row
\node[nodeStyle, below=of MLP1] (x){$x$\\[2pt]$\in \mathbb{R}^{ d}$};
\node[op, right=of x] (MLP2) {MLP};
\node[nodeStyle, right=of MLP2] (q) {$q$\\[2pt]$\in \mathbb{R}^{ h}$};

% Cross
\node[op2, right=2cm of q] (Cross) {X-Attn};
\node[nodeStyle, right=of Cross] (v1) {$v$\\[2pt]$\in \mathbb{R}^{ h}$};
\node[op, right=of v1] (MLP3) {MLP};
\node[nodeStyle, right=of MLP3] (v){$v$\\[2pt]$\in \mathbb{R}^{ d}$};

% Arrows upper path
\draw[arrow] (X) -- (MLP1);
\draw[arrow] (MLP1) -- (X0);
\draw[arrow] (X0) -- (MHT);
\draw[arrow] (MHT) -- (Y);
\draw[arrow] (Y) -- (Cross);

% Arrows lower path
\draw[arrow] (x) -- (MLP2);
\draw[arrow] (MLP2) -- (q);
\draw[arrow] (q) -- (Cross);

% Output path
\draw[arrow] (Cross) -- (v1);
\draw[arrow] (v1) -- (MLP3);
\draw[arrow] (MLP3) -- (v);

% Output label
%\node[below=1.2cm of MLP3, font=\bfseries\color{blue!50!black}]  at ($(MLP3)!0.5!(v)$) {Output: \(v = NN_{\theta}(\rho)(x)\)};

\end{tikzpicture}

\caption{
Illustration of the proposed NN architecture where the output \(v = NN_{\theta}(\rho)(x)\) denotes the velocity at a query point $x$.  
Here, a pointwise Multi-Layer Perceptron (MLP) is applied to lift the sample point from the physical dimension $d$ to the embedding dimension $h$, or vice versa.
``Self Attn'' and ``X-Attn'' refer to self-attention and cross-attention blocks respectively.  
The upper panel represents the encoder, in which the density $\rho$, represented by the sampled points $\{x_i\}_{i=1}^m$, serves as the prompts. 
One may concatenate the density values or the energy function parameters to each of the sample points. 
The lower panel depicts the decoder. 
In the KL divergence case, the target distribution $\rho_{\text{target}}$, represented by $\{y_i\}_{i=1}^{m}$, is introduced as an additional input; it is processed by a separate MLP to lift its dimension before concatenation with $X'$.
}

    \label{fig:transformer_structure}
\end{figure}

\section{Numerical Examples}
\label{sec:numerical}

We evaluate the proposed scheme on three representative gradient flow models:

\begin{enumerate}
    \item The \emph{aggregation equation} with attraction-repulsion kernel.  The interaction energy is parameterized, leading to qualitatively different equilibrium types. Accordingly, we extend the operator input to include both the density and the corresponding parameter pair, enabling a single learned operator to predict the evolution and equilibrium type across a family of aggregation dynamics.

\item The \emph{porous medium} equation.  We train separate solution operators for each choice of parameters in the energy function, such as the spatial dimension and the nonlinearity exponent. In special cases, the availability of exact Barenblatt solutions, which correspond to exact continuous-time Wasserstein gradient flow solutions of the porous medium energy, 
enables a direct quantitative comparison between the predicted flow and the analytic solution, thereby providing a precise assessment of the accuracy of the learned operators.

\item The \emph{Fokker-Planck} equation. The KL energy can be viewed as parameterized by the target distribution. Here, the target is represented through its sample points and incorporated into the operator input, allowing the learned model to infer the target distribution and predict the KL gradient flow between different initial and target densities.
\end{enumerate}
Together, these experiments demonstrate that the proposed method learns a flexible solution operator capable of handling parameterized energies and data-dependent targets, rather than being restricted to a fixed gradient flow.
The numerical results demonstrate that our method is not only computationally efficient but also achieves high accuracy and exhibits strong generalization. 

Furthermore, we show additional experiments on the aggregation problem in a simplified setting. These experiments include visualizations of the evolution of the training data, which offer further insight into the behavior of the proposed algorithm. In addition, we compare the generalization performance of the solution operator trained using our method with that of a baseline model trained on a fixed dataset, and demonstrate that our approach achieves consistently better generalization performance.

Training for all models follows the procedure described in \Cref{alg:neurojko2stage}. 
We apply a warm-up phase of $K_0 = 200$ outer iterations (line~21) and set the maximum number of inner steps per outer iteration to $S_{\mathrm{in}} = 50$, unless stated otherwise. 
Instead of fixing the number of outer iterations, the training process is controlled by fixing the maximum total number of inner steps $S_{\max}$ across all outer iterations, which is set to $20{,}000$, $10{,}000$, and $10{,}000$ for the three experiments accordingly. At each outer iteration, a new batch of initial densities is drawn from the initial family $\mathbb{M}_0$ with newly generated sample points, as described in lines 3–4 of \Cref{alg:neurojko2stage}. The batch size $B$ depends on the specific problem and will be specified in the corresponding experiment section.

For the aggregation problem, we use Adam with a weight decay of $10^{-4}$ and a cosine-annealing learning-rate schedule ranging from $10^{-4}$ to $10^{-5}$.
For the porous-medium and Fokker–Planck equations, a custom learning-rate scheduler is employed with four phases: (i) linear decay from $10^{-4}$ to $10^{-5}$ over the first $3000$ inner updates, (ii) constant $10^{-5}$ until inner update $6000$, (iii) linear decay to $10^{-6}$ by $10000$ inner update, (iv) constant $10^{-6}$ until inner update $20000$. 
All of our numerical experiments are conducted in NVIDIA H100.
\subsection{Aggregation Equation}
Consider the equation: 
\begin{equation}
    \label{eq:agg-cont}
    \partial_t \rho + \nabla \cdot (\rho v) = 0, 
    \qquad v(t,x) = -\nabla \mathcal{K} * \rho(t,\cdot)(x)\,, 
\end{equation}
where $\mathcal{K}$ is an attraction–repulsion interaction kernel defined through: %\lw{Xue, can you write down the kernel instead of force there?}
\begin{align} \label{kernel}
  \mathcal{K}(r) = \frac{ |r|^{q+1}}{q+1} - \frac{|r|^{p+1}}{p+1}, \quad p < q\,. 
\end{align}
Here $p$ controls short-range repulsion and $q$ governs long-range attraction. The associated energy functional is:
\begin{equation}
    \label{equ:interactionloss}
    \mathcal{E}_{\beta}(\rho) 
= \frac{1}{2} \iint_{\Omega \times \Omega} \mathcal{K}(x-y)\,\rho(x)\,\rho(y) \,\mathrm{d}x \,\mathrm{d}y\,.
\end{equation}
This model arises in various contexts, including swarming, self-assembly, and vortex dynamics~\cite{topaz2008model,mogilner2003mutual}.

% For sufficiently small step size $\Delta t$, the JKO operator satisfies
% \(
% \bfV_{\mathrm{JKO}}(\rho) \approx \bfV_{\mathrm{FE}}(\rho) 
% = -\Delta t \,\nabla_x \frac{\partial \mathcal{E}}{\partial \rho} 
% = \nabla \mathcal{K} * \rho ,
% \)
% where the right-hand side is a linear operator.
For given pairs $(p,q)$, the equilibrium of \eqref{eq:agg-cont} takes the form of a ring centered at the same location as the initial density, i.e. $\int x \rho^{in}(x) \mathrm{d} x $ , with radius $r_0$ determined by
\begin{equation}
\int_{0}^{\pi/2} \mathcal{F}\!\bigl(2 r_{0}\sin\theta\bigr)\,\sin\theta \,\mathrm{d}\theta = 0\,,
\qquad
\text{where}   \quad  \mathcal{F}(r) = -\nabla \mathcal{K}(r).
\label{equ:radius}
\vspace{-0.5em}
\end{equation}

When discretizing \eqref{eq:agg-cont} with $m$ particles $\{x_i(t)\}_{i=1}^{m}$, the location of the particles evolves according to the  $m$ -body ODE system:
\begin{equation}
  \frac{\mathrm{d}x_i}{\mathrm{d}t}
  = \frac{1}{m} \sum_{\substack{k=1 \\ k\neq i}}^{m} 
    \mathcal{F}(|x_i-x_k|) \,\frac{x_i-x_k}{|x_i-x_k|},
  \qquad i=1,\dots,m .
  \label{equ:particle-ode}
\end{equation}
It is easy to see that  the center of  the particle is conserved
by summing~\eqref{equ:particle-ode} over $i$ i.e.,
\(
\frac{\mathrm{d}}{\mathrm{d}t} \left( \frac1m \sum_{i=1}^m x_i \right) = 0.
\) 
We define the \emph{potential ring solution} of particles for any given $(p,q)$ pair as the ring whose center coincides with the particle center of mass and whose radius satisfies \eqref{equ:radius}. 
Since the dynamics and equilibrium of \eqref{eq:agg-cont} are translation-invariant, recentering the initial density at the origin does not affect the resulting evolution or steady state.
For numerical stability, we therefore recenter all input densities to have zero mean before applying the trained neural JKO operator during evaluation.

In the experiments below, we train two different JKO operators: 
\begin{itemize}
\item \textbf{Single Initial Density and $(p,q)$ Pair:}
A JKO operator is trained for a fixed energy functional in \Cref{equ:interactionloss} with parameters $p = 0.5$ and $q = 3$. In this setting, only a single initial density is provided, chosen specifically to illustrate how the learn-to-evolve framework can generate an evolving training dataset.

\item \textbf{Generalizable JKO Operator:}
A JKO operator is trained for the energy functional in \Cref{equ:interactionloss} using various $(p,q)$ pairs and a collection of initial densities. Its performance is then evaluated on unseen $(p,q)$ pairs and initial densities to assess the operator’s generalization ability.

\end{itemize}
Both operators are trained with a fixed time step $\Delta t=1$.

\subsubsection{Single Initial Density and $(p, q)$ Pair}
For $p = 0.5$ and $q = 3$ in \Cref{equ:interactionloss}, the equilibrium forms a stable ring centered at the origin with radius $r_0 = 0.58$. A single initial density, $\rho^0$, is fixed as the uniform distribution on $[-1,1]^2$.
The JKO operator is trained to compute the first five JKO steps starting from $\rho^0$, and the corresponding ideal training dataset is denoted by
\[
\mathbb{D}^* = \{(\rho^{*,t})_{t=0}^4\,:\, \rho^{*,0}=\rho^{0}, 
\quad
\rho^{*,t+1} = \JKO(\rho^{*,t})
\}\,.
\]

In this experiment, a simplified setting is adopted by disabling the warm-up phase, setting $K_0 = 0$ in \Cref{alg:neurojko2stage}. The total number of inner updates is set to $E = 1000$, which leads to $ 717$  total outer iterations in this case. Since the initial density family contains only the fixed $\rho^0$, the resampling step in line~3 of \Cref{alg:neurojko2stage} is skipped. Instead, sample points of $\rho^0$ are generated once before training and remain fixed throughout. A simple Adam optimizer with a fixed learning rate of $10^{-4}$ is applied.
At outer iteration $k$, the training dataset is denoted by
\[
\mathbb{D}_k = \{
(\rho^{k,0})_{t=0}^4\,:\, \rho^{k,0}=\rho^{0},
\rho^{k,t+1} = \scrT(\rho^{k,t})_{\sharp} \rho^{k,t}
\}.
\]
Accordingly, $\rho^{k,t}$ is expected to converge to the JKO solution $\rho^{*,t}$ as $k \to \infty$.

The evolution of $\mathbb{D}_k$ is visualized in the first five columns of Figure~\ref{fig:train-data}, where $\rho^{k,0}$ remains unchanged. At $k = 0$, the randomly initialized network maps each density to a shifted configuration (first column). This shift becomes more pronounced in the second column. By $k = 2$, the initial density begins to contract toward the center, with $\rho^{2,1}$ and $\rho^{2,2}$ moving closer to the true solutions. By $k = 717$, the initial density progressively converges toward the equilibrium.

To illustrate this process more quantitatively, the last column of Figure~\ref{fig:train-data} displays the logarithm of the normalized loss:
\begin{equation}
\label{equ:losskt}
\ell(\rho^{k,t}) 
:= \ell(\scrT_k; \rho^{k,t}) - \min_{0 \leq k \leq717} \ell(\scrT_k; \rho^{k,t}) + 10^{-5},
\quad
0\leq t\leq 4
\end{equation}
The horizontal axis represents the cumulative number of inner iterations, with a total budget of $1000$. The green dots denote the logarithm of $\ell(\rho^{k,t})$ at each inner iteration, while the red dots record the values only at each outer iteration $k$.
Due to the nonconvex nature of neural network training, the loss as a function of the number of inner updates typically exhibits oscillatory behavior. In contrast, because new data are generated whenever the cumulative loss defined in \Cref{equ:indicator} decreases, as described in Algorithm~\ref{alg:neurojko2stage}, we have
$\sum_{i=1}^t\ell(\rho^{k+1,i}) <\sum_{i=1}^t\ell(\rho^{k,i})$ for all $t$. Numerically, we observe that each individual loss $\ell(\rho^{k,t})$ decreases nearly monotonically with respect to $k$, as indicated by the red markers.
Consequently, when $\ell(\rho^{k,0})$ converges to the unique minimum of the $0$th JKO subproblem,
$\mathcal{W}_2^2(\rho^{k,0},\rho^{*,1}) + \Delta t\,\mathcal{E}(\rho^{*,1})$,
the resulting density $\rho^{k,1}$ approaches $\rho^{*,1}$ during training.
This improvement is clearly visible in the second row of Figure~\ref{fig:train-data}. As training proceeds, these improvements propagate sequentially:
$\ell(\rho^{k,1})$ converges to its minimum $\ell(\rho^{*,1})$ as $k \to \infty$,
followed by $\rho^{k,2} \to \rho^{*,2}$, and so forth.
By $k = 717$, we observe a clear ring-like structure in $\rho^{717,4}$. The equilibrium state is not reached within the five-step training horizon.

%Note that the actual loss values are negative, and it follows from $\rho^{k,t+1} = \scrT(\rho^{k,t})_{\sharp} \rho^{k,t}$ that \(\ell(\rho^{k,t}) = \mathcal{W}_2^2(\rho^{k,t},\rho^{k,t+1}) + \Delta t\,\mathcal{E}(\rho^{k,t+1})\).

\begin{figure}
  \centering
  \renewcommand{\arraystretch}{1.8}  % 行距
  
  \newcommand{\TimeRow}[1]{%
    \parbox[b][0.7cm][t]{0.06\textwidth}{\centering \small $t = #1$} &
      \includegraphics[width=0.13\textwidth]{fig_exp/aggFIX/train_Epoch_0_#1.png} &
      \includegraphics[width=0.13\textwidth]{fig_exp/aggFIX/train_Epoch_3_#1.png} &
      \includegraphics[width=0.13\textwidth]{fig_exp/aggFIX/train_Epoch_4_#1.png} &
      \includegraphics[width=0.13\textwidth]{fig_exp/aggFIX/train_Epoch_249_#1.png} &
      \includegraphics[width=0.13\textwidth]{fig_exp/aggFIX/train_Epoch_1005_#1.png} &
     \includegraphics[width=0.2\textwidth]{fig_exp/aggFIX/Loss_Step#1_new.png}\\[-2pt]
  }

  %=========== 1 + 5 + 1 列的表格 ===========
  \setlength{\tabcolsep}{2pt}           % 其他列间距
\begin{tabular}{c *{5}{c} @{\hspace{2pt}} c}
      %—— 表头：5 个 k，再加最右列 loss —— 

   &   { \small$k= 0$} &  { \small $k= 1$} &   { \small$k= 2$} &   { \small$k= 195$ }&  { \small $k= 717$} &  { \small $\log\ell(\rho^{k,t})$ vs. $k$(red) }
    \\
      %—— 5 个 t 行 —— 
      \TimeRow{0}
      \TimeRow{1}
      \TimeRow{2}
      \TimeRow{3}
      \TimeRow{4}
  \end{tabular}
 
  \caption{
Visualization of $\mathbb{D}_k$: 
  (Columns 1-5) Evolution of the training dataset $\bbD_k$ at selected training iterations $k =0,1,2,195,717$, displayed in the domain $[-1.1,1.1]^2$. The red circles indicate the equilibrium state.
(Column~6) Normalized individual logarithmic loss $\log \ell(\rho^{k,t})$ versus outer iteration $k$:
green curves show losses over all inner updates, and red markers indicate outer updates 
where the accumulated loss (defined in~\Cref{equ:indicator}) decreases.
           }
  \label{fig:train-data}
\end{figure}

\begin{figure}[htbp]
    \centering
    \setlength{\tabcolsep}{0.5pt} 
    \begin{tabular}{cccccccc}
$t=0$ &  $t=1$ &  $t=2$ &  $t=3$ &  $t=4$ &  $t=5$ &  $t=10$ &  $t=50$\\
        \includegraphics[width=0.12\textwidth]{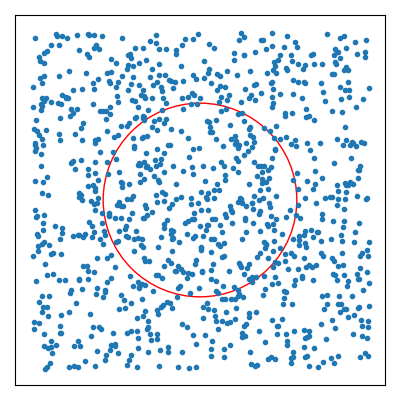} &
        \includegraphics[width=0.12\textwidth]{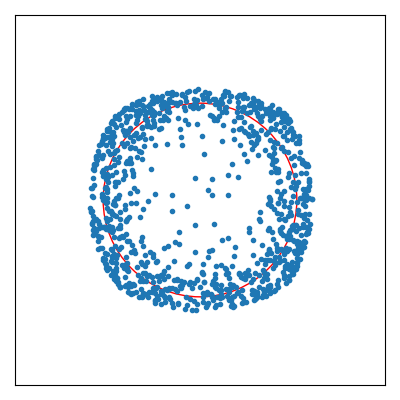} &
        \includegraphics[width=0.12\textwidth]{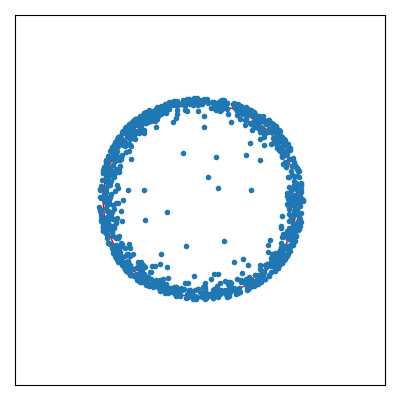} &
        \includegraphics[width=0.12\textwidth]{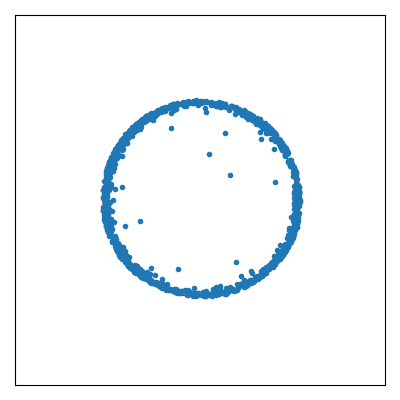} &
        \includegraphics[width=0.12\textwidth]{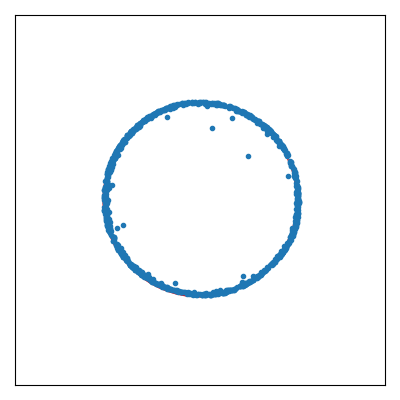} &
        \includegraphics[width=0.12\textwidth]{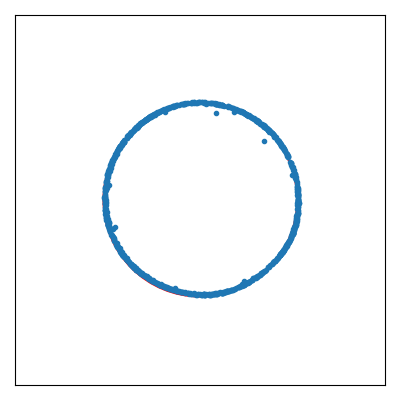} &
        \includegraphics[width=0.12\textwidth]{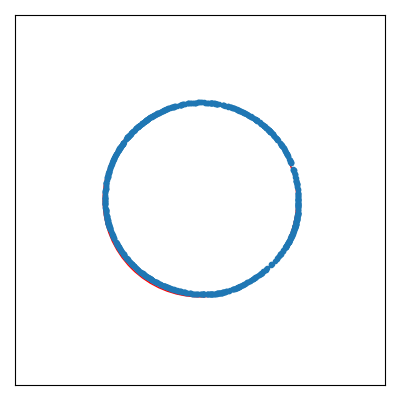} &
        \includegraphics[width=0.12\textwidth]{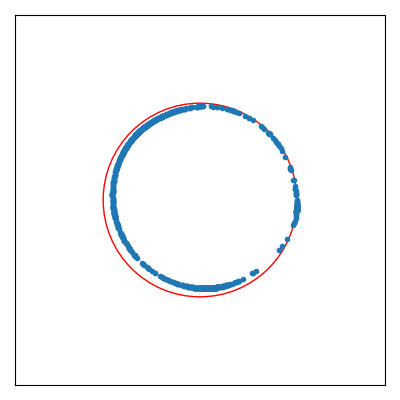} \\
%$-0.3115$ & $-0.3188$ & $-0.3200$ & $-0.3203$ & $-0.3203$ & $-0.3203$ & $-0.3200$ & $-0.3068$ \\

{\footnotesize $-0.3115$} & {\footnotesize $-0.3188$} & {\footnotesize $-0.3200$} & {\footnotesize $-0.3203$} & {\footnotesize $-0.3203$} & {\footnotesize $-0.3203$} & {\footnotesize $-0.3200$} & {\footnotesize $-0.3068$} \\

        \includegraphics[width=0.12\textwidth]{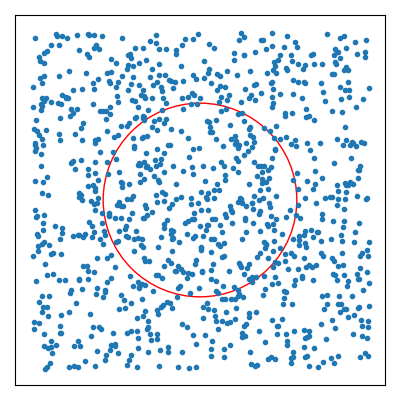} &
        \includegraphics[width=0.12\textwidth]{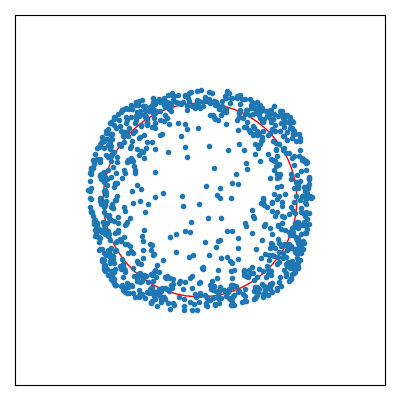} &
        \includegraphics[width=0.12\textwidth]{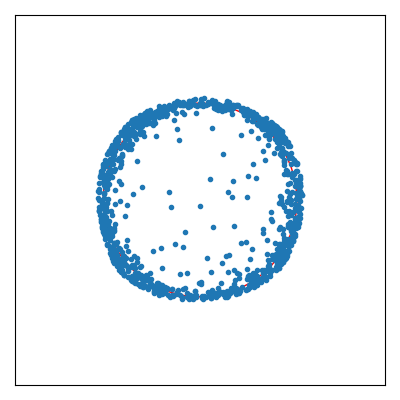} &
        \includegraphics[width=0.12\textwidth]{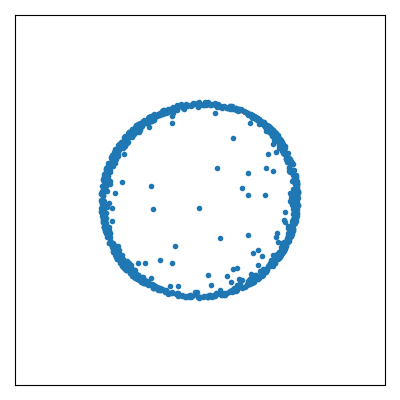} &
        \includegraphics[width=0.12\textwidth]{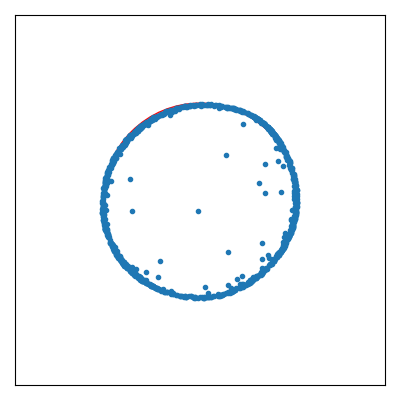} &
        \includegraphics[width=0.12\textwidth]{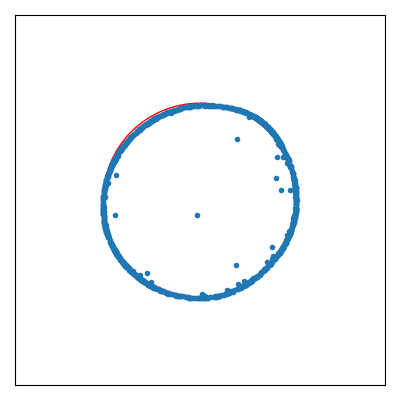} &
        \includegraphics[width=0.12\textwidth]{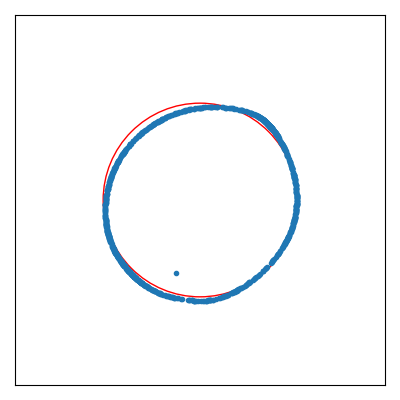} &
        \includegraphics[width=0.12\textwidth]{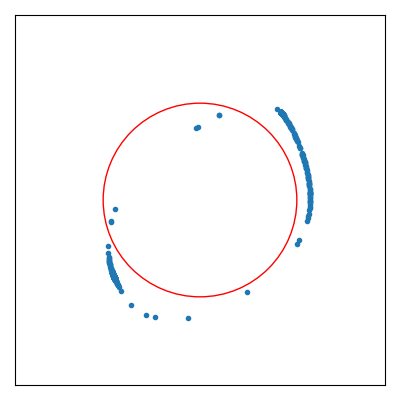} \\
        %$-0.3115$ & $-0.3185$ & $-0.3196$ & $-0.3197$ & $-0.3192$ & $-0.3184$ & $-0.3086$ & $-0.0178$ \\
{\footnotesize $-0.3115$} & {\footnotesize $-0.3185$} & {\footnotesize $-0.3196$} & {\footnotesize $-0.3197$} & {\footnotesize $-0.3192$} & {\footnotesize $-0.3184$} & {\footnotesize $-0.3086$} & {\footnotesize $-0.0178$} \\
\multicolumn{8}{c}{\textbf{CASE (1)}} \\
\hline
\\

        \includegraphics[width=0.12\textwidth]{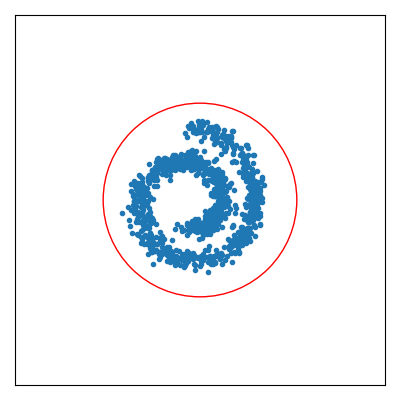} &
        \includegraphics[width=0.12\textwidth]{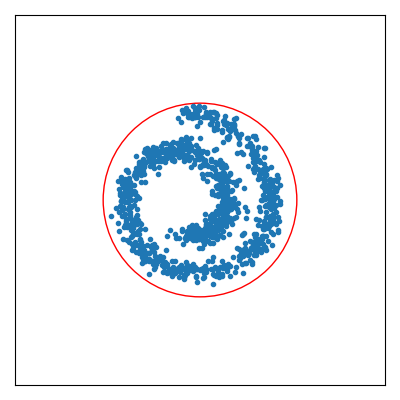} &
        \includegraphics[width=0.12\textwidth]{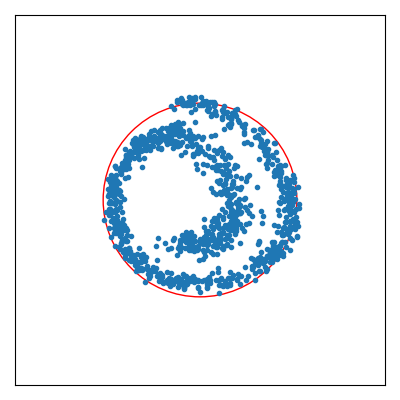} &
        \includegraphics[width=0.12\textwidth]{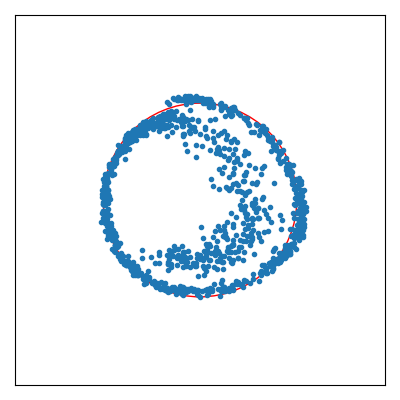} &
        \includegraphics[width=0.12\textwidth]{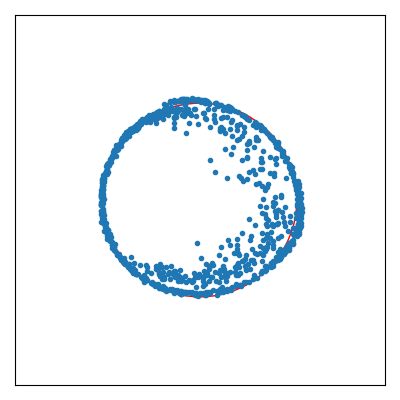} &
        \includegraphics[width=0.12\textwidth]{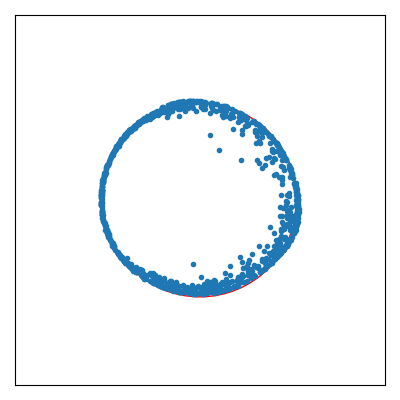} &
        \includegraphics[width=0.12\textwidth]{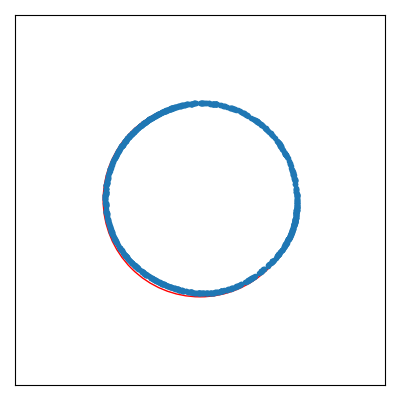} &
        \includegraphics[width=0.12\textwidth]{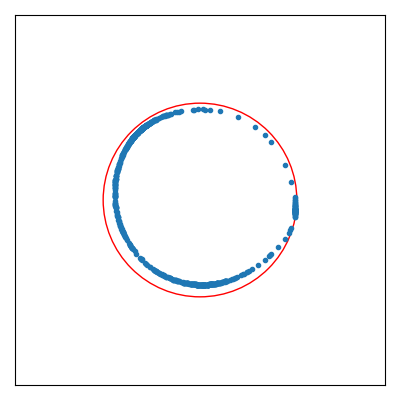} \\
      %  $-0.2224$ & $-0.2675$ & $-0.2999$ & $-0.3138$ & $-0.3184$ & $-0.3194$ & $-0.3189$ & $-0.2972$ \\
{\footnotesize $-0.2224$} & {\footnotesize $-0.2675$} & {\footnotesize $-0.2999$} & {\footnotesize $-0.3138$} & {\footnotesize $-0.3184$} & {\footnotesize $-0.3194$} & {\footnotesize $-0.3189$} & {\footnotesize $-0.2972$} \\

        \includegraphics[width=0.12\textwidth]{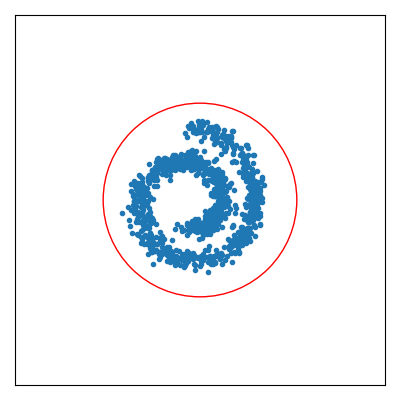} &
        \includegraphics[width=0.12\textwidth]{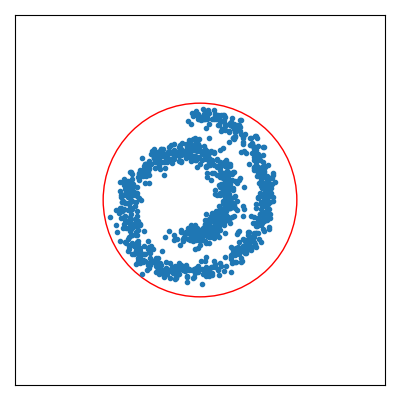} &
        \includegraphics[width=0.12\textwidth]{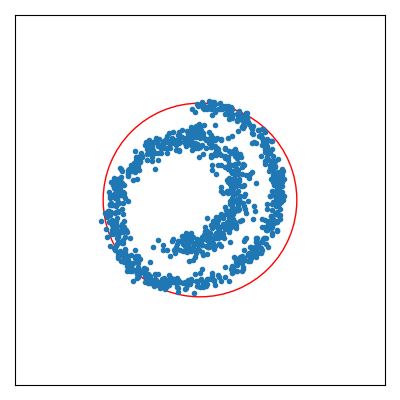} &
        \includegraphics[width=0.12\textwidth]{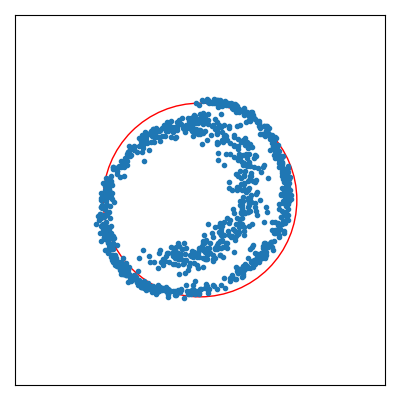} &
        \includegraphics[width=0.12\textwidth]{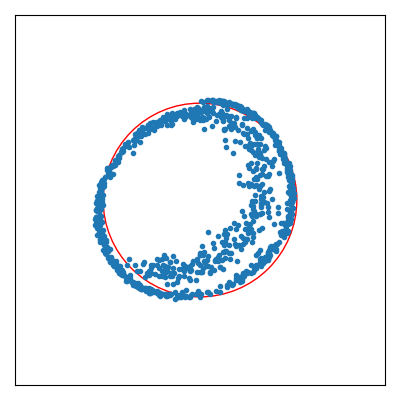} &
        \includegraphics[width=0.12\textwidth]{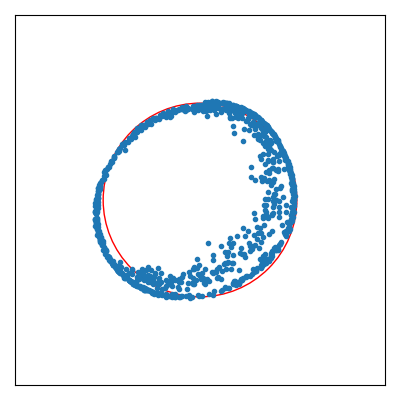} &
        \includegraphics[width=0.12\textwidth]{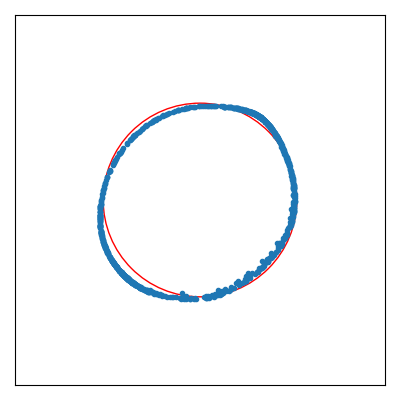} &
        \includegraphics[width=0.12\textwidth]{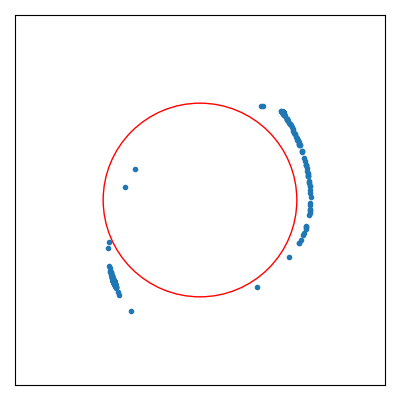} \\
  %     $-0.2224$ & $-0.2657$ & $-0.2950$ & $-0.3084$ & $-0.3125$ & $-0.3122$ & $-0.2853$ & $0.0308$ \\
{\footnotesize $-0.2224$} & {\footnotesize $-0.2657$} & {\footnotesize $-0.2950$} & {\footnotesize $-0.3084$} & {\footnotesize $-0.3125$} & {\footnotesize $-0.3122$} & {\footnotesize $-0.2853$} & {\footnotesize $0.0308$} \\

\multicolumn{8}{c}{\textbf{CASE (2)}} \\
\hline
\\

        \includegraphics[width=0.12\textwidth]{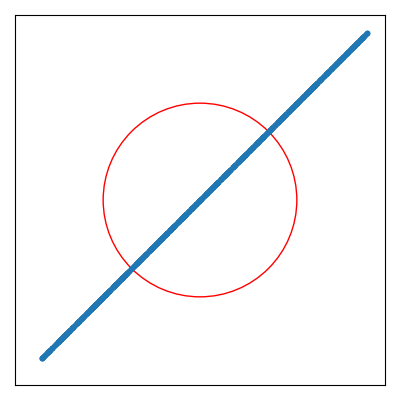} &
        \includegraphics[width=0.12\textwidth]{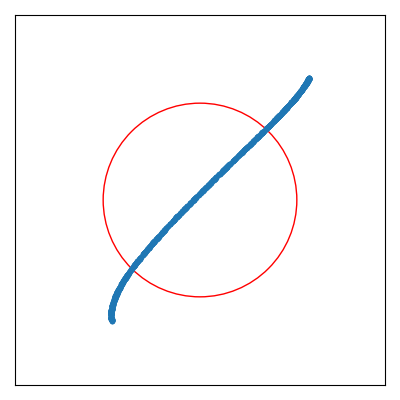} &
        \includegraphics[width=0.12\textwidth]{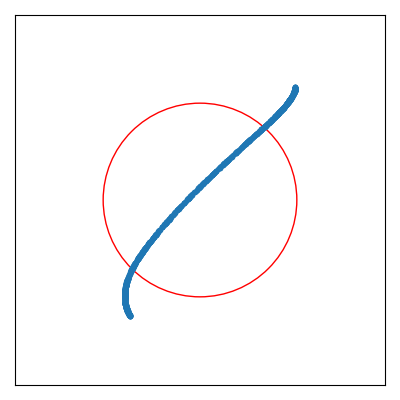} &
        \includegraphics[width=0.12\textwidth]{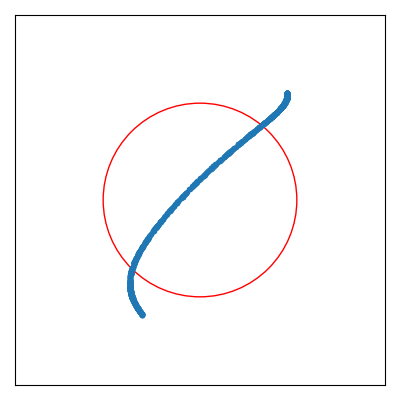} &
        \includegraphics[width=0.12\textwidth]{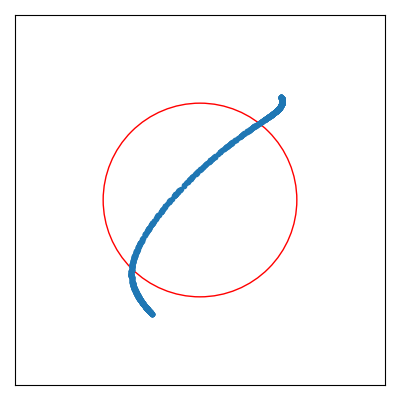} &
        \includegraphics[width=0.12\textwidth]{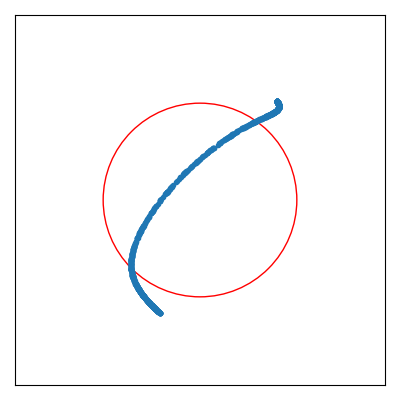} &
        \includegraphics[width=0.12\textwidth]{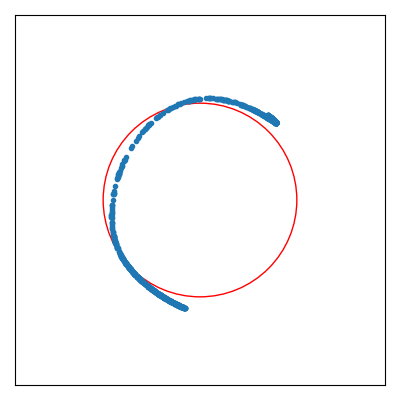} &
        \includegraphics[width=0.12\textwidth]{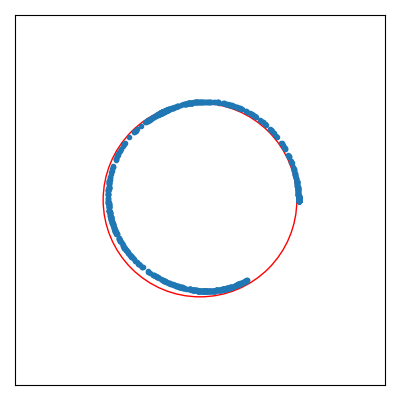} \\
        
%$-0.0344$ & $-0.0749$ & $-0.0906$ & $-0.1122$ & $-0.1330$ & $-0.1517$ & $-0.2159$ & $-0.3160$ \\
{\footnotesize $-0.0344$} & {\footnotesize $-0.0749$} & {\footnotesize $-0.0906$} & {\footnotesize $-0.1122$} & {\footnotesize $-0.1330$} & {\footnotesize $-0.1517$} & {\footnotesize $-0.2159$} & {\footnotesize $-0.3160$} \\

        \includegraphics[width=0.12\textwidth]{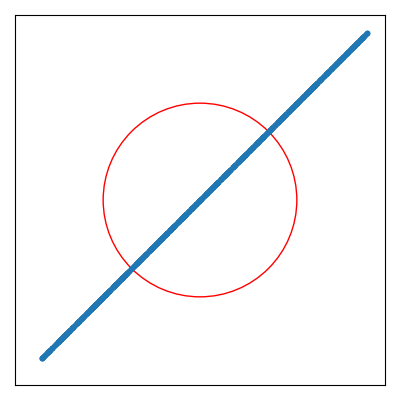} &
        \includegraphics[width=0.12\textwidth]{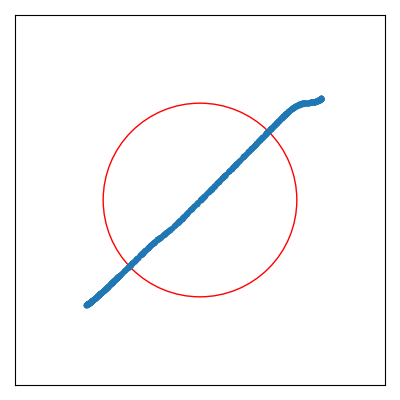} &
        \includegraphics[width=0.12\textwidth]{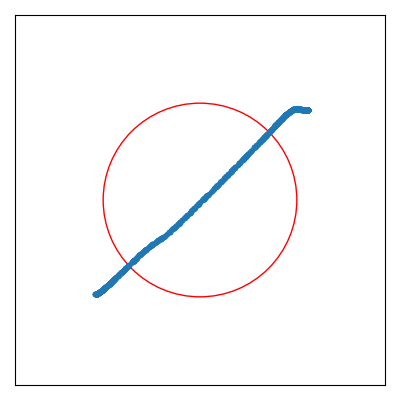} &
        \includegraphics[width=0.12\textwidth]{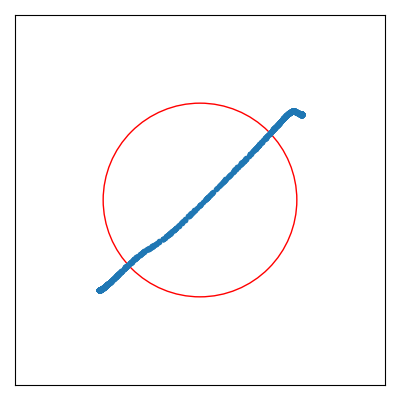} &
        \includegraphics[width=0.12\textwidth]{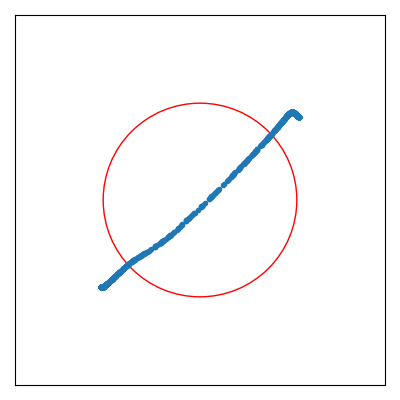} &
        \includegraphics[width=0.12\textwidth]{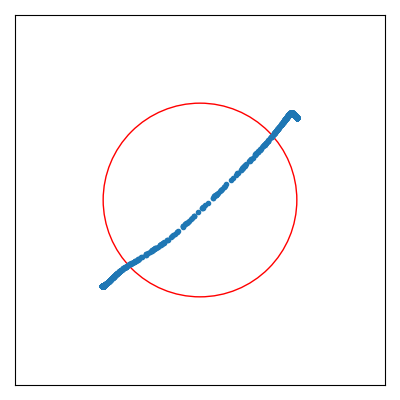} &
        \includegraphics[width=0.12\textwidth]{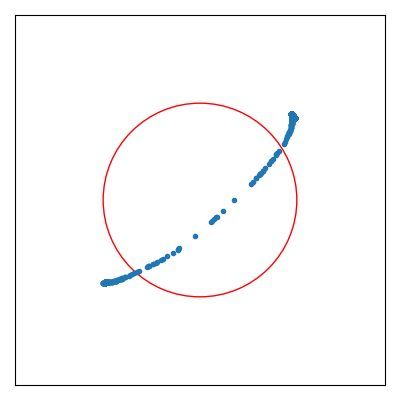} &
        \includegraphics[width=0.12\textwidth]{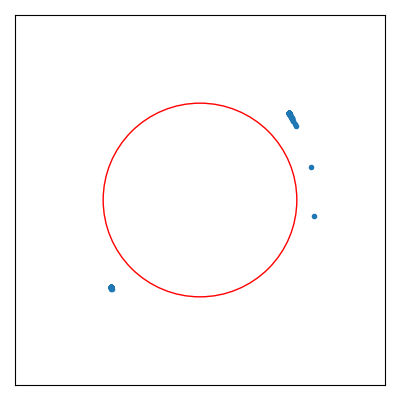} \\      
%$-0.0344$ & $0.0145$ & $-0.0016$ & $0.0083$ & $0.0285$ & $0.0516$ & $0.1332$ & $0.1349$ \\
 {\footnotesize $-0.0344$} & {\footnotesize $0.0145$} & {\footnotesize $-0.0016$} & {\footnotesize $0.0083$} & {\footnotesize $0.0285$} & {\footnotesize $0.0516$} & {\footnotesize $0.1332$} & {\footnotesize $0.1349$} \\

\multicolumn{8}{c}{\textbf{CASE (3)}} \\
\hline
      
    \end{tabular}
    \caption{
    Comparison of predicted solutions of the aggregation equation ($p=0.5$, $q=3$) for three test cases.
Each column shows results at successive time steps $t$, with the corresponding energy $\mathcal{E}(\rho^t)$ displayed below (smaller values indicate closer proximity to equilibrium).
For each case, the first row shows our model’s predictions and the second row those of the baseline.
Red circles mark the equilibrium state.
}
\label{fig:fixTest}
\end{figure}

\begin{figure}[htbp]
    \centering
    \setlength{\tabcolsep}{0.5pt} 
    \begin{tabular}{cccccccc}
   
        \includegraphics[width=0.12\textwidth]{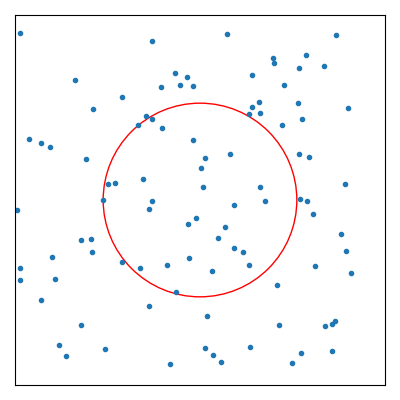} &
        \includegraphics[width=0.12\textwidth]{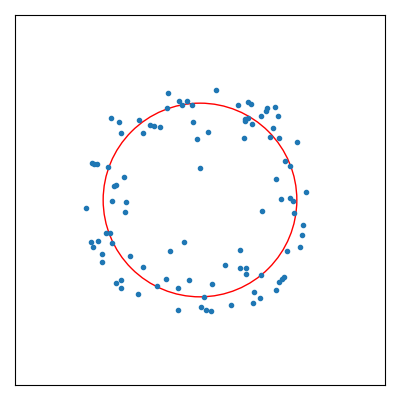} &
        \includegraphics[width=0.12\textwidth]{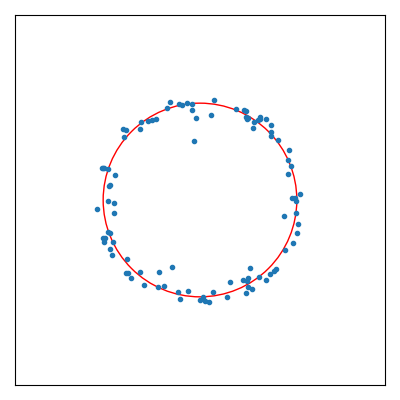} &
        \includegraphics[width=0.12\textwidth]{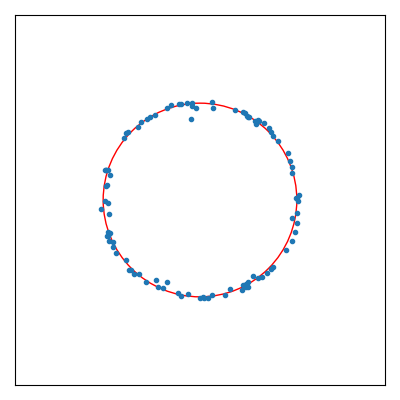} &
        \includegraphics[width=0.12\textwidth]{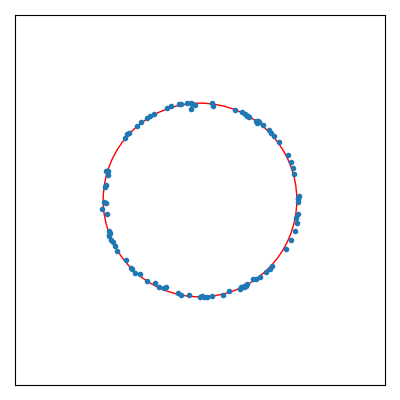} &
        \includegraphics[width=0.12\textwidth]{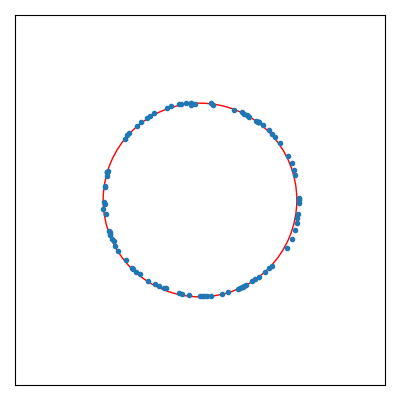} &
        \includegraphics[width=0.12\textwidth]{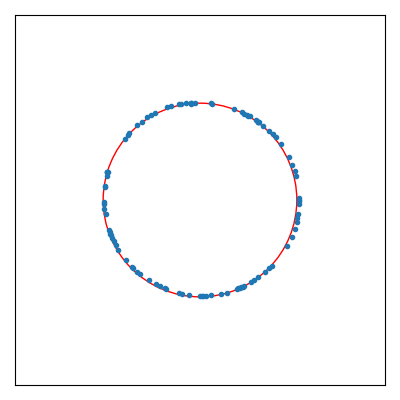} &
        \includegraphics[width=0.12\textwidth]{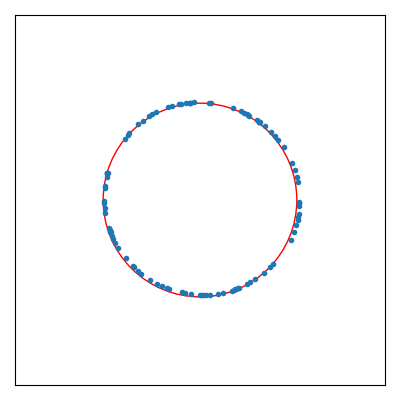} \\

\includegraphics[width=0.12\textwidth]{fig_exp/aggFIX_compare/1Epoch_0_-0.31145_-0.55301.png} &
        \includegraphics[width=0.12\textwidth]{fig_exp/aggFIX_compare/1Epoch_1_-0.31875_-0.63328.png} &
        \includegraphics[width=0.12\textwidth]{fig_exp/aggFIX_compare/1Epoch_2_-0.31997_-0.63826.png} &
        \includegraphics[width=0.12\textwidth]{fig_exp/aggFIX_compare/1Epoch_3_-0.32026_-0.63929.png} &
        \includegraphics[width=0.12\textwidth]{fig_exp/aggFIX_compare/1Epoch_4_-0.32031_-0.63952.png} &
        \includegraphics[width=0.12\textwidth]{fig_exp/aggFIX_compare/1Epoch_5_-0.32029_-0.63950.png} &
\includegraphics[width=0.12\textwidth]{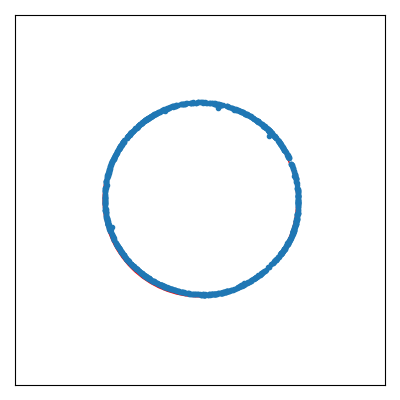} &
        \includegraphics[width=0.12\textwidth]{fig_exp/aggFIX_compare/1Epoch_10_-0.32000_-0.63892.png}  \\

         \includegraphics[width=0.12\textwidth]{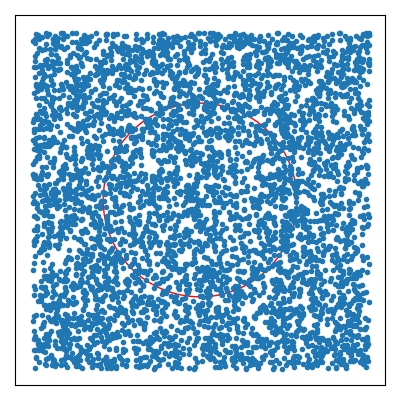} &
        \includegraphics[width=0.12\textwidth]{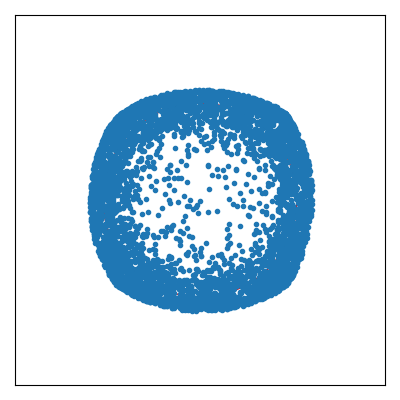} &
        \includegraphics[width=0.12\textwidth]{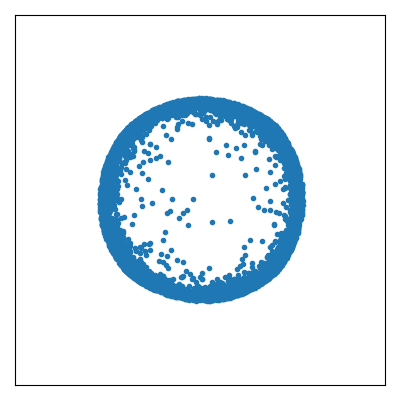} &
        \includegraphics[width=0.12\textwidth]{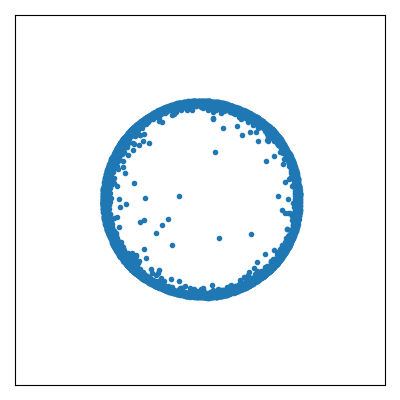} &
        \includegraphics[width=0.12\textwidth]{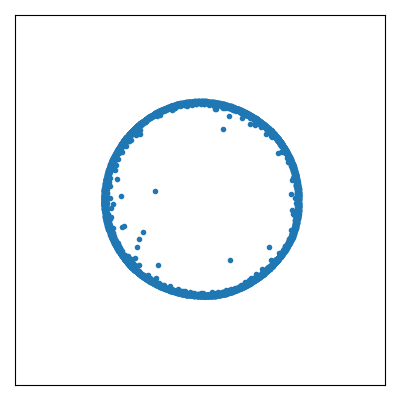} &
        \includegraphics[width=0.12\textwidth]{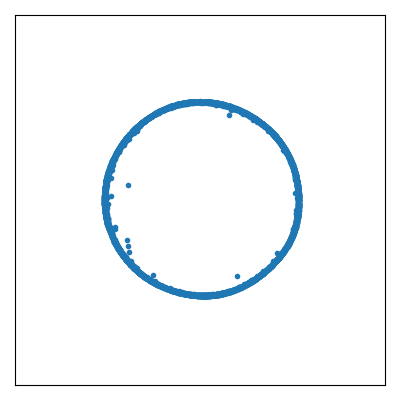} &
        \includegraphics[width=0.12\textwidth]{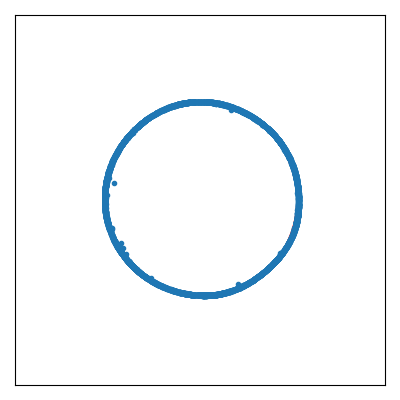} &
        \includegraphics[width=0.12\textwidth]{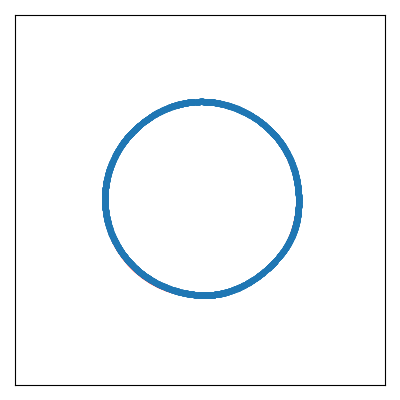} \\
      $t=0$ &  $t=1$ &  $t=2$ &  $t=3$ &  $t=4$ &  $t=5$ &  $t=6$ &  $t=10$
    \end{tabular}
    \caption{
Predicted solutions of the aggregation equation ($p=0.5$, $q=4$) at each time $t$ with input uniform distribution represented by 100 (first row), 1024 (second row) and 5000 (third row) sample points.
Red circles denote the theoretical equilibrium.
}
\label{fig:differentsample}
\end{figure}

More interestingly, the learned JKO operator demonstrates strong generalization ability: although it is trained on a single fixed initial density and only over the first five JKO steps, it can be applied to a wide range of unseen initial conditions and over much longer time horizons. This behavior is illustrated in the three test cases shown in \Cref{fig:fixTest}:
(1) the same uniform distribution on $[-1,1]^2$ but with independently resampled points;
(2) an initial density that differs moderately from the training distribution; and
(3) an initial density that differs substantially from it.
For the first two cases, the predicted WGFs converge to a stable ring by step~10, and the energy decreases monotonically with only minor fluctuations, likely due to sampling noise. The densities then remain nearly stationary over the subsequent 40 steps, indicating a stable equilibrium.
In the third case, the WGF converges more slowly but continues to approach equilibrium as the loss decreases.

As a comparison, we train a baseline model using five fixed densities from $\mathbb{D}_{717}$, the final training dataset generated by our algorithm, which is sufficiently close to the ideal dataset $\mathbb{D}^*$. The baseline model uses the same neural network architecture and optimizer as our method; the only difference lies in the training data. Our model is trained on a sequence of progressively more accurate datasets, whereas the baseline model is trained directly on the final, most accurate dataset.
As shown in \Cref{fig:fixTest}, the baseline model exhibits limited generalization capability, primarily due to the lack of data diversity. In Cases~1 and~2, it yields higher energy loss and fails to reach the equilibrium state. In Case~3 where the initial density differs substantially from the training data, it is unable to produce a reasonable WGF trajectory.

Finally, we point out that the trained JKO operator can be applied to input density represented by a different number of sample points from those used in training, due to the flexibility of the Transformer-based architecture.
\Cref{fig:differentsample} illustrates the predicted results when $100$, $1024$ or $5000$ points are sampled from the same uniform distribution.
In all cases, the solutions successfully converge to the equilibrium,
demonstrating that the attention mechanism captures the pairwise relationships among the sample points rather than relying on their absolute positions or quantities.

\subsubsection{Generalizable JKO operator}
\label{sec:aggregeneralize}

We train a single JKO operator for the parametrized family of interaction energies in \Cref{equ:interactionloss}, which is therefore expected to generalize across arbitrary $(p,q)$ pairs and diverse initial densities. To achieve this, the original network input, the density represented by its sample points, is augmented with its associated parameter pair by concatenating the $(p,q)$ values to every sample point. During training, $(p,q)$ pairs are drawn uniformly from the domain
\[
\Lambda_{p,q} = \{(p,q):\, 0 < p < 1,\; p < q < 10\},
\]
and initial densities are generated as uniform distributions over randomly placed rectangles or triangles within $[-1,1]^2$. The model is trained to perform $T = 10$ JKO steps with a fixed step size $\Delta t = 1$. In \Cref{alg:neurojko2stage}, the batch size is set to 36.

For attraction--repulsion interactions of the form \eqref{kernel}, smaller values of $p$ enhance short-range repulsion, while larger values of $q$ strengthen long-range attraction. Their interplay admits ring-shaped equilibria over a broad range of parameters, whose existence and stability depend sensitively on the pair $(p,q)$. Following the classification in \cite{kolokolnikov2011stability}, the parameter domain $\Lambda_{p,q}$ can be partitioned into four regimes associated with qualitatively distinct steady-state patterns:
\emph{(I) Stable rings}, 
When $pq>1$ and $q<q_3(p)$ where $q_3(p)$ denotes the critical curve in parameter space at which the ring equilibrium first becomes unstable as characterized in \cite{kolokolnikov2011stability}, the ring equilibrium remains rotationally symmetric and does not fragment under perturbations.
\emph{(II) Polygonal $N$-dot patterns}, 
When $q$ exceeds the instability threshold $q_3(p)$, symmetry-breaking instabilities cause the ring to fragment into a finite number of localized clusters arranged in a polygonal configuration.
\emph{(III) Diffuse clouds or annuli}, 
When $pq\le 1$ or $p\ge q$, the ring equilibrium loses stability or ceases to exist, leading to fully two-dimensional diffuse or annular density distributions.
\emph{(IV) Collapse/Monopoles}, 
When $0<q<p$, short-range attraction dominates and the dynamics collapse toward a single concentrated cluster.
The collapse regime is excluded from training and is instead used to assess the ability
of the trained operator
to identify equilibrium types not covered during training. 
This regime classification is illustrated in the rightmost column of \Cref{fig:ringtest}.

The predicted solutions for selected $(p,q)$ pairs and initial densities are shown in \Cref{fig:predflow_agg}. 
Although trained using simple  initial densities uniform on random rectangles or triangles, the learned operator generalizes to unseen initial densities, including Gaussian and star-shaped profiles, and reproduces the expected dynamical behaviors.
We recall that, for a fixed parameter pair $(p,q)$, the equilibrium pattern is independent of the initial condition. 
Consistent with this property, for $(p,q)=(0.5,3)$ in the Type~I regime, the predicted solutions converge to the same stable ring regardless of whether the initial density is Gaussian or a mixture of rectangular profiles, as illustrated in the first two rows of \Cref{fig:predflow_agg}.
In the Type~II regime, the parameter pair $(p,q)=(0.9,9)$ undergoes a symmetry-breaking instability, leading to a transition from a ring to a three-point configuration. 
For $(p,q)=(0.5,6)$, the predicted Wasserstein gradient flow exhibits a pentagonal breakup. 
In contrast, the continuum theory predicts that the ring should break into a three-point configuration for this parameter choice \cite{kolokolnikov2011stability}, indicating a minor prediction error in the trained JKO operator.
In the Type~III regime, for $(p,q)=(0.2,2)$ and $(0.5,1.5)$, the predicted flows evolve into irregular annular structures by approximately $t=10$. 
This behavior is consistent with the presence of high-mode instabilities that preclude the formation of stable ring equilibria.

\begin{figure}[htp]
  \centering
  \renewcommand{\arraystretch}{0}    
  \setlength{\tabcolsep}{0pt}           % 其他列间距  
  \newcommand{\TimeRow}[3]{%
    \parbox[b][1.5cm][c]{0.1\textwidth}{\centering \small $p = #1$,\\ $q = #2$} &
      \includegraphics[width=0.12\textwidth]{fig_exp/aggregation/#3/Frame_000.png} &
      \includegraphics[width=0.12\textwidth]{fig_exp/aggregation/#3/Frame_001.png} &
      \includegraphics[width=0.12\textwidth]{fig_exp/aggregation/#3/Frame_002.png} &
      \includegraphics[width=0.12\textwidth]{fig_exp/aggregation/#3/Frame_005.png} &
      \includegraphics[width=0.12\textwidth]{fig_exp/aggregation/#3/Frame_010.png} &
      \includegraphics[width=0.12\textwidth]{fig_exp/aggregation/#3/Frame_100.png} &
     \includegraphics[width=0.12\textwidth]{fig_exp/aggregation/#3/Frame_999.png}\\
  }

  %=========== 1 + 5 + 1 列的表格 ===========
\begin{tabular}{cccccccc}  % 8 columns
      %—— 5 个 t 行 —— 

  &    $t=0$& $t=1$&$t=2$&$t=5$&$t=10$&$t=100$&$t=1000$\\    
      \TimeRow{0.5}{3}{p0.5_q3} \vspace{-0.1cm}
      \TimeRow{0.5}{3}{p0.5_q3_2}  \vspace{-0.1cm} 
      \TimeRow{0.9}{9}{p0.9_q9} \vspace{-0.1cm}
      \TimeRow{0.5}{6}{p0.5_q6}  \vspace{-0.1cm}
      \TimeRow{0.2}{2}{p0.2_q2} \vspace{-0.1cm}      
      \TimeRow{0.5}{1.5}{p0.5_q1.5}  
\midrule
      \TimeRow{0.5}{0.2}{p0.5_q0.2}   \vspace{-0.1cm}
      \TimeRow{0.5}{0.5}{p0.5_q0.5} \vspace{-0.1cm}
      \TimeRow{-0.5}{0.5}{p-0.5_q0.5}  \vspace{-0.1cm}
      \TimeRow{2}{8}{p2_q8}  
  \end{tabular}
 
  \caption{
  Predicted solutions of the aggregation equation using the generalizable JKO operator across varying $(p,q)$ pairs and initial densities.
The horizontal divider separates cases within the training range ($0 \le p < 1$, $p < q \le 10$) from cases outside this range.
Red circles indicate the potential ring solution. }  
  \label{fig:predflow_agg}
\end{figure}

The learned operator also generalizes well to $(p,q)$ values outside the training range, as shown in the second part of ~\Cref{fig:predflow_agg}.
For $(p,q) = (0.5,0.2)$, the predicted evolution collapses to a single point, consistent with predominantly attractive interactions (Type~I equilibrium). 
Although this equilibrium type was not included in training, the operator still converges to the correct steady state. 
For $(p,q) = (0.5,0.5)$, the interaction force vanishes, and the exact dynamics keep particles stationary at each JKO step; 
the learned operator, however, exhibits a slow drift toward irregular clusters, reflecting approximation error at this degenerate parameter setting. 
The case $(p,q) = (-0.5,0.5)$ produces a compact, disk-like equilibrium, indicating strong short-range repulsion with moderate attraction at intermediate distances. 
Finally, $(p,q) = (2,8)$ shows a rapid breakup of a transient ring into three clusters, consistent with the expected mode-3 instability. 
Overall, the predicted WGFs follow the expected dynamics with only minor approximation errors.

\begin{figure}[htp]
    \centering
    \setlength{\tabcolsep}{0pt} 
    \begin{tabular}{ccccc}   
        \includegraphics[width=0.19\textwidth]{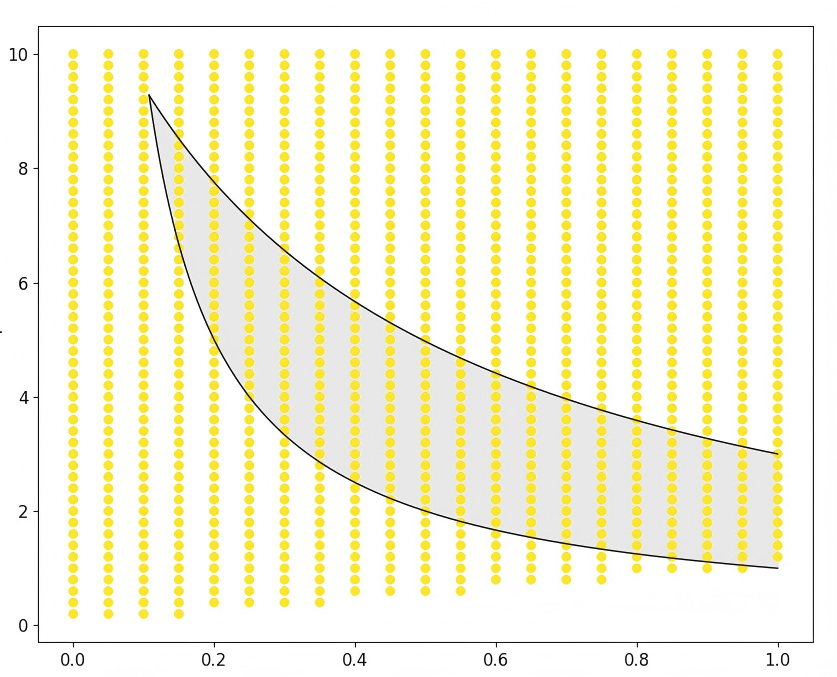} &
        \includegraphics[width=0.185\textwidth]{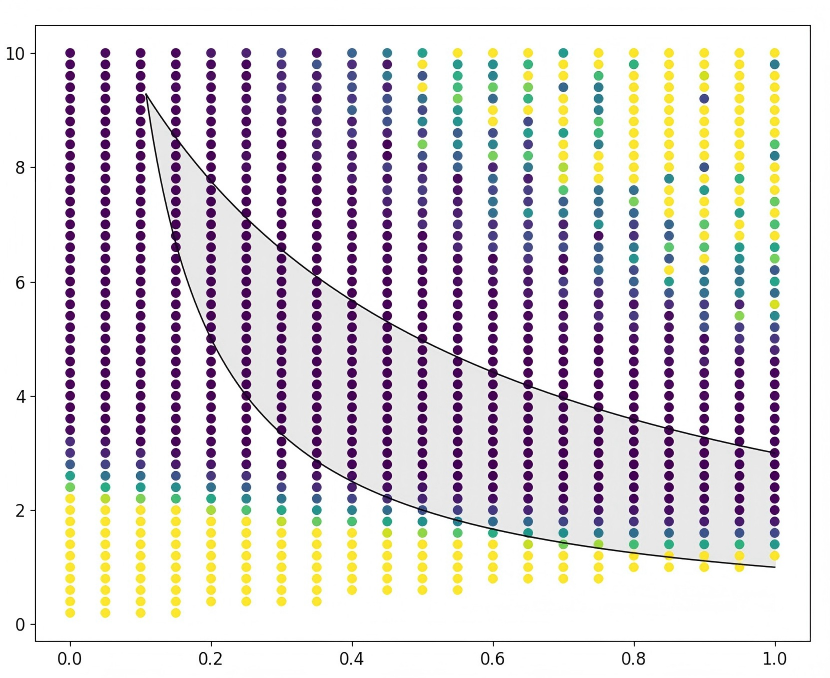} &
        \includegraphics[width=0.19\textwidth]{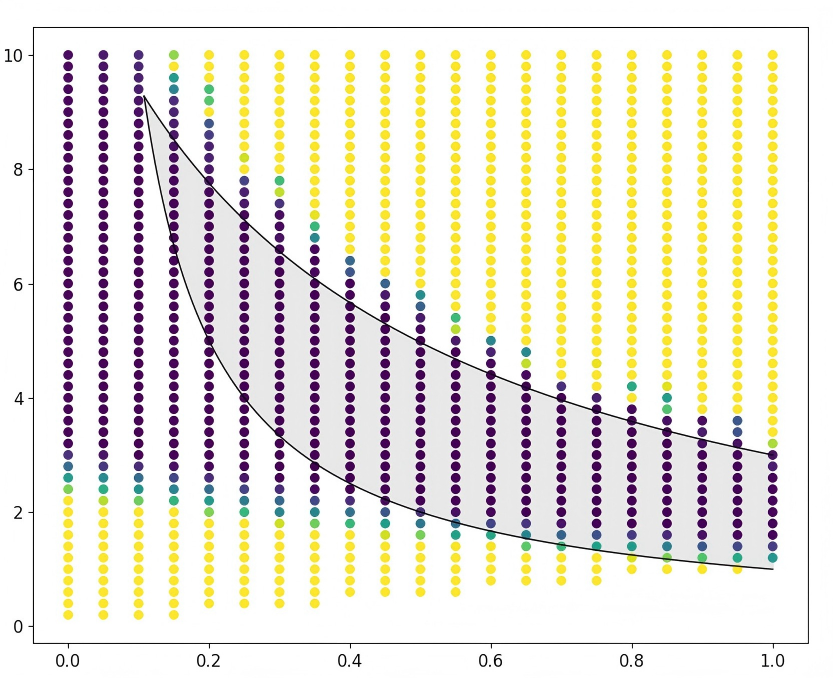} &
        \includegraphics[width=0.19\textwidth]{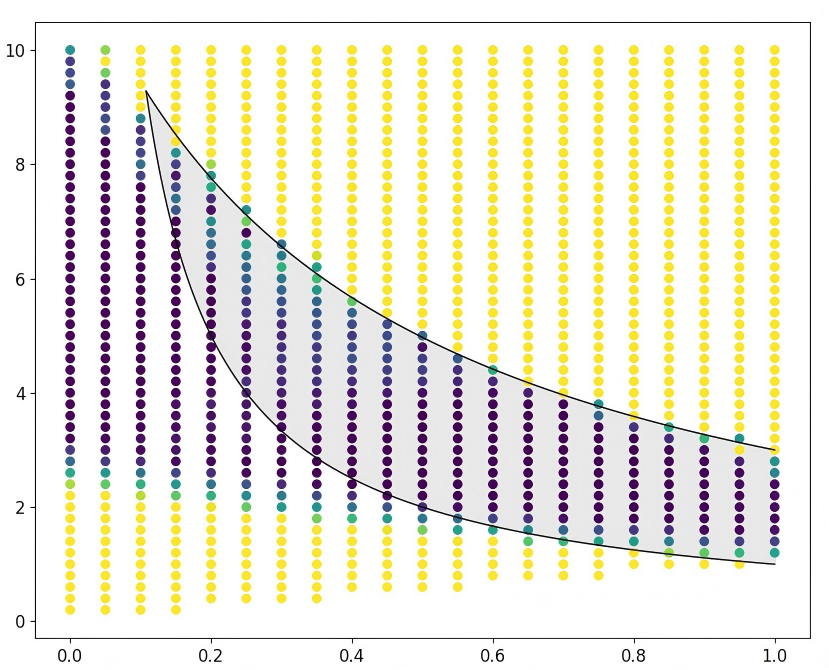} &
        \includegraphics[width=0.19\textwidth]{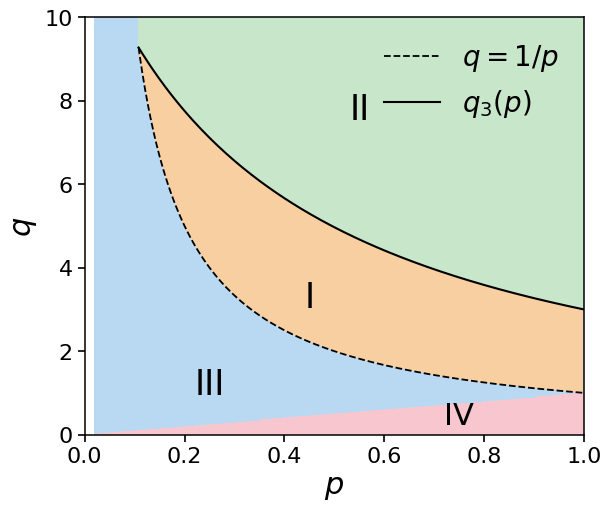} \\
      $t=0$ &  $t=10$ &  $t=50$ &  $t=1000$ &  equilibrium 
    \end{tabular}
    \caption{
    Ring test over 1005 $(p,q)$ pairs within the training domain. 
 For each plot,
 every point represents a predicted density for the corresponding $(p,q)$ value, 
The color indicates the Chamfer distance between the predicted density and its potential ring solution at $t = 0, 10, 50,$ and $1000$ (yellow = large, dark blue = small). 
Type~II regions are shaded for clarity. 
The final panel summarizes the four equilibrium regimes (Types~I–IV). 
    }
\label{fig:ringtest}
\end{figure}

Now, we evaluate the prediction performance of the trained operator across the entire training domain.
Since different equilibrium types correspond to distinct behaviors of ring formation and breakup, we perform a \emph{ring test} over 1005 $(p,q)$ pairs spanning the training domain.
For each parameter pair, we recursively apply the trained operator starting from the same uniform initial density on $[-1,1]^2$ and predict the current state at times $t = 10, 50, 1000$. 
Each predicted density is represented by 1024 sample points and compared against an ideal ring configuration, also represented by 1024 uniformly sampled points.
We quantify the deviation from a ring structure using the Chamfer distance, defined as the sum of squared nearest-neighbor distances between the two point clouds.
Small Chamfer distances indicate close agreement with a ring configuration, while larger values reflect significant deviations.
The results are shown in \Cref{fig:ringtest}. 
At $t=0$, all $(p,q)$ pairs exhibit large Chamfer distances, as the initial density is non-ring-like.
In the Type~I regime, the distance rapidly decreases by $t=10$ and remains small thereafter, indicating convergence to a stable ring equilibrium.
In the Type~II regime, the density first approaches a ring configuration and then breaks into clusters, characterized by a decrease followed by an increase in the Chamfer distance.
This transition occurs more slowly for smaller $(p,q)$ values, which remain ring-like at $t=10$ and deviate only by $t=50$.
For Type~III parameter pairs, where stable rings fail to form, the Chamfer distance remains large even at $t=1000$ particularly in the lower-left region of the parameter domain.
We clarify that larger $q$ values induce stronger attraction, leading to faster contraction into annular structures.
Since annular configurations are geometrically close to rings, this results in comparatively smaller Chamfer distances in the upper-left region of the plot.
Overall, the observed behaviors are consistent with theoretical predictions for the aggregation model.

%These trends align well with forward–Euler simulations of the WGF dynamics in~\Cref{eq:agg-cont}:  The chamber distance of lower region remains large throughout, indicating they fail to form a ring.\lai{why conduct ring test for type III and type IV regions?}

\begin{figure}[htp]
  \centering
  \renewcommand{\arraystretch}{0}  % 行距
  \setlength{\tabcolsep}{0pt}      % 组与组之间的列距

  %--- 定义“纵向 3 图”宏（注意里面自己有 tabular） ---
  \newcommand{\Stack}[1]{%
    \begin{tabular}{c}    % ← 一列
      \includegraphics[width=0.24\textwidth]{fig_exp/aggregation/V/#1_Vpred_colored.png}\\[-1pt]
      \includegraphics[width=0.24\textwidth]{fig_exp/aggregation/V/#1_Vref_colored.png}\\
    \end{tabular}%
  }

  %--------- 外层表：4 组横排 ---------
  \begin{tabular}{cccc}
    \Stack{6678_iter_0} &
    \Stack{6678_iter_1} &
    \Stack{6673_iter_0} &
    \Stack{6676_iter_0}
  \end{tabular}

  \caption{Comparison of predicted JKO velocity fields (top) with reference velocity fields from Equation~\eqref{equ:particle-ode} (bottom) for randomly selected densities with $(p,q) = (0.5,3)$. The color indicates the velocity magnitude.}
  \label{fig:aggV}
\end{figure}

Finally, we validate the learned JKO velocity field $\mathbf{V}_{\JKO}$ 
by comparing it with the exact velocity field from the particle ODEs in~\Cref{equ:particle-ode}. 
In theory, $\mathbf{V}_{\JKO}$ approximates the WGF velocity when the stepsize~$\Delta t$ is small. 
Across different input densities, the predicted velocities closely match the reference fields (see~\Cref{fig:aggV}), 
demonstrating that the learned JKO operator accurately captures the underlying dynamics of the aggregation equation.

\subsection{Porous Medium Equation}
We consider the porous medium equation given by: 
\begin{equation}\label{eq:porous-pde}
    \partial_t \rho(t,x) = \Delta \rho(t,x)^m, 
    \qquad m>1,\; x\in\mathbb{R}^d\,,
\end{equation}
which can be viewed as the Wasserstein gradient flow of the internal energy:
\begin{equation}
    \label{equ:porousenergy}
        \mathcal{E}(\rho)
    = \frac{1}{m-1}\int_{\mathbb{R}^d}\rho(x)^m\,dx,  \qquad m >1\,.
\end{equation}
In the whole space $\mathbb{R}^d$, finite-mass solutions admit no nontrivial steady states; instead, they spread in a self-similar manner, preserving total mass while decaying pointwise to zero. The corresponding self-similar Barenblatt solution for a Dirac delta initial condition is given by
\begin{equation}\label{equ:porousBB}
    \rho(t,x)
    = (t+t_0)^{-\alpha}
      \Bigl(C - \beta \|x\|^2 (t+t_0)^{-2\alpha/d}\Bigr)_{+}^{1/(m-1)},
\end{equation}
where
\(
    \alpha = \frac{d}{d(m-1)+2}, 
    \beta = \frac{(m-1)\alpha}{2dm},
\)
and the parameters $C>0$ and $t_0>0$ control the total mass and the temporal shift, respectively. It is clear that for $\rho$ of the form \eqref{equ:porousBB}, the corresponding velocity field can be computed as:
\[
 v(t,x)
    = -\,\nabla_x\!\left(\frac{\delta \mathcal{E}}{\delta \rho}\right)
    = -\,\nabla_x\!\Bigl(\tfrac{m}{m-1}\rho^{\,m-1}\Bigr) =
    \frac{\alpha}{d}\,\frac{x}{t+t_0},
\]
which is radial, linear in $x$, and decays in time like $(t+t_0)^{-1}$ for each fixed $x$. Its divergence is
\begin{equation}\label{equ:porousdiv}
    \operatorname{div} v(t,x)
    = \frac{d}{d(m-1)+2}\,\frac{1}{t+t_0}\,,
\end{equation}
which is spatially homogeneous.

\paragraph{Training setup}
We train separate JKO operators for different choices of the parameters $d$, $m$ (both appearing in \eqref{equ:porousenergy}), and $\Delta t$. During training, the initial densities $\rho(0,\cdot)$ are given by the Barenblatt profile \eqref{equ:porousBB} with $t_0 = 10^{-3}$ and with the parameter $C$ sampled uniformly from $[0.1, 1]$. All models are trained for $T = 40$ JKO steps (i.e., evolving \eqref{eq:porous-pde} from $t = 0$ to $t = 40\Delta t$). The batch sizes of initial densities are set to $6$, $4$, $2$, and $1$ for $d = 1$, $2$, $5$, and $10$, respectively.

The neural network input is a density represented by its sample points. 
To improve accuracy, the input is augmented by concatenating its density values to each sample point. 
Radial sampling is used to generate isotropic point clouds. 
To mitigate high-dimensional mass concentration where few sample points lie near the origin, we introduce a small number of landmark points near the origin, and also recenter the sample points to achieve approximately zero-mean sampling for the initial density. 
For generalization tests, the trained models are iteratively applied for 100 steps, which is beyond the training $40$ steps.

\begin{figure}[h]
\centering
% ------- Row 1 -------
\begin{subfigure}[b]{0.4\textwidth}
    \includegraphics[width=\textwidth]{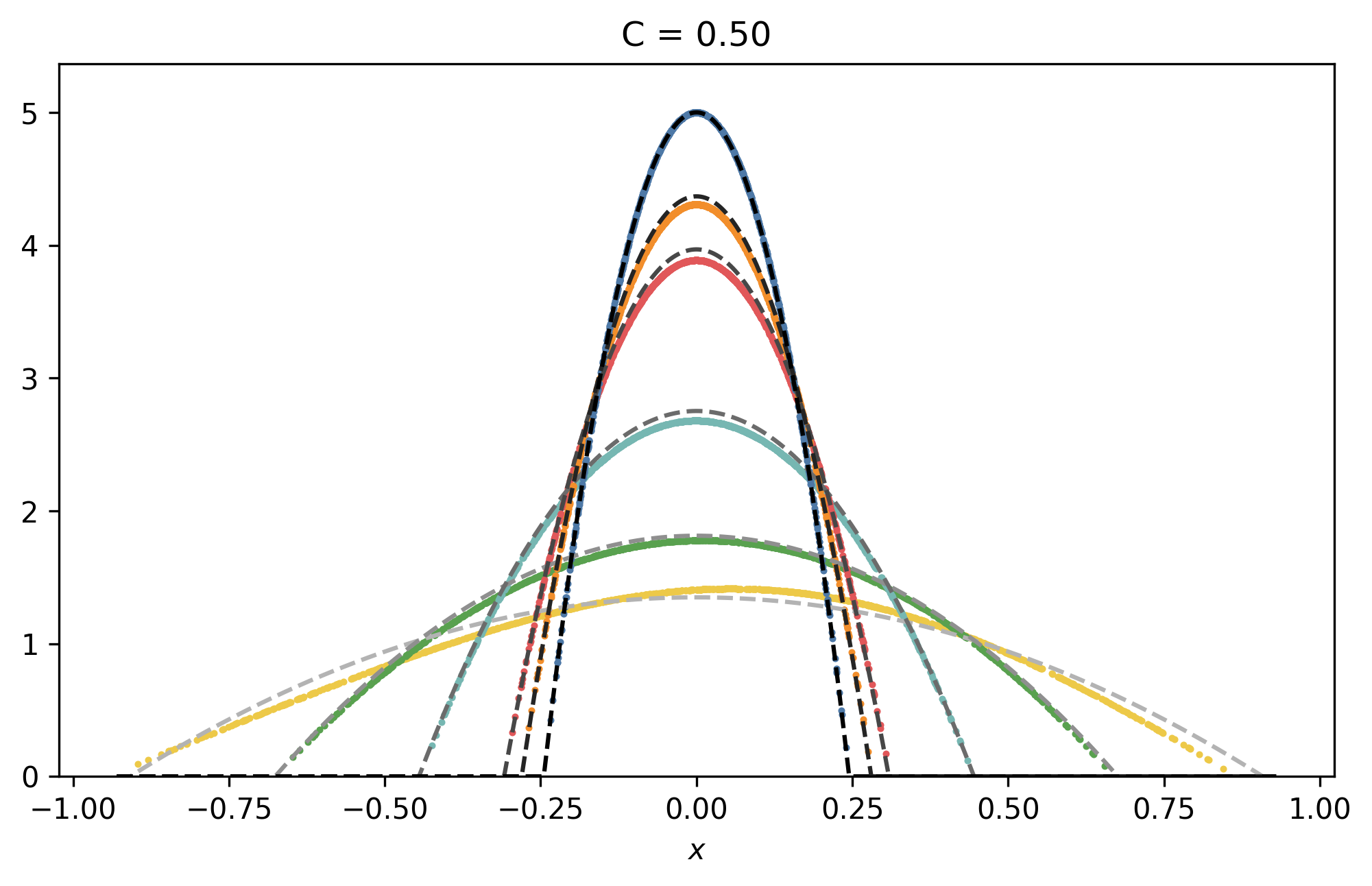}
    \caption*{$d=1$}
\end{subfigure}
\begin{subfigure}[b]{0.28\textwidth}
    \includegraphics[width=\textwidth]{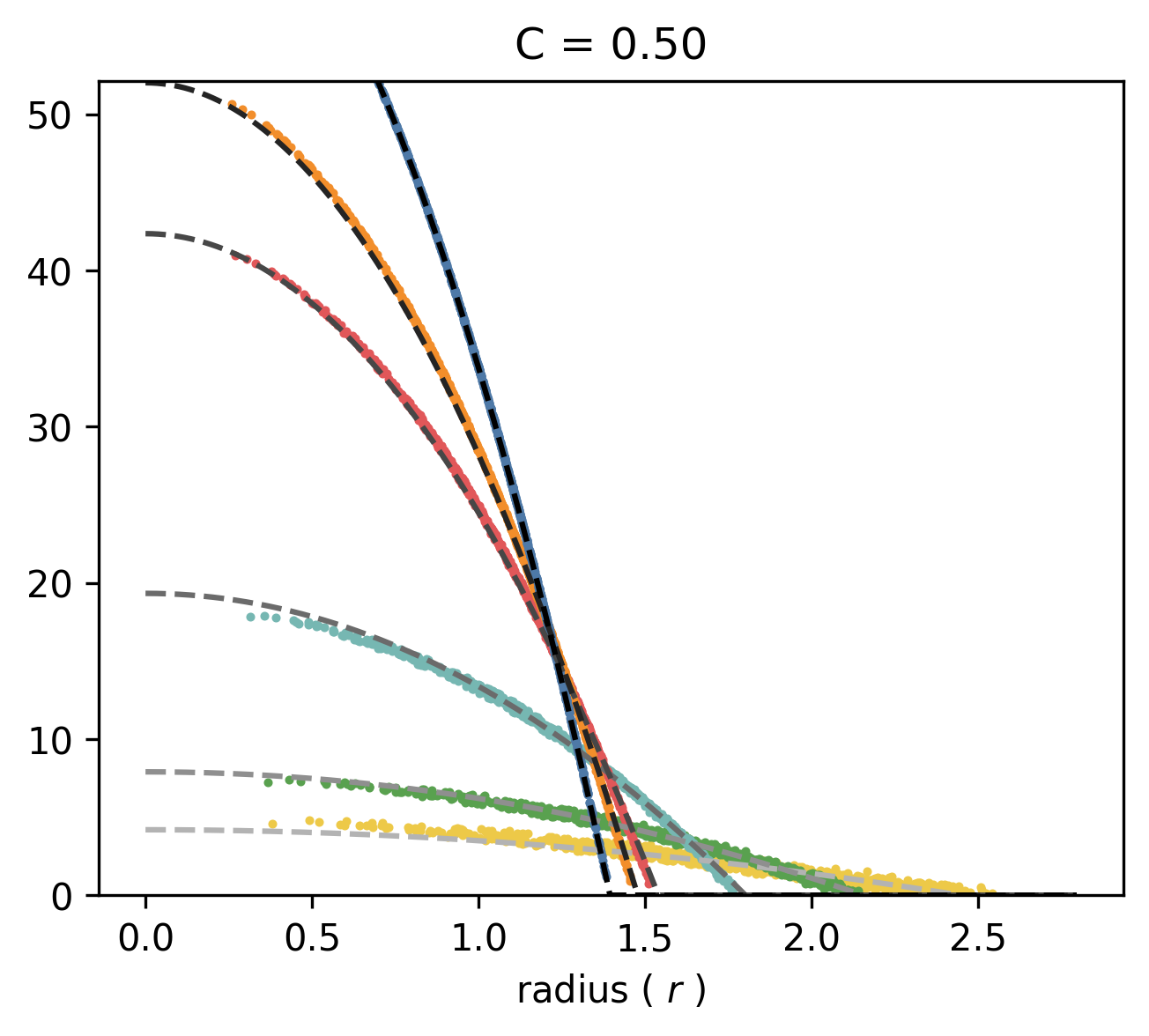}
    \caption*{$d=5$}
\end{subfigure}
\hfill
\begin{subfigure}[b]{0.28\textwidth}
    \includegraphics[width=\textwidth]{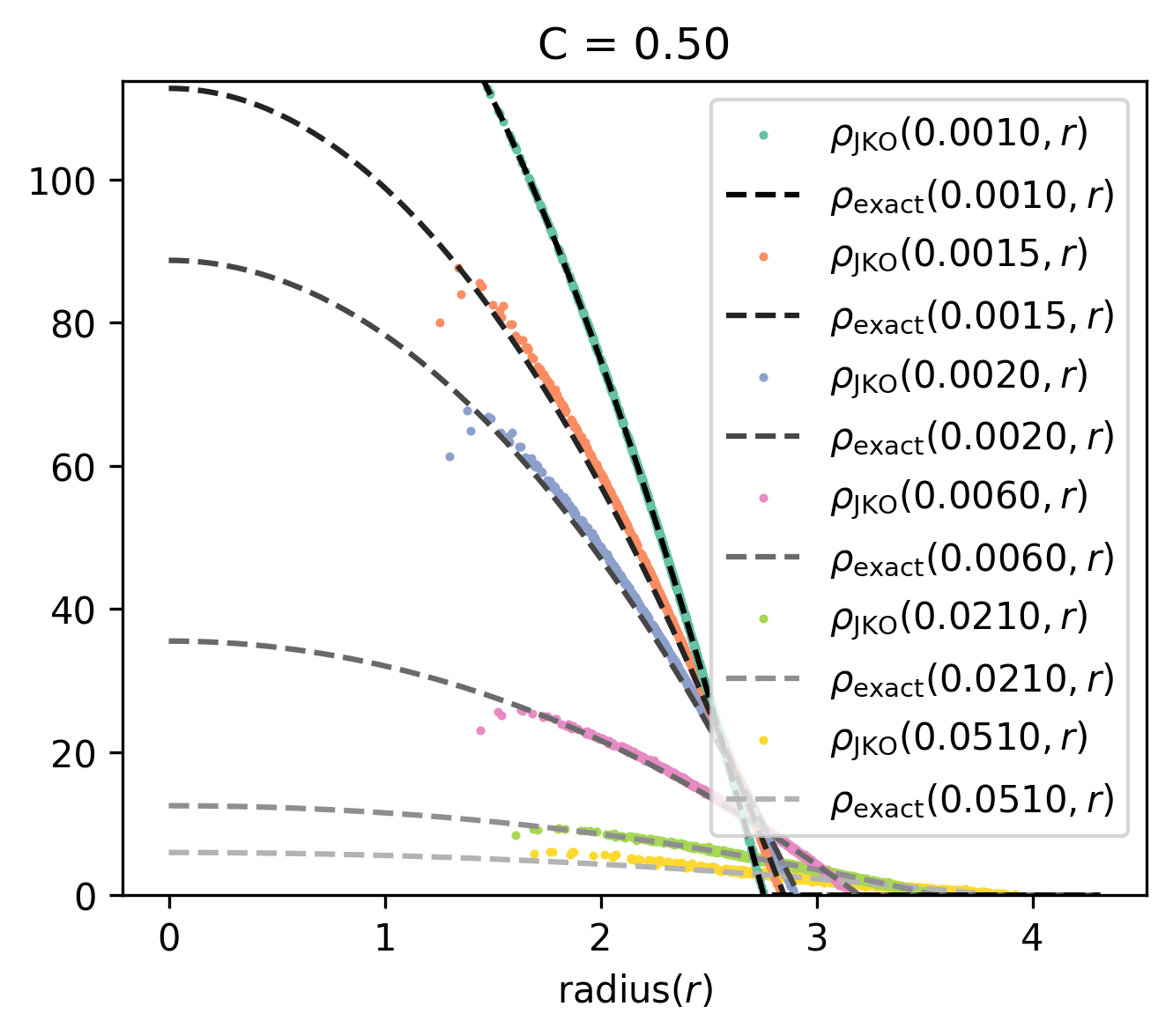}
    \caption*{$d=10$}
\end{subfigure}
\caption{Predicted porous-medium flows (solid) versus exact Barenblatt solutions (dashed) starting from $\rho^0$ with $C=0.5$ across data dimensions $d$ with fixed $\Delta t=0.0005$ and $m=2$.}
\label{fig:porousflowdifferentd}
\end{figure}

\begin{figure}[h!]
\centering
% ------- Row 1 -------
\begin{subfigure}[b]{0.32\textwidth}
    \includegraphics[width=\textwidth]{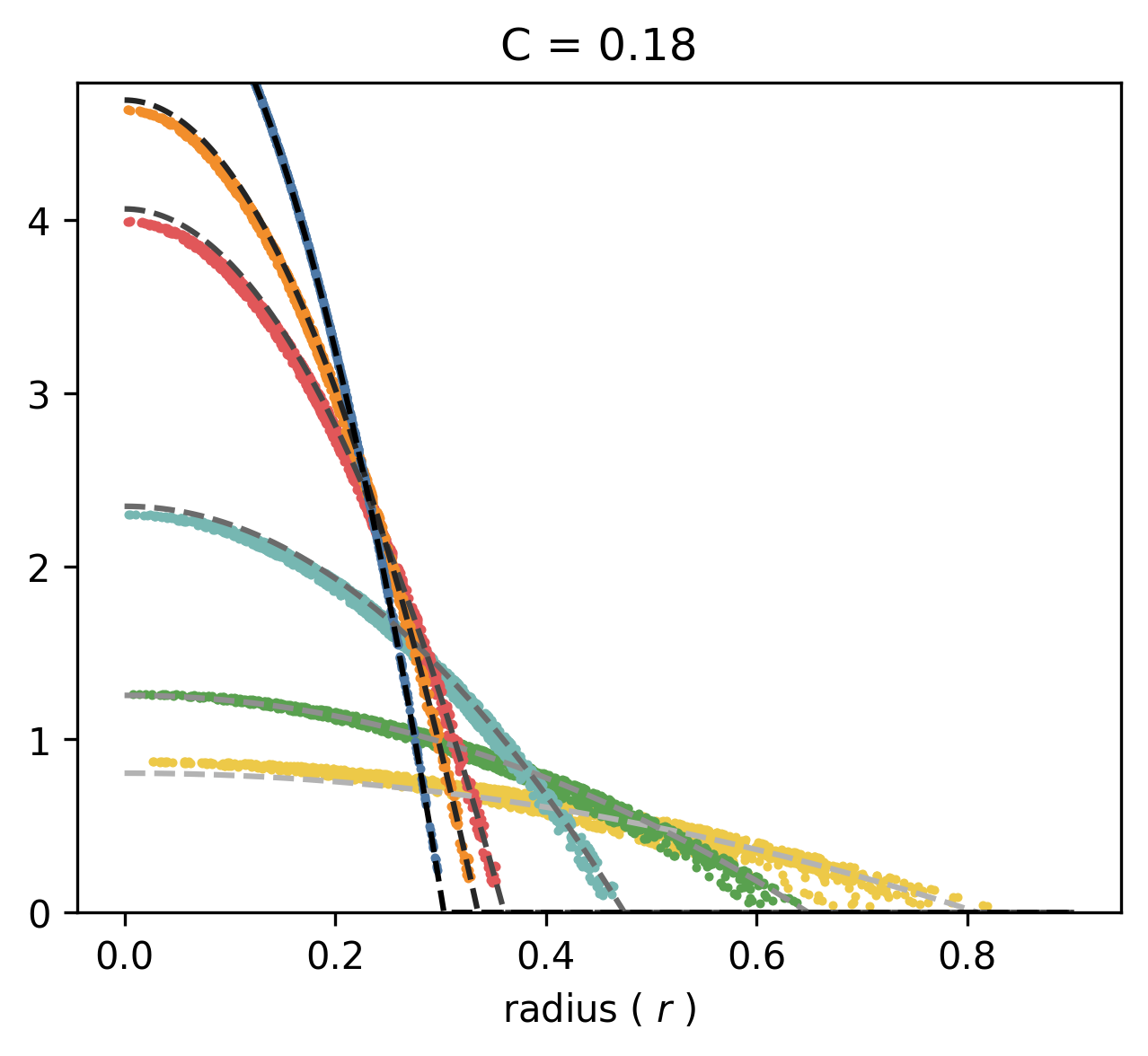}
\end{subfigure}
\begin{subfigure}[b]{0.32\textwidth}
    \includegraphics[width=\textwidth]{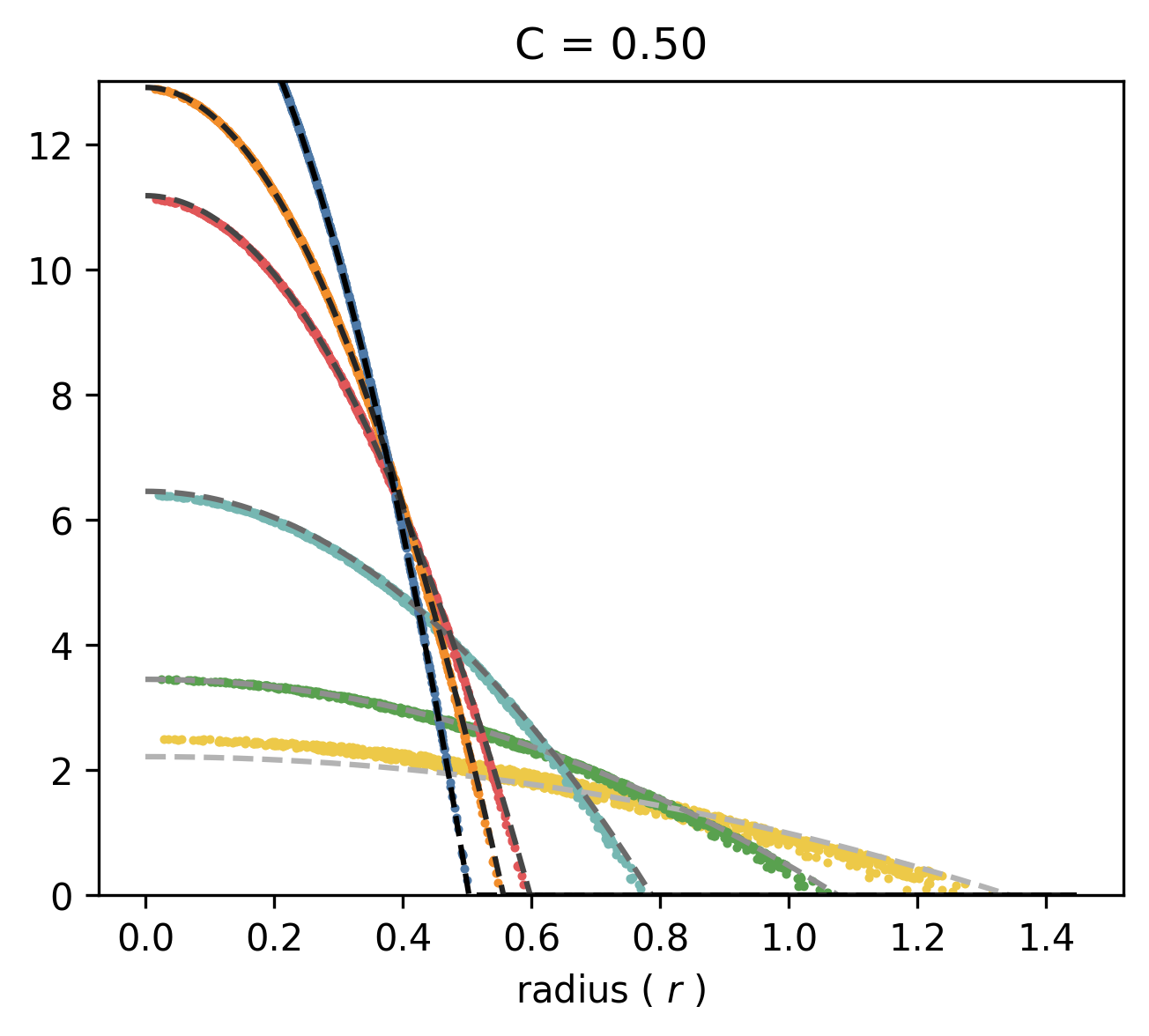}
\end{subfigure}
\begin{subfigure}[b]{0.32\textwidth}
    \includegraphics[width=\textwidth]{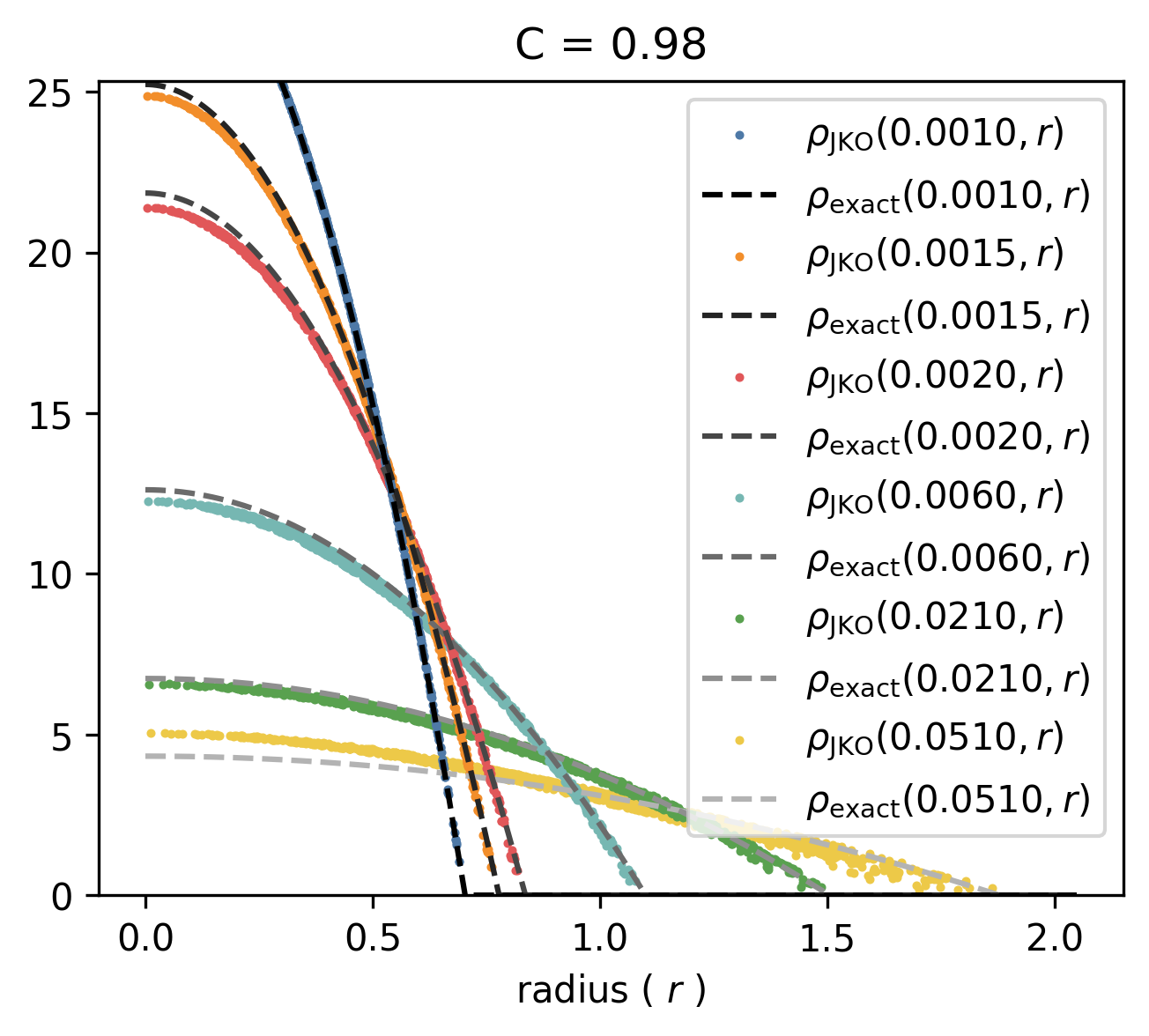}
\end{subfigure}
\\
%\midrule
% ------- Row 2 -------
\begin{subfigure}[b]{0.32\textwidth}
    \includegraphics[width=\textwidth]{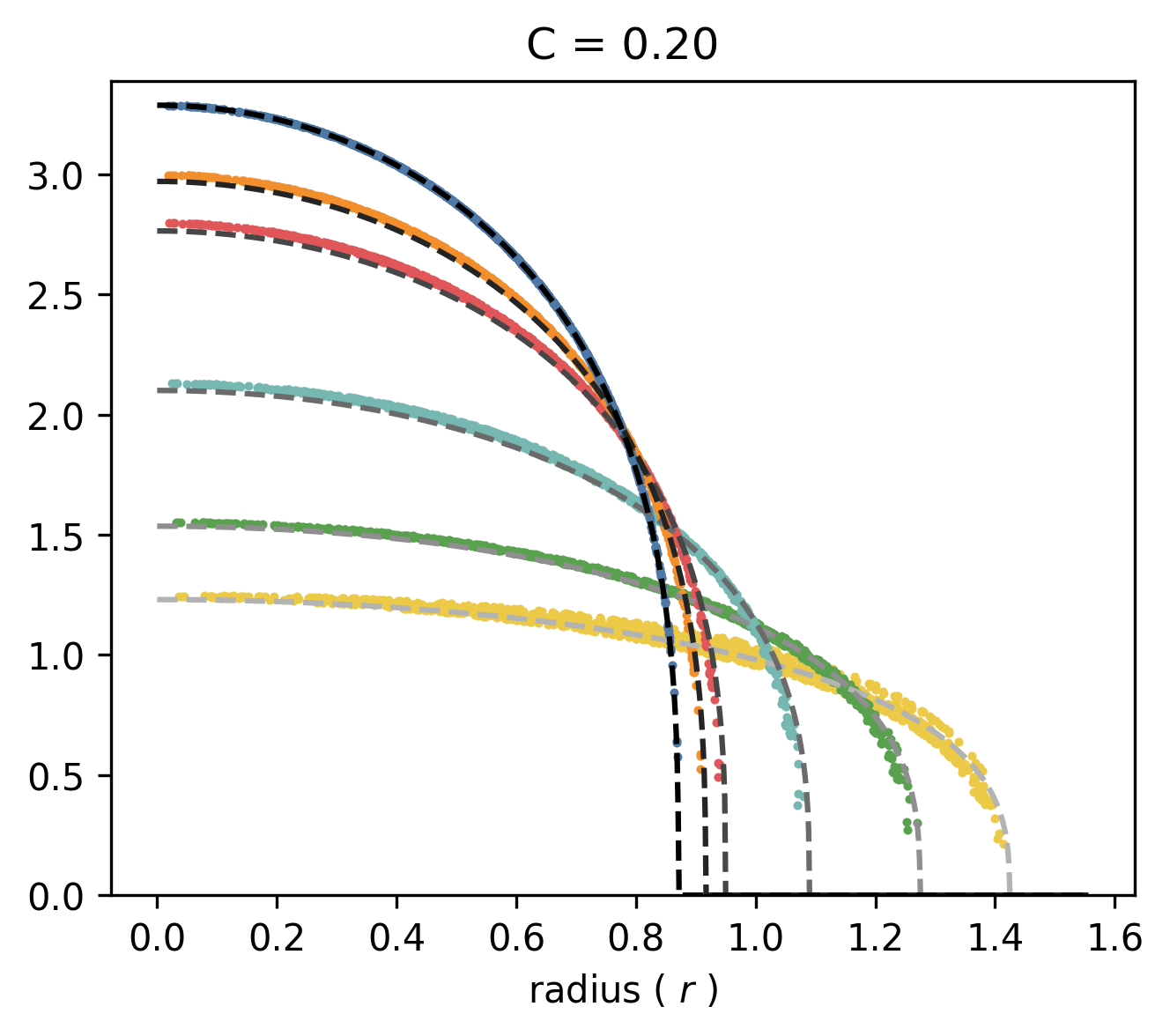}
\end{subfigure}
\begin{subfigure}[b]{0.32\textwidth}
    \includegraphics[width=\textwidth]{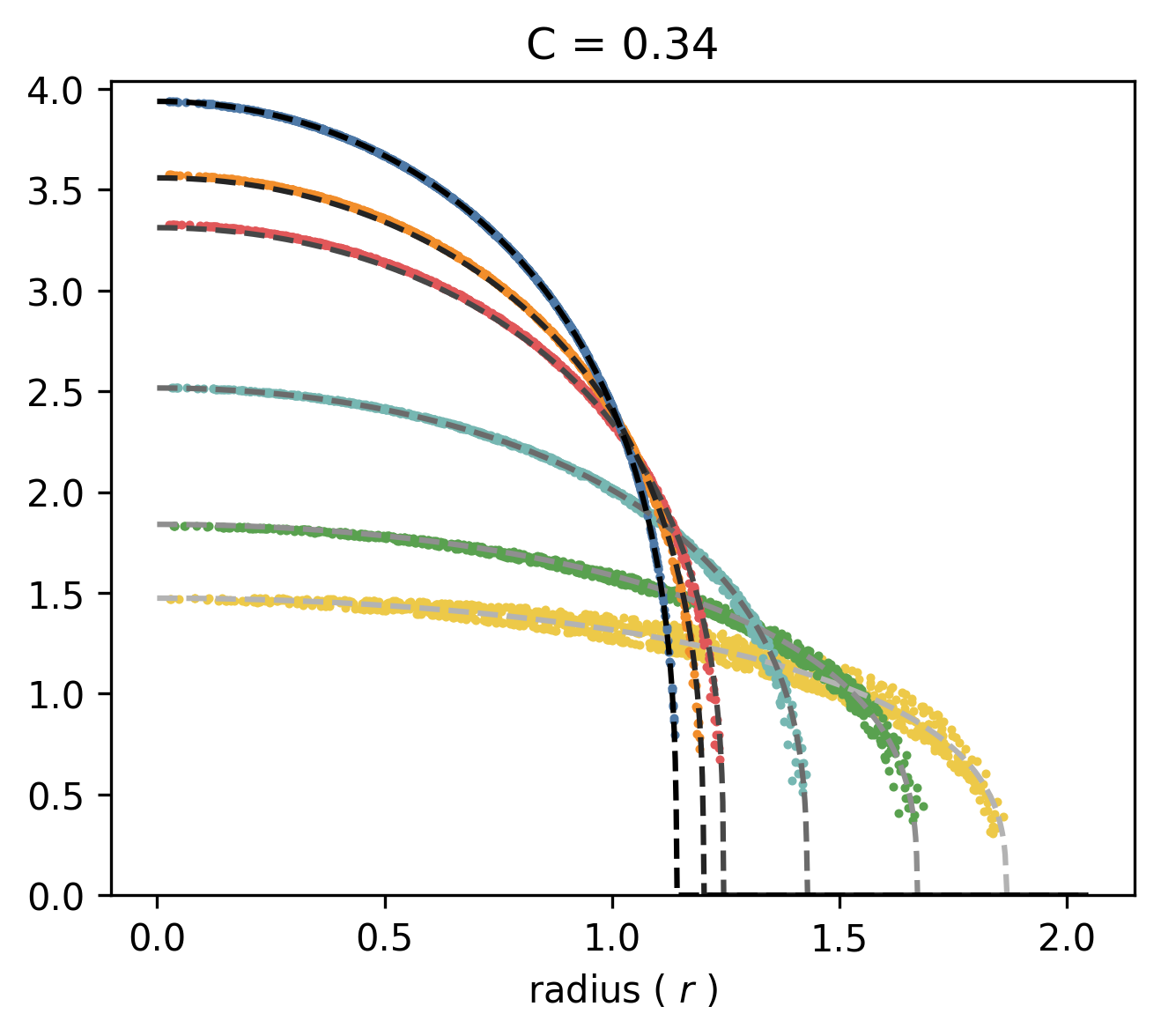}
\end{subfigure}
\begin{subfigure}[b]{0.32\textwidth}
    \includegraphics[width=\textwidth]{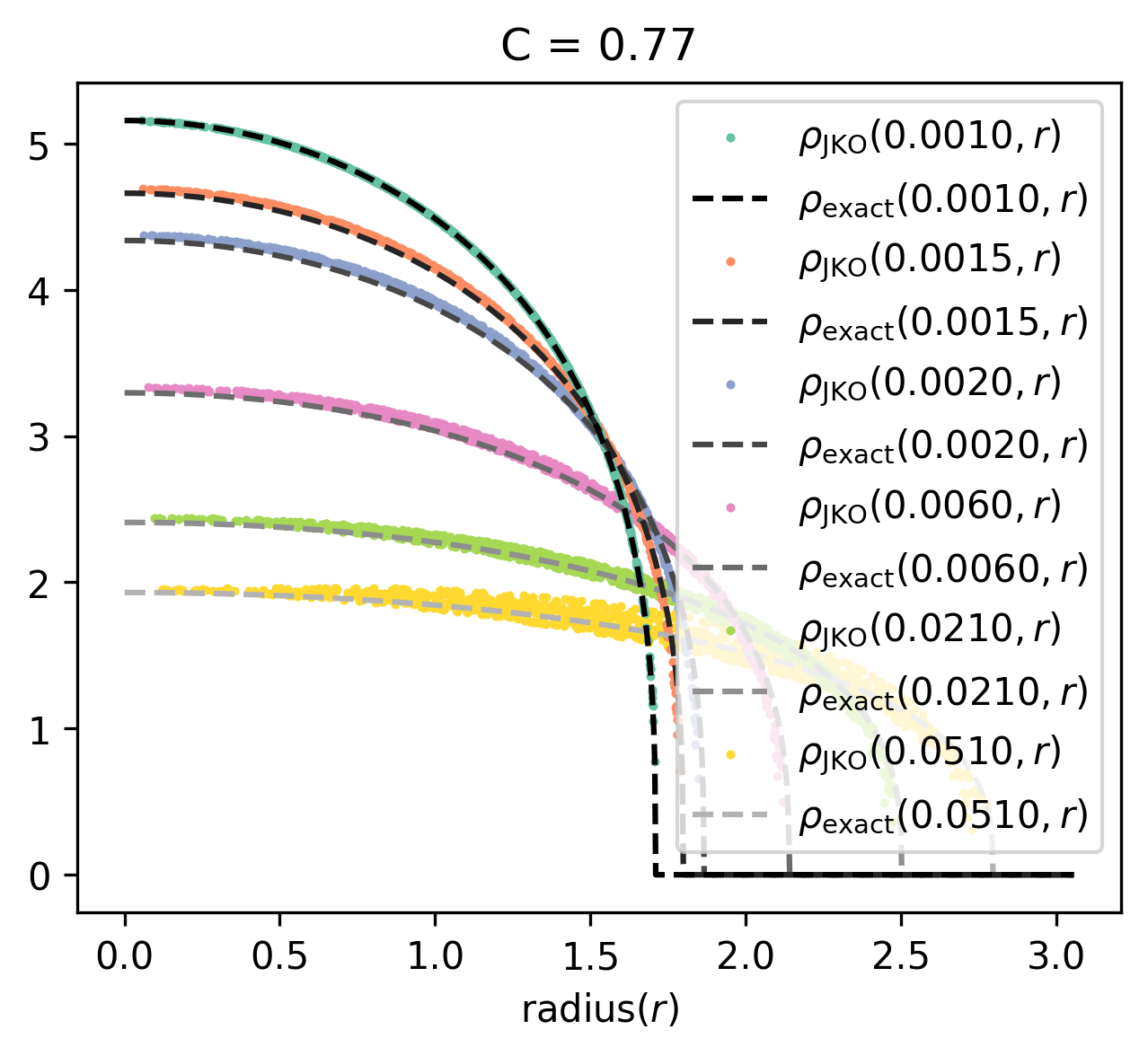}
\end{subfigure}
\caption{
Predicted porous medium flows (solid) versus exact Barenblatt solutions (dashed) for $\Delta t=0.0005$ and $d=2$. 
Each row shows three predicted
evolutions starting from $\rho^0$  
with different $C$ values, using the same JKO operator ($m=2$ top, $m=4$ bottom). 
Although trained only up to $t\le0.021$, the operator generalizes well beyond this range. 
The vertical axis is truncated to highlight the sharp density peaks.}
\label{fig:porousflowdim2}
\end{figure}

\begin{figure}[h!]
\centering
\begin{comment}
    % ------- Row 2 -------
\begin{subfigure}[b]{0.32\textwidth}
    \includegraphics[width=\textwidth]{}
\end{subfigure}
\begin{subfigure}[b]{0.32\textwidth}
    \includegraphics[width=\textwidth]{}
\end{subfigure}
\begin{subfigure}[b]{0.32\textwidth}
    \includegraphics[width=\textwidth]{}
\end{subfigure}
\end{comment}
%\midrule
% ------- Row 4 -------
\begin{subfigure}[b]{0.32\textwidth}
    \includegraphics[width=\textwidth]{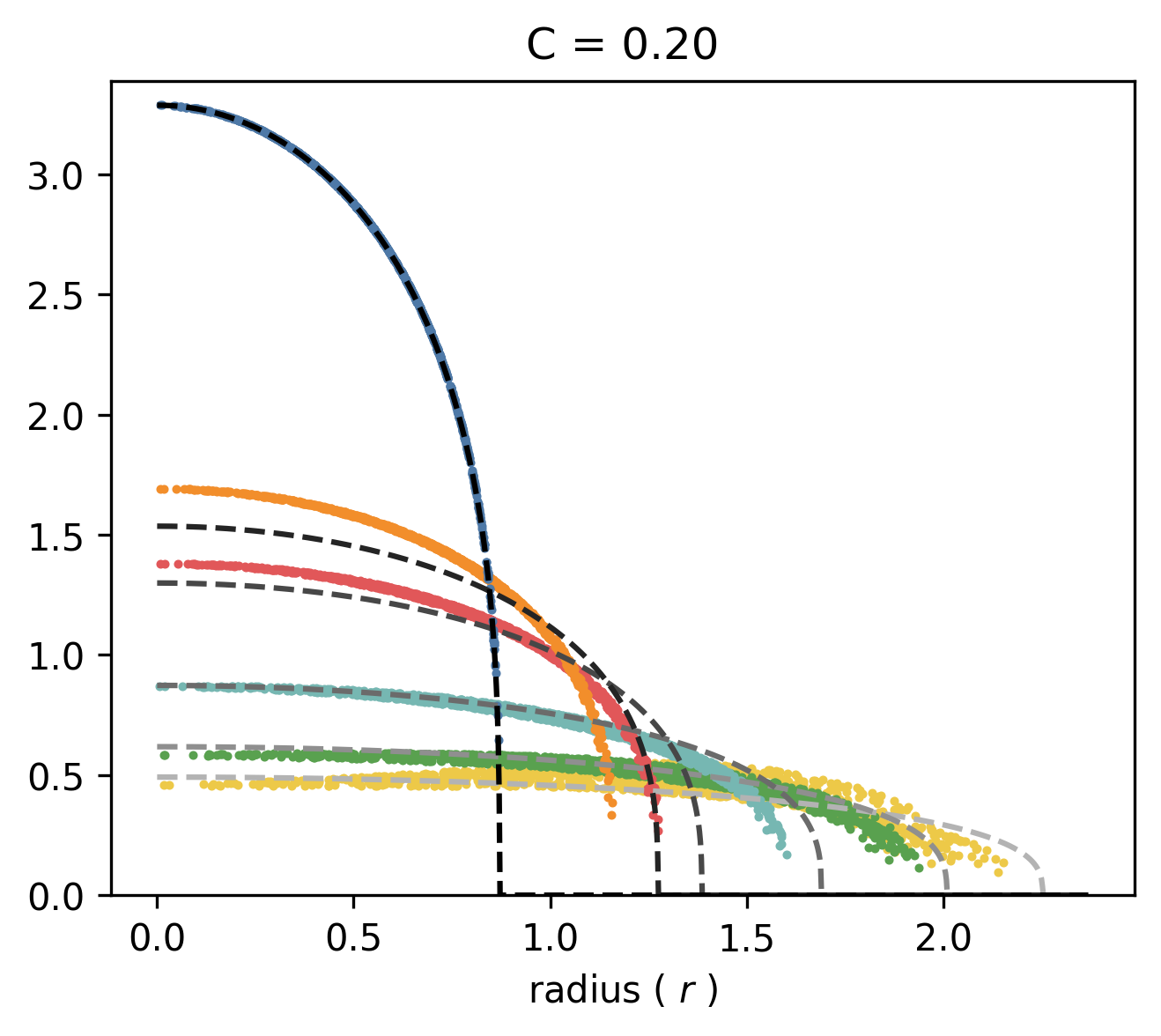}
\end{subfigure}
\begin{subfigure}[b]{0.32\textwidth}
    \includegraphics[width=\textwidth]{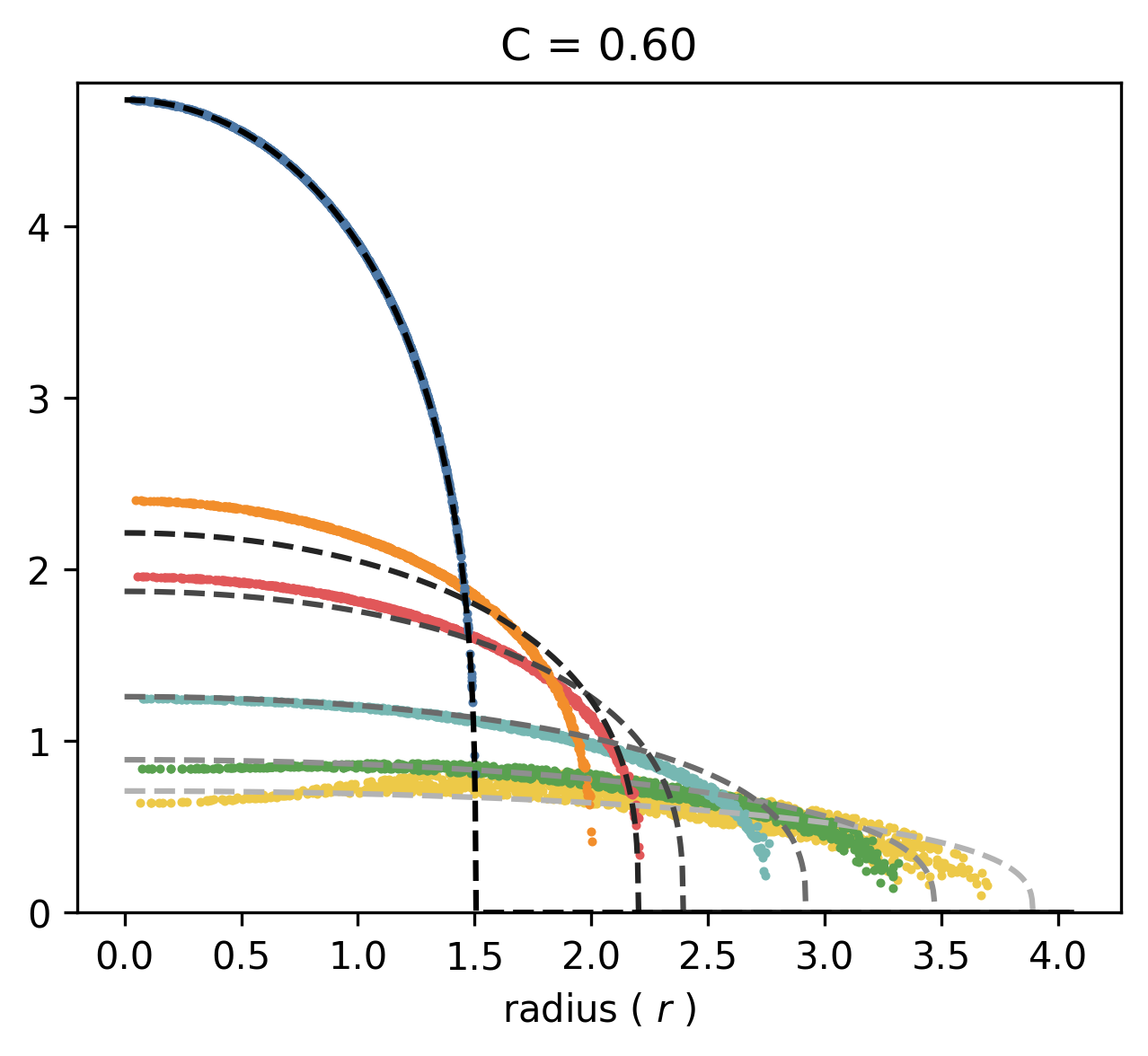}
\end{subfigure}
\begin{subfigure}[b]{0.32\textwidth}
    \includegraphics[width=\textwidth]{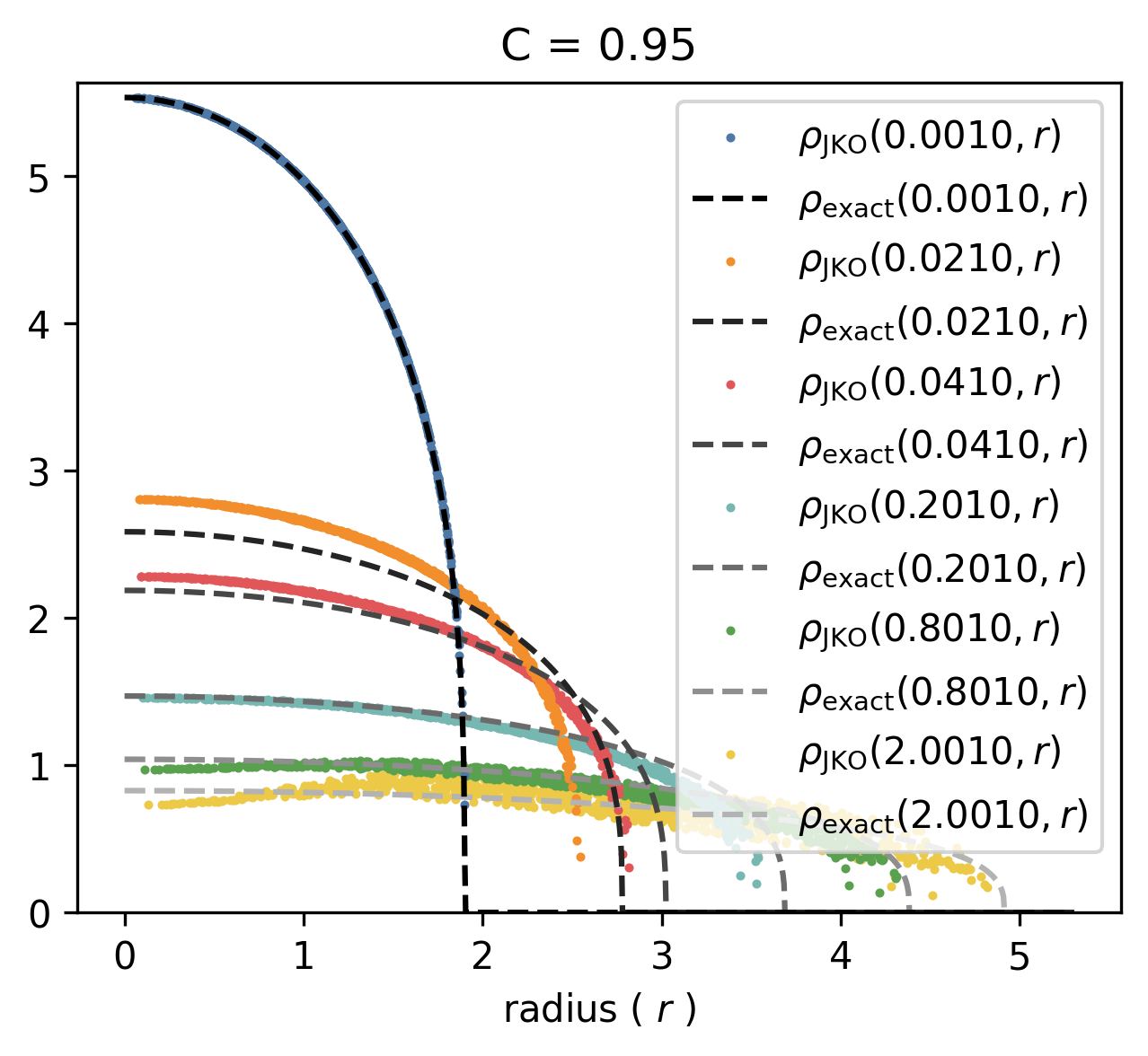}
\end{subfigure}
\caption{Comparison of predicted porous medium flows (solid) and exact Barenblatt solutions (dashed) for $\Delta t=0.02$, $d=2$, and $m=4$, across different values of $C$.}

%. Each row shows three predicted evolutions starting from $\rho^0$  %, using the same JKO operator ($m=2$ top,  bottom).  }
\label{fig:porousflowdim2_2}
\end{figure}

\paragraph{Prediction performance}
To assess accuracy, we compare our predicted solutions with the exact Barenblatt profiles, as shown in Figures~\ref{fig:porousflowdim2}, \ref{fig:porousflowdim2_2}, and~\ref{fig:porousflowdifferentd}. In the one-dimensional case (first column of \Cref{fig:porousflowdifferentd}), the predictions match the analytic solution closely when a sufficiently small stepsize, $\Delta t = 0.0005$, is used. For $d>1$, densities are plotted in radial coordinates across all figures.
Across different stepsizes and choices of $C$ in the initial density, the predicted solutions remain nearly isotropic, consistent with the Barenblatt form. With a small stepsize (Figures~\ref{fig:porousflowdim2} and~\ref{fig:porousflowdifferentd}), the results match the exact profiles well across all tested $C$ values and dimensions $d$.

In \Cref{fig:porousflowdim2_2}, we further shows results with a larger step size $\Delta t = 0.02$ and $m=4$, a setting where 
the optimization problem is too ill-conditioned for traditional numerical solvers such as primal–dual or back-and-forth methods.
In contrast, our predictions remain stable and smooth.
%Overall, the predicted profiles decay smoothly to zero, with only mild near-boundary fluctuations for $m=4$, reflecting reduced regularity at the free boundary.

We also quantitatively evaluate the errors between our predicted solutions and the exact Barenblatt profiles using relative $L_1$ and $L_\infty$ norms:
\[
L_1 \text{ Error}
 = 
\frac{
  \sum_i
    \bigl|\rho_{\mathrm{JKO}}(x_i,t)-\rho_{\mathrm{exact}}(x_i,t)\bigr|
}{
 \sum_i
\rho_{\mathrm{JKO}}(x_i,t)
}, \quad
L_{\infty} \text{ Error}
 =
\frac{
  \sup_{i}\bigl|\rho_{\mathrm{JKO}}(x_i,t)-\rho_{\mathrm{exact}}(x_i,t)\bigr|
}{
  \sup_{i}\rho_{\mathrm{exact}}(x_i,t)
},
\]
computed for each $t = 1, 2, \dots, T$. Here, ${x_i} \sim \rho_{\mathrm{JKO}}(x,t)$, and $T = 40$ and $T = 100$ correspond to the training and testing horizons, respectively. That is, each operator is trained to perform the first 40 JKO steps, and once trained, it is applied to compute 100 JKO steps.

The total error consists of four components: (i) the $\mathcal O(\Delta t)$ error between the JKO solution and WGF,
(ii) the statistical error  from approximating the JKO problem with a finite number of particles,
(iii) the training error, and (iv) the approximation error due to network capacity. 
Table~\ref{tab:porouserrors} reports the results.
For $m=2$, the prediction errors scale approximately as $\mathcal{O}(\Delta t)$ for moderately small step sizes, colored blue and orange, consistent with theoretical expectations.
For extremely small $\Delta t$, errors increase slightly, likely due to training noise or limited numerical precision.
Because the Barenblatt solutions are radially symmetric, errors remain comparable across dimensions $d=1,2,5,10$ for $\Delta t=0.0005$.
In higher dimensions, the scarcity of sample points near the origin leads to less accurate reconstruction in that region (see~\Cref{fig:porousflowdifferentd}) and consequently larger $L_\infty$ errors (Table~\ref{tab:porouserrors}).

\begin{table}[htp]
\centering
\scriptsize
\begin{tabular}{
    ccl
    %>{\scriptsize}l >{\scriptsize}l>{\scriptsize}l>{\scriptsize}l
    llll
}
\toprule
$\mathbf{m}$ & 
{\textbf{dim}} & 
{$\Delta t$} & 
\multicolumn{2}{c}{\textbf{$T=40$(training)}} & 
\multicolumn{2}{c}{\textbf{$T=100$(testing)}} \\
\cmidrule(lr){4-5} \cmidrule(lr){6-7}
& & & 
{$L_1$ Error} & {$L_{\infty}$ Error} & 
{$L_1$ Error} & {$L_{\infty}$ Error} \\ 
\midrule

$2$  &  \textbf{1} & \textbf{0.0005} &  \textbf{\color{cyan} 0.022}(0.006) &  \textbf{0.033}(0.009) &  \textbf{0.026}(0.010) &  \textbf{0.050}(0.020) \\

$2$  & 1 & 0.001  &  {\color{cyan}0.041}(0.011) & 0.054(0.010) & 0.050(0.014) & 0.073(0.022) \\
$2$ & 1 & 0.002 & {\color{cyan}0.089}(0.006) & 0.085(0.009) & 0.095(0.019) & 0.121(0.041) \\
$2$ & 1 & 0.005  &  {\color{cyan}0.225}(0.014) & 0.189(0.015) & 0.235(0.018) & 0.211(0.029) \\
\midrule
$2$ & 2 & 0.00001 & 0.015(0.007) & 0.046(0.022) & 0.032(0.018) & 0.086(0.043) \\
$2$ & 2 & 0.0001 & 0.017(0.009) & 0.038(0.016) & 0.033(0.021) & 0.074(0.044) \\

$2$  & \textbf{2} &  \textbf{0.0005} &   \textbf{\color{orange}0.020}(0.006) &  \textbf{0.059}(0.022) &  \textbf{0.035}(0.018) &  \textbf{0.117}(0.060) \\

$2$  &  2 & 0.001  & {\color{orange}0.038}(0.008) & 0.079(0.022) & 0.053(0.021) & 0.107(0.036) \\
$2$   & 2 & 0.002  & {\color{orange}0.090}(0.015) & 0.099(0.015) & 0.100(0.014) & 0.128(0.034) \\
$2$ & 2 & 0.005 & 0.164(0.036) & 0.195(0.017) & 0.182(0.039) & 0.245(0.074) \\
$2$ & 2 & 0.010 & 0.269(0.028) & 0.244(0.022) & 0.269(0.026) & 0.299(0.064) \\
$2$ & 2 & 0.020 & 0.422(0.019) & 0.300(0.024) & 0.388(0.056) & 0.364(0.070) \\
\midrule

$2$  &  \textbf{5} &  \textbf{0.0005} &  \textbf{0.027}(0.011) &  \textbf{0.071}(0.029) &  \textbf{0.047}(0.028) &  \textbf{0.107}(0.058) \\
$2$  &  \textbf{10} &  \textbf{0.0005} &  \textbf{0.030}(0.008) &  \textbf{0.373}(0.240) &  \textbf{0.053}(0.026) &  \textbf{0.369}(0.227) \\

\midrule
$4$ & 2 & 0.0005 & 0.011(0.003) & 0.128(0.043) &  0.015(0.005) & 0.162(0.047)\\
$4$ & 2 & 0.001  & 0.021(0.005) & 0.194(0.050) &0.034(0.015) & 0.241(0.062) \\
$4$ & 2 & 0.002  & 0.026(0.004) & 0.216(0.046) & 0.036(0.011) & 0.248(0.062) \\
$4$ & 2 & 0.020 & 0.051(0.010) & 0.474(0.028) &  0.062(0.010) & 0.273(0.051) \\
$4$ & 2 & 0.200 & 0.128(0.022) & 0.441(0.027) & 0.120(0.019) & 0.476(0.041) \\
$4$ & 2 & 2.000 &  0.254(0.041) & 0.556(0.017)& 0.222(0.042) & 0.571(0.024) \\
\bottomrule
\end{tabular}

\captionof{table}{Error statistics for our predictions versus the exact Barenblatt solutions (relative $L_1$ and $L_\infty$) are reported as the mean (standard deviation) over all $5 \times T$ sample points, corresponding to $5$ random choices of $C$, each with $T$ time steps.}
\label{tab:porouserrors}
\end{table}

\begin{figure}[h!]
\centering

% ------- Row 1 -------
\begin{subfigure}[b]{0.193\textwidth}
    \includegraphics[width=\textwidth]{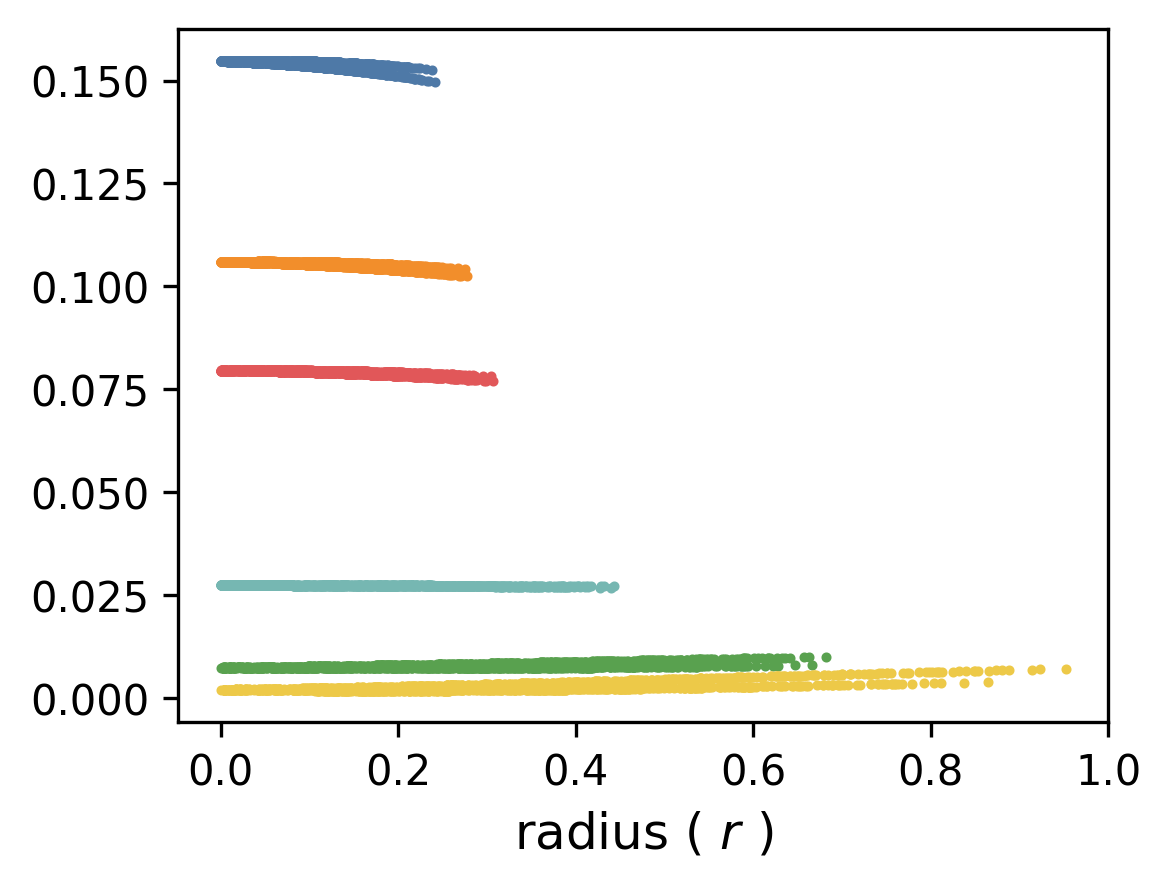}
    \caption*{$d=$1}
\end{subfigure}
\begin{subfigure}[b]{0.193\textwidth}
    \includegraphics[width=\textwidth]{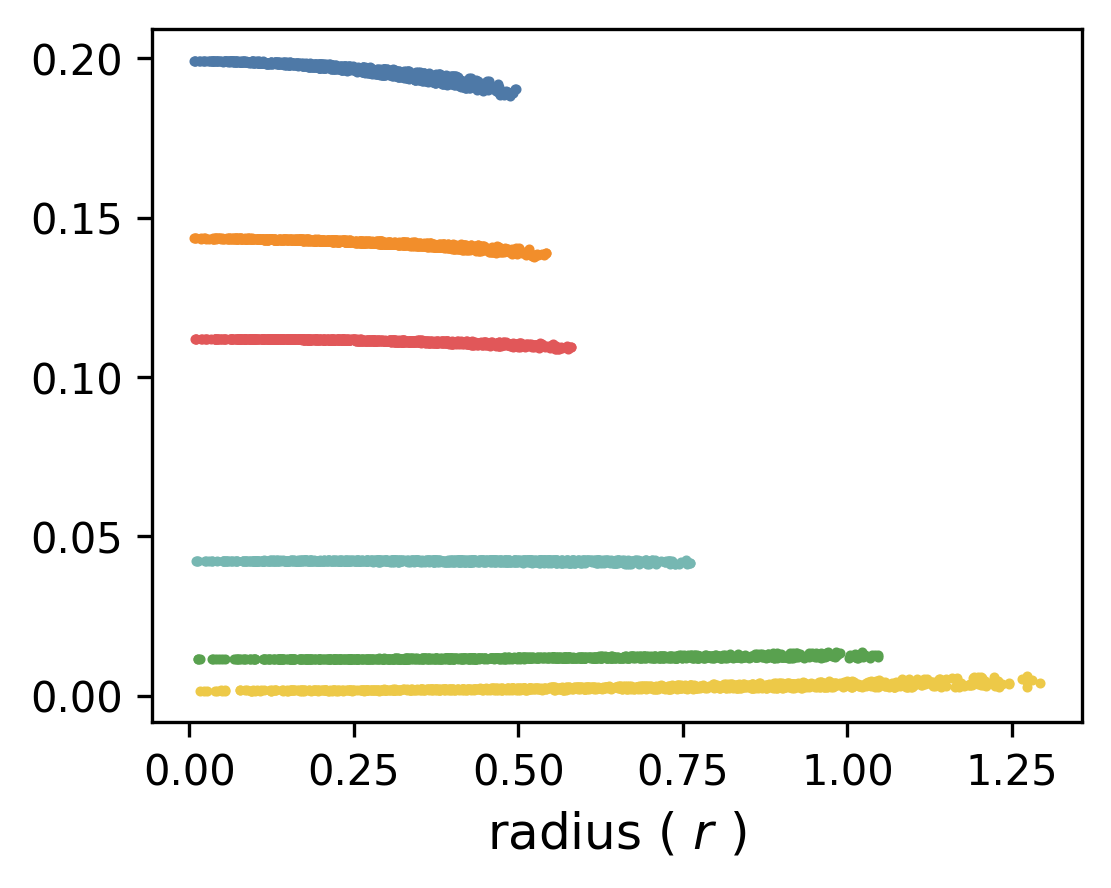}
    \caption*{$d=$2}
\end{subfigure}
\begin{subfigure}[b]{0.193\textwidth}
    \includegraphics[width=\textwidth]{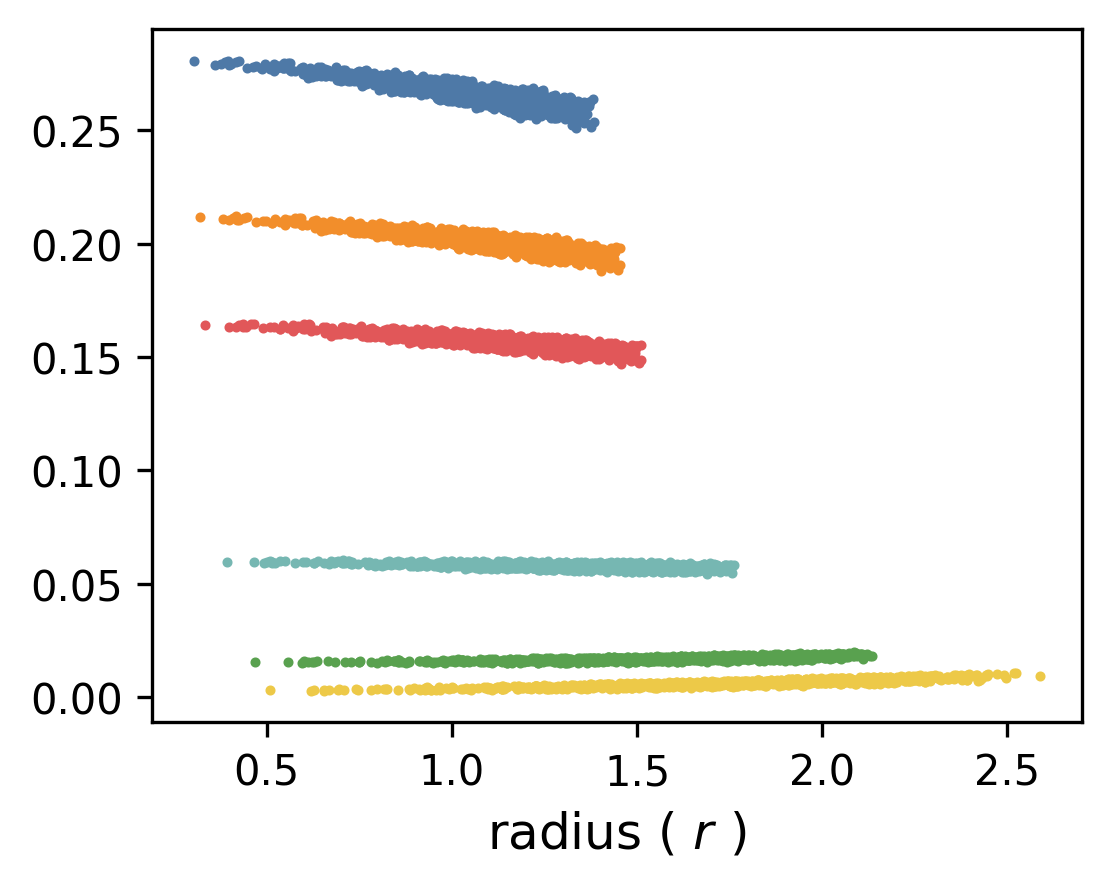}
    \caption*{$d=$5}
\end{subfigure}
\hfill
\begin{subfigure}[b]{0.193\textwidth}
    \includegraphics[width=\textwidth]{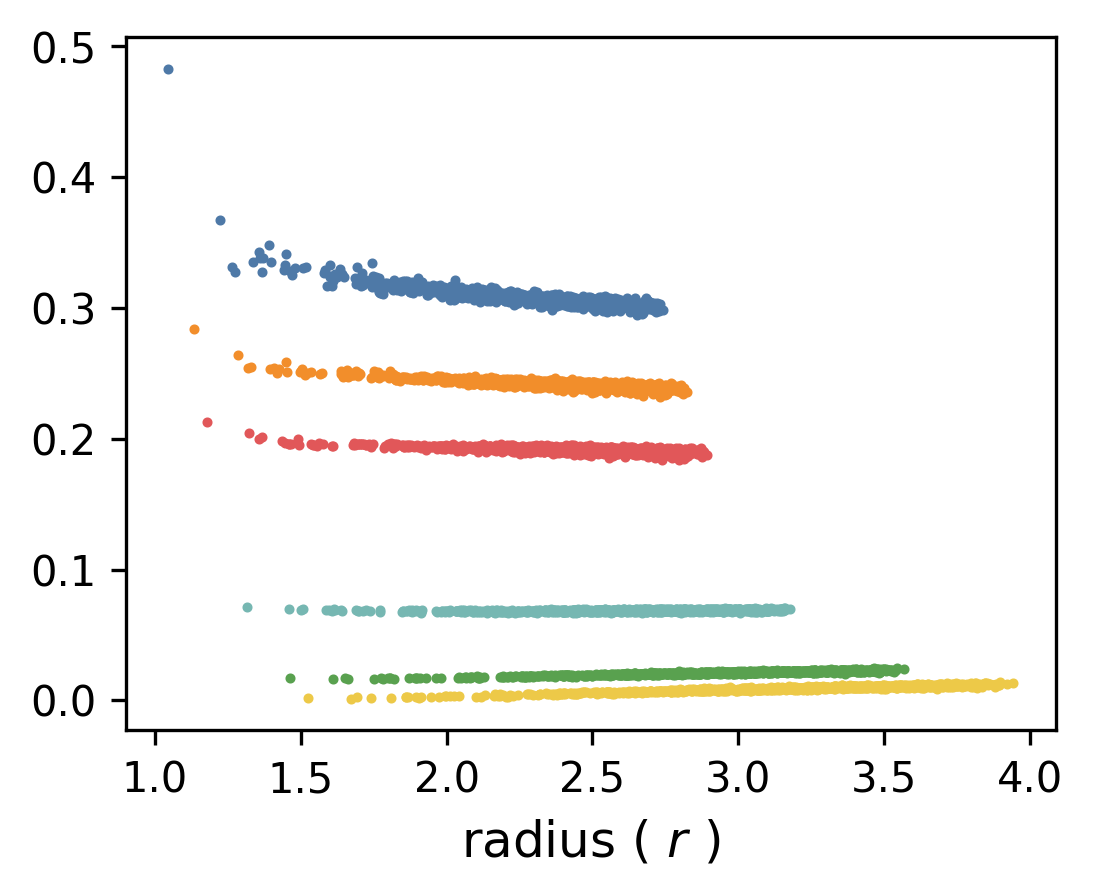}
    \caption*{$d=$10}    
\end{subfigure}
\begin{subfigure}[b]{0.193\textwidth}
    \includegraphics[width=\textwidth]{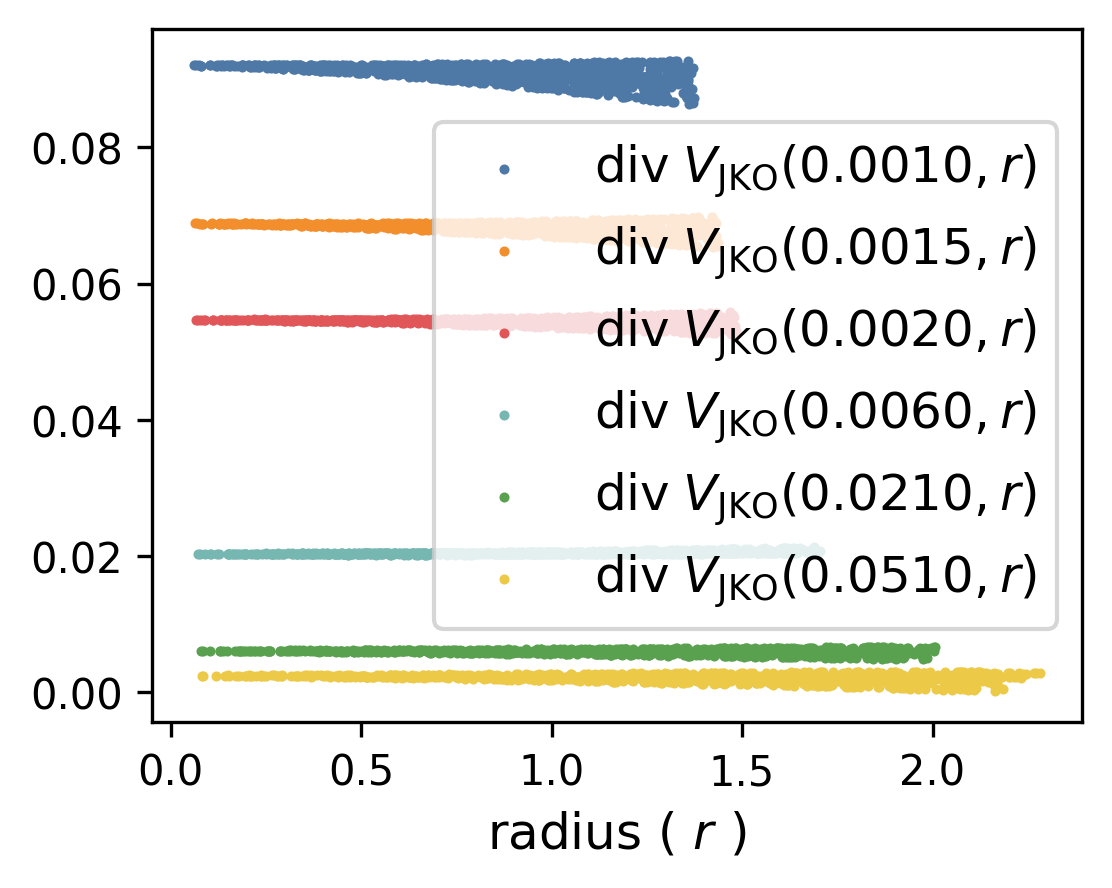}
    \caption*{$d=2(m=4)$}
\end{subfigure}
\caption{Divergence of the predicted velocity fields across data dimensions $d$ with fixed $C=0.5, \Delta t=0.0005$ and $m=2$. }

\label{fig:porousdiv}
\end{figure}

To further validate that our neural operator is learned correctly, we examine the divergence of the predicted velocity field, $\operatorname{div}\mathbf{V}_{\mathrm{JKO}}(t,\cdot)$. For sufficiently small $\Delta t$, it is expected to approximate the exact divergence given in \eqref{equ:porousdiv}. As shown in Figure~\ref{fig:porousdiv}, the predicted divergence is nearly spatially homogeneous, decays as $(t+t_0)^{-1}$, increases with the spatial dimension $d$, and decreases with the nonlinearity $m$, all in agreement with theoretical predictions. In higher dimensions, the scarcity of sample points near the origin leads to larger deviations at small radii, where the predicted divergence is locally overestimated.
%Since higher divergence corresponds to faster mass expansion, this results in slightly smaller predicted densities and hence lower energy values.

\begin{figure}[h]
  \centering
  \renewcommand{\arraystretch}{0}    
  \setlength{\tabcolsep}{0pt}           % 其他列间距
  \newcommand{\TimeRow}[7]{
      \includegraphics[width=0.115\textwidth]{fig_exp/#1/0.png} &
      \includegraphics[width=0.115\textwidth]{fig_exp/#1/#2.png} &
      \includegraphics[width=0.115\textwidth]{fig_exp/#1/#3.png} &
      \includegraphics[width=0.115\textwidth]{fig_exp/#1/#4.png} &
      \includegraphics[width=0.115\textwidth]{fig_exp/#1/#5.png} &
      \includegraphics[width=0.115\textwidth]{fig_exp/#1/#6.png} &
      \includegraphics[width=0.115\textwidth]{fig_exp/#1/#7.png} &
     \includegraphics[width=0.115\textwidth]{fig_exp/#1/target.png}\\
  }

  %=========== 1 + 5 + 1 列的表格 ===========
\begin{tabular}{cccccccc}  % 5 columns
      %—— 5 个 t 行 —— 
       \TimeRow{gaussianmix_1to2_bidirection_0}{1}{2}{3}{10}{50}{100}
       \TimeRow{gaussianmix_1to2_bidirection_1}{1}{2}{3}{10}{50}{100}
       \TimeRow{gaussianmix_1to2_bidirection_3}{1}{2}{3}{10}{50}{100}
       \TimeRow{gaussianmix_1to2_bidirection_2}{1}{2}{3}{10}{50}{100}
       \TimeRow{gaussianmix_1to2_bidirection_4}{1}{2}{3}{10}{50}{100}
    
      \vspace{0.2cm}
\\      
\hline
\\
\vspace{0.2cm}\\

    \TimeRow{gaussianmix_single_wl}{1}{2}{3}{10}{20}{100}
    \TimeRow{gaussianmix_single_wlinv}{2}{5}{10}{20}{35}{100}
     \TimeRow{gaussian_gaussian_DiffVariance_0}{2}{5}{10}{20}{50}{100}
      \TimeRow{gaussian_gaussian_SameVariance_0}{2}{5}{10}{20}{40}{100}
  \end{tabular}
 \caption{
Predicted Fokker-Planck equation for different pairs of initial and target distributions. 
Point colors indicate the density values at the corresponding locations, with darker colors representing higher densities. 
The horizontal divider separates in-distribution pairs ( top) from out-of-distribution pairs ( bottom).
}

  \label{fig:predflow_kl}
\end{figure}

\subsection{Fokker-Planck equation} In this section, we consider the Fokker-Planck equation of the form: 
\begin{align} \label{FP}
    \partial_t \rho = \nabla_x \cdot \!\bigl(\rho\nabla_x \Phi(x) + \nabla_x \rho \bigr),
\end{align}
whose corresponding energy is the relative entropy: 
\[
\mathrm{D_{KL}} (\rho \,\|\, \rho_{\text{target}})
 = \int_{\Omega} \rho(x) \log
 \frac{\rho(x)}{\rho_{\text{target}}(x)} \,\mathrm{d}x,  \qquad
    \rho_{\mathrm{target}}(x) \propto e^{-\Phi(x)},
\]
where $\Phi(x)$ is often referred to as the potential function of $\rho_{\mathrm{target}}$. Our goal is to train a JKO operator capable of solving the Fokker–Planck equation \eqref{FP} for a family of initial and target distributions.

\paragraph{Training setup}
Since the energy functional is determined by  $\rho_{\mathrm{target}}$, we assume the analytic form of $\rho_{\mathrm{target}}$ is also known and can be directly evaluated during training.
To allow the operator to adapt to different target distributions, 
$\rho_{\mathrm{target}}$ is also provided to the network as part of the input. Specifically, 
the network input consists of the current density paired with its corresponding target distribution, both are represented by their corresponding sample points,
with density values concatenated to each point.
%and are processed through separate MLP layers. 
Once trained, the JKO operator can infer the target distribution from its sample representation and generate the corresponding gradient flow from the initial to the target distribution.  

For dimension $d=2$, the training dataset includes (i) the standard Gaussian and (ii) mixtures of two Gaussians, where  the standard deviations are sampled from $(0.2,0.9)$ and the mixture means are sampled uniformly from $\left[-2,2\right]^2$. 
When applying \Cref{alg:neurojko2stage}, 
The batch size  is $3$,
which means 3 initial and target distribution pairs are drawn in each outer iteration.
In each pair, the initial and target distributions are assigned so that one is a standard Gaussian and the other a Gaussian mixture.
This setup allows the model to learn bi-directional gradient flows between these two kinds of distributions.  
We train the network with a fixed  time step $\Delta t=0.02$, and $T=40$ JKO steps, corresponding to gradient flow time from $0$ to $0.8$.  

\paragraph{Prediction performance}
Prediction results are shown in \Cref{fig:predflow_kl}, with the last column representing the target distribution. The trained JKO operator is applied iteratively for up to 100 steps (i.e., $t=2$) from each initial distribution. The predicted gradient flows are smooth, and the relative entropy decreases along the trajectory as indicated by the values shown on each subplot. 
When the target distribution is the standard Gaussian, JKO operator generalizes well beyond the training time domain $t\leq 0.8$, and the relative entropy steadily decrease and approach zero as $t \to 2$. 
However, when the target distribution is a Gaussian mixture, the predictions become less stable near equilibrium.
Although the KL energy remains convex, Gaussian mixture targets induce a more intricate optimal transport geometry.
As a result, even small approximation errors in the learned JKO operator can lead to oscillatory behavior near the steady state.

%In particular, we sometimes observe oscillations in the KL value around the steady state as seen in the second row. This behavior is likely because that when the target distribution is Gaussian mixture, the energy function is non-convex which make the training challenging and the chosen stepsize is too large. In these cases, we also encounter occasional predicted negative KL values, which are clearly incorrect.
%Using smaller or adaptive step sizes may help in this situation. \lw{what you discussed here is not reflected or easily observed in \Cref{fig:predflow_kl}.}\xf{agree, but i used MAY in the sentence, is it not enough} \lai{Even Gaussian to Gaussian in last row. KL is convex.}

To further assess generalization, we test the model on unseen initial–target pairs (shown in the second half of \Cref{fig:predflow_kl}), such as flows between mixtures of two and four Gaussians, or between single Gaussians with randomly chosen means and variances. In these settings, the learned flows remain mostly smooth and reasonable.

\section{Conclusion}
\label{sec:conclusion}
In this work, we introduced the Learn-to-Evolve framework for Wasserstein gradient flows, which constructs a neural operator that maps the solution of a parametrized family of gradient flow equations from one time step to the next over long time horizons. We presented the basic framework, detailed the neural network architecture, and described implementation aspects. We also provided theoretical justification for the convergence of our approach and validated it extensively across a range of examples.

The proposed framework, which leverages the JKO formulation of Wasserstein gradient flows, can be extended beyond the JKO setting to other time-evolving systems.
Furthermore, by explicitly incorporating the timestep as an additional network input, the model can utilize temporal information to adapt to varying step sizes, potentially improving accuracy and stability across different temporal resolutions. Several important directions remain open for future investigation.
First, while the presented results demonstrate excellent performance of the learned JKO operator, it remains unclear how much of this success stems from the intrinsic noise tolerance of the JKO scheme itself. A systematic comparison between training the Learn-to-Evolve framework on JKO (proximal) operator and on other time-evolution models would provide valuable insight. 
Second, more advanced training strategies are needed to enrich the data spectrum and improve both the stability and generalization of training.
Third, developing a deeper theoretical understanding of the framework’s generalization properties, especially given that the training data are not fixed but evolve with the learning dynamic, represents a compelling and nontrivial challenge for future work.

\section*{Acknowledgments}
We would like to acknowledge the support of Chenfanfu Jiang's GPU, and helpful discussions about the 
contractiveness of the JKO operator
with Giulia Cavagnari.  LW is supported in part by NSF grant DMS-1846854, DMS-2513336, and the Simons Fellowship. DN was partially supported by NSF DMS 2408912. R. Lai is supported in part by NSF DMS 2401297.

\bibliographystyle{siamplain}
\bibliography{ref}

\appendix

\section{Proof of Lemma 4.1}

\begin{proof}
Let $A = \nabla_x \bfT(x) = I + \nabla_x \bfV(x)$, which is positive definite by assumption.  
Since $A$ is positive definite, we have
\[
\operatorname{tr}(I - A^{-1}) \leq \log\det A \leq \operatorname{tr}(A - I) = \mathrm{div}\,\bfV(x).
\]
For the lower bound side, set $C = \nabla_x \bfV(x)$.  
When $\|C\| < 1$, the Neumann series gives
\(
A^{-1} = I - C + C^2 - C^3 + \cdots ,
\)
and thus
\[
\operatorname{tr}(I - A^{-1}) 
= \mathrm{div}\,\bfV(x) - \operatorname{tr}(C^2) + \operatorname{tr}(C^3) - \cdots .
\]
Since $\|\bfV(x)\| = \mathcal{O}(\Delta t)$, under mild regularity we have 
$\|\nabla_x \bfV(x)\| = \mathcal{O}(\Delta t)$ and $\|C\|_F^2 = \mathcal{O}(\Delta t^2)$, so all higher-order terms are $\mathcal{O}(\Delta t^2)$.  
Therefore,
\(
\operatorname{tr}(I - A^{-1})  = \mathrm{div}\,\bfV(x) + \mathcal{O}(\Delta t^2),
\)
followed by \(\log\det A \leq  \mathrm{div}\,\bfV(x)  + \mathcal{O}(\Delta t^2).\)
Exponentiating yields \Cref{equ:diverrorbounf} 
which proves the claim.
\end{proof}

\end{document}